\newlength{\without}
\theoremstyle{definition}
\newtheorem{definition}{Definition}
\theoremstyle{plain}
\newtheorem{theorem}{Theorem}
\newtheorem{proposition}{Proposition}
\newtheorem{lemma}{Lemma}
\newtheorem{assumption}{Assumption}
\title{Trading Convergence Rate with Computational Budget in High Dimensional Bayesian Optimization}
\author{
\\ \Large \textbf{Hung Tran-The, Sunil Gupta, Santu Rana, Svetha Venkatesh}\\ % All authors must be in the same font size and format. Use \Large and \textbf to achieve this result when breaking a line
Applied Artificial Intelligence Institute,Deakin University, Geelong, Australia\\ %If you have multiple authors and multiple affiliations
% use superscripts in text and roman font to identify them. For example, Sunil Issar,\textsuperscript{\rm 2} J. Scott Penberthy\textsuperscript{\rm 3} George Ferguson,\textsuperscript{\rm 4} Hans Guesgen\textsuperscript{\rm 5}. Note that the comma should be placed BEFORE the superscript for optimum readability
%Deakin University, Geelong, Australia\\
%Palo Alto, California 94303\\
\{hung.tranthe, sunil.gupta, santu.rana, svetha.venkatesh\}@deakin.edu.au % email address must be in roman text type, not monospace or sans serif
}
\begin{document}

\maketitle
\begin{abstract}
Scaling Bayesian optimisation (BO) to high-dimensional search spaces is a active and open research problems particularly when no assumptions are made on function structure. The main reason is that at each iteration, BO requires to find global maximisation of acquisition function, which itself is a non-convex optimization problem in the original search space. With growing dimensions, the computational budget for this maximisation gets increasingly short leading to inaccurate solution of the maximisation. This inaccuracy adversely affects both the convergence and the efficiency of BO.  We propose a novel approach where the acquisition function only requires maximisation on a discrete set of low dimensional subspaces embedded in the original high-dimensional search space. Our method is free of any low dimensional structure assumption on the function unlike many recent high-dimensional BO methods.  Optimising acquisition function in low dimensional subspaces allows our method to obtain accurate solutions within limited computational budget. We show that in spite of this convenience, our algorithm remains convergent. In particular, cumulative regret of our algorithm only grows sub-linearly with the number of iterations.  More importantly, as evident from our regret bounds, our algorithm provides a way to trade the convergence rate with the number of subspaces used in the optimisation. Finally, when the number of subspaces is "sufficiently large", our algorithm's cumulative regret is at most  $\mathcal{O}^{*}(\sqrt{T\gamma_T})$ as opposed to $\mathcal{O}^{*}(\sqrt{DT\gamma_T})$ for the GP-UCB of Srinivas et al. (2012), reducing a crucial factor $\sqrt{D}$ where $D$ being the dimensional number of input space. We perform empirical experiments to evaluate our method extensively, showing that its sample efficiency is better than the existing methods for many optimisation problems involving dimensions up to 5000.
\end{abstract}

\section{Introduction}
Bayesian optimization (BO) offers an efficient solution to find the global optimum of expensive black-box functions, a problem that is all pervasive in real-world experimental design applications. However, the scalability of BO is particularly compromised in high dimensions ($>15$ dimensions).
%Thus, BO in high dimensions has become an active research problem \cite{Wang13,Kandasamy15,Qian16,Li17,Rana17,Johannes19,nayebi19a}.

The main difficulty that a BO algorithm faces in high dimensions is that at each iteration, it needs to find the global maximum of a surrogate function called the  \emph{acquisition function} in order to suggest the next function evaluation point. The acquisition function balances two conflicting requirements: evaluating the function at a location where the function may peak as indicated by knowledge collected so far (exploitation) and evaluating the function at a location to reduce our uncertaiity about the function (exploration). The acquisition function itself is a non-convex optimisation problem in the original search space. With growing dimensions, any fixed computational budget for this optimisation becomes quickly insufficient  leading to inaccurate solutions. This inaccuracy adversely affects both the convergence and the efficiency of the BO algorithm.

In order to make the problem scalable, most of the current methods make restrictive structural assumptions such as the function having an effective low-dimensional subspace \cite{Wang13,Djolonga13,Garnett14,ErikssonDLBW18,nayebi19a,Zhang19}, or being decomposable in subsets of dimensions \cite{Kandasamy15,li16,rolland18a,Mutny18,HoangHOL18}. Through these assumptions, the acquisition function becomes easier to optimise and the global optimum can be found. However, such assumptions are rather strong and since we are dealing with unknown function, we have no way of knowing if such assumptions hold in reality.

Without these assumptions, the high-dimensional BO problem is more challenging. There have been some recent attempts to develop scalable BO methods. For example, \cite{Rana17} use an elastic Gaussian process model to reduce the "flatness" of the acquistion function in high dimensions and thus improve the solution. Although this method helps obtains improved optimum of the acquistion function, it still does not provide any way to scale BO to high dimensions. \cite{OhGW18} devise a method under the assumption that the solution does not lie at the boundary of the search space and thus drive the BO search towards the interior of the search space using polar co-ordinate transformations. Thus, the convergence analysis of \cite{OhGW18} depends on whether their assumptions holds. Some other methods are based on subspaces \cite{Qian16,Li17,Johannes19}, which are more amenable to convergence analysis. Among them, LineBO \cite{Johannes19} is the first to provide a complete analysis. In LineBO, the maximisation of the acquisition function is performed on a one-dimensional random subspace. Although this solution is computationally effective, the regret bound of LineBO does not scale well with the search space dimension and gets increasingly worse compared to the regret of the GP-UCB \cite{Srinivas12}. \emph{Thus we still need a method that is both computationally efficient and offers optimal sample efficiency in high dimensions in a flexible manner}.

We propose a novel algorithm for high-dimensional BO without making any restrictive assumptions on the function structure and show that our algorithm attains better regret than previous methods. The key insight of our method is that instead of maximizing the acquisition function on the full search space that is computationally prohibitive in high dimensions, we will maximize it on a restricted space that consist of multiple low-dimensional subspaces. This approach has several benefits. If the acquisition function is maximised in low-dimensional subspaces then computations involved in finding the solution of acquisition function optimisation are practically feasible while multiple subspaces can still cover the search space well if they are chosen in a principled manner. Further, this method allows us to theoretically derive a cumulative regret as a function of the number of subspaces. The crucial advantage is that we can \emph{trade between the computations and the convergence rate} of the algorithm while still maintaining efficient convergence (i.e. a sublinear growth of cumulative regret). Our contributions in this paper are:
\begin{itemize}
\item A novel BO algorithm based on multiple random subspaces that offers the flexibility to trade between the computations and the convergence rate. To do this, we propose to decompose the original search space into multiple lower-dimensional spaces via a randomisation technique.
\item We derive an upper bound on the cumulative regret for our algorithm and theoretically show that it has a sublinear growth rate and further, larger the number of subspaces, tighter the cumulative regret. Further, when the number of subspaces is large enough, the cumulative regret is at most of order $\mathcal{O}^{*}(\sqrt{T\gamma_T})$ as opposed to the regret bound of GP-UCB $\mathcal{O}^{*}(\sqrt{DT\gamma_T})$. The regret bound of our algorithm is tighter by a factor of $\sqrt{D}$. In some situations (detailed in the paper), this improvement is possible with fewer computations than GP-UCB - a double advantage.
\item We also study a special case when the objective function has an effective low dimensional subspace as assumed by many previous methods e.g. REMBO \cite{Wang13}. We show that our algorithm automatically benefits from this low dimensional structure with the tighter bound.
\item We extensively evaluate our method using a variety of optimisation tasks including optimisation of several benchmark functions as well as learning the parameters of a moderately sized neural network and a classification model based on non-convex ramp loss. We compare our method with several existing high dimensional BO algorithms and demonstrate that given equal computational budget, the sample efficiency of our method is consistently better than that of the existing methods.
\end{itemize}
\section{Preliminaries}
We consider the global maximization problem of the form
\begin{eqnarray}
\mathbf{x}^* = \text{argmax}_{\mathbf{x} \in \mathcal X}f(\mathbf{x})
\end{eqnarray}
in a compact search space $\mathcal X = [-1,1]^D$. In this paper we are especially concerned about problems with high values of $D$. We consider functions $f$ that are blackbox and expensive to evaluate, and our goal is to find the optimum in a minimal number of samples. We further assume we only have access to noisy evaluations of $f$ in the form $u = f(\mathbf{x}) + \epsilon$ where the noise $\epsilon \sim \mathcal N(0, \sigma^2)$ is i.i.d. Gaussian.
\vspace{-1em}
\subsection{Bayesian Optimization}
BO offers a principal framework to approach the global optimisation problem. The standard BO routine consists of two key steps: estimating the black-box function from function evaluation data and maximizing an acquisition function to suggest next function evaluation point balancing exploration and exploitation. Gaussian process (GP) \cite{Rasmussen05} is a popular choice for the first step due to its tractability for posterior and predictive distributions: $f(\mathbf{x}) = \mathcal {GP}(m(\mathbf{x}), k(\mathbf{x}, \mathbf{x}'))$,
where $m(\mathbf{x})$ and $k(\mathbf{x},\mathbf{x}')$ are the mean and the covariance (or kernel) functions. Popular covariance functions include linear kernel, squared exponential (SE) kernel, Matern kernel etc. The predictive mean and variance of Gaussian process is a Gaussian distribution. Given a set of observations $\mathcal D_{1:t} = \{\mathbf{x}_i, u_i\}_{i=1}^t$, the predictive distribution can be derived as:
$P(f_{t+1}| \mathcal D_{1:t}, \mathbf{x}) = \mathcal N(\mu_{t+1}(\mathbf{x}), \sigma_{t+1}^2(\mathbf{x}))$, where $\mu_{t+1}(\mathbf{x}) = \textbf{k}^T[\mathbf{K} + \sigma^2\textbf{I}]^{-1}\textbf{u} +  m(\mathbf{x})$ and $\sigma_{t+1}^2(\mathbf{x}) = k(\mathbf{x}, \textbf{x}) - \text{\textbf{k}}^{T}[\textbf{K} + \sigma^2\textbf{I}]^{-1}\textbf{k}$. In the above expression we define $\textbf{k }= [k(\mathbf{x}, \mathbf{x}_1), ..., k(\mathbf{x}, \mathbf{x}_t)]$, $\textbf{K }= [k(\mathbf{x}_i, \mathbf{x}_j)]_{1 \le i,j \le t}$ and $\textbf{u}=[u_1,\ldots,u_t]$.

The acquisition functions are designed to trade off between exploration of the search space and exploitation
of current promising region. Some examples of acquisition functions include Expected Improvement (EI) \cite{Mockus74} and GP-UCB \cite{Srinivas12}. A GP-UCB acquisition function at iteration $t+1$ is defined  as
\begin{eqnarray}
a_{t+1}(\mathbf{x}) =\mu_{t}(\mathbf{x}) + \sqrt{\beta_{t+1}}\sigma_{t}(\mathbf{x})
\label{eq:5}
\end{eqnarray}
where $\beta_{t+1}$ is a parameter to balance exploration and exploitation. There are guidelines \cite{Srinivas12} for setting $\beta_{t+1}$ to achieve sublinear regret.
\section{Proposed Method}
To solve the Bayesian optimization problem in high dimensions, the main difficulty is the prohibitive computational burden when maximising the acquisition function which requires solving a non-convex optimization problem in the same search space. Working with a small computational budget usually directly affects the quality of the point suggested by the acquisition step and consequently, many BO algorithms that require finding exactly the maximisers of the acquisition function (e.g. EI, GP-UCB) perform poorly in high dimensions. We propose a novel approach for this problem  as follows. Instead of maximising the acquisition function on the whole space, we perform it only on a discrete set of low-dimensional subspaces generated by random sampling. This approach brings several benefits. First, it solves the computation challenge effectively. Second, it provides a way to trade between the computations and the sample efficiency for convergence. Finally, our analysis suggests that when the number of subspaces are sufficiently high, our approach can simultaneously offer both lower computational requirement and better sample efficiency.

Before introducing our method, we formally define the low dimensional subspaces (see Definition \ref{de:2}) that will be used in our method.
\subsection{Subspace Description}
\begin{lemma}
Given a $d \in \{1, ..., D-1\}$, for any $x \in [-1,1]^D$, there exists a $y \in [-1,1]^d$ and a $z \in [-1, 1]^{D-d}$ such that $$\mathbf{x} = \mathbf{A}\mathbf{y} + \mathbf{B}\mathbf{z}$$ where matrix $\mathbf{A}= [\mathbf{0}^{d \times (D-d)}, \mathbf{I}^{d \times d}]^T$ and
matrix $\mathbf{B}= [\mathbf{I}^{(D-d) \times (D-d)}, \mathbf{0}^{(D-d) \times d}]^T$. \label{lem:1}
\end{lemma}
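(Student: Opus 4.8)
The plan is to prove the statement by an explicit, constructive argument: the matrices $\mathbf{A}$ and $\mathbf{B}$ are built precisely so that $\mathbf{A}\mathbf{y}+\mathbf{B}\mathbf{z}$ realizes the block partition of the coordinates of a $D$-dimensional vector into its first $D-d$ entries and its last $d$ entries. So the first step is to unpack the definitions. Since $[\mathbf{0}^{d\times(D-d)},\mathbf{I}^{d\times d}]$ is a $d\times D$ matrix, its transpose $\mathbf{A}$ is the $D\times d$ matrix whose top $D-d$ rows vanish and whose bottom $d$ rows form the identity; hence for any $\mathbf{y}\in\mathbb{R}^d$ we have $\mathbf{A}\mathbf{y}=(0,\ldots,0,y_1,\ldots,y_d)^T\in\mathbb{R}^D$. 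Symmetrically, $\mathbf{B}$ is the $D\times(D-d)$ matrix whose top $D-d$ rows form the identity and whose bottom $d$ rows vanish, so $\mathbf{B}\mathbf{z}=(z_1,\ldots,z_{D-d},0,\ldots,0)^T$ for any $\mathbf{z}\in\mathbb{R}^{D-d}$.

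Adding these, and using that the supports of the two terms are disjoint, $\mathbf{A}\mathbf{y}+\mathbf{B}\mathbf{z}=(z_1,\ldots,z_{D-d},y_1,\ldots,y_d)^T$. Given $\mathbf{x}=(x_1,\ldots,x_D)^T\in[-1,1]^D$, the second step is then simply to define $\mathbf{z}:=(x_1,\ldots,x_{D-d})^T$ and $\mathbf{y}:=(x_{D-d+1},\ldots,x_D)^T$; by construction $\mathbf{A}\mathbf{y}+\mathbf{B}\mathbf{z}=\mathbf{x}$. The membership claims $\mathbf{y}\in[-1,1]^d$ and $\mathbf{z}\in[-1,1]^{D-d}$ follow immediately because every coordinate of $\mathbf{x}$ lies in $[-1,1]$, and $\mathbf{y}$ and $\mathbf{z}$ are formed from disjoint subsets of those coordinates.

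There is no genuine obstacle here: the result is a bookkeeping identity about block matrices, and the only point requiring care is matching the dimension annotations on the blocks with the claimed shapes of $\mathbf{A}$ and $\mathbf{B}$ — in particular that, after transposition, the identity block of $\mathbf{A}$ occupies the last $d$ output coordinates while the identity block of $\mathbf{B}$ occupies the first $D-d$. Once the transposes are written out explicitly, both the decomposition and the range constraints are automatic. If one prefers, the same argument can be stated coordinate-wise: for $1\le i\le D-d$ the $i$-th entry of $\mathbf{A}\mathbf{y}+\mathbf{B}\mathbf{z}$ equals $z_i$, and for $D-d<i\le D$ it equals $y_{\,i-(D-d)}$, which makes the verification completely mechanical.
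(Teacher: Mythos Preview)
Your proof is correct and follows essentially the same constructive approach as the paper: set $\mathbf{y}$ to be the last $d$ coordinates of $\mathbf{x}$ and $\mathbf{z}$ to be the first $D-d$ coordinates, then verify $\mathbf{A}\mathbf{y}+\mathbf{B}\mathbf{z}=\mathbf{x}$ and the box constraints. You are simply more explicit than the paper in unpacking the block structure of $\mathbf{A}$ and $\mathbf{B}$, but the argument is identical.
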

\begin{proof}
Given any $\mathbf{x} \in \mathcal X$, we denote the first $D-d$ elements of vector $\mathbf{x}$ by $[\mathbf{x}]_{1:D-d}$ and the last $d$ elements of vector $\mathbf{x}$ by $[\mathbf{x}]_{D-d + 1:D}$. We set $\mathbf{y} = [\mathbf{x}]_{D-d + 1:D} \in [-1,1]^d$ and $z = [\mathbf{x}]_{1:D-d} \in [-1,1]^{D-d}$. Thus, we have $\mathbf{x} = \mathbf{A}\mathbf{y} + \mathbf{B}\mathbf{z}$ with $\mathbf{y} \in [-1,1]^d$ and $\mathbf{z} \in [-1,1]^{D-d}$.
\end{proof}
The main idea is that we split the dimensions of any point $x$ in $D$ dimensional space into two groups: the first $D-d$ dimensions correspond to $z$ and the last $d$ dimensions correspond to $y$. Next, we define a set of subspaces based on this idea in the following form:
\begin{definition}[Embedding Subspace]
Given a $d \in \{1, ..., D-1\}$ and a vector $\mathbf{z} \in \mathcal Z = [-1, 1]^{D-d}$. We define an embedding subspace $\mathcal S(\mathbf{A}, \mathbf{z})$ as
\begin{eqnarray}
\mathcal S(\mathbf{A}, \mathbf{z}) = \{\mathbf{A}\mathbf{y} + \mathbf{B}\mathbf{z} | \mathbf{y} \in \mathcal{Y}=[-1,1]^d\}
\end{eqnarray}
where matrix $\mathbf{A}= [\mathbf{0}^{d \times (D-d)}, \mathbf{I}^{d \times d}]^T$ and matrix $\mathbf{B }= [\mathbf{I}^{(D-d) \times (D-d)}, \mathbf{0}^{(D-d) \times d}]^T$. \label{de:2}
\end{definition}
\subsection{The Proposed Algorithm}
Our algorithm (presented in Algorithm 1) closely follows the standard BO algorithm. The main  difference lies in the acquisition step that guides the selection of the next function evaluation point. We refer to our algorithm as \textbf{MS-UCB}.

Lemma \ref{lem:1} implies that there exists a $\mathbf{y}^* \in [-1,1]^d$ and $\mathbf{z}^* \in [-1, 1]^{D-d}$ such that
\begin{eqnarray}
\mathbf{x}^* = \mathbf{A}\mathbf{y}^* + \mathbf{B}\mathbf{z}^*
\end{eqnarray}
Therefore, we can preserve any optimum point $\mathbf{x}^*$ in the original space via $\mathbf{y}^*, \mathbf{z}^*$ in two lower-dimensional spaces (via dimension splitting). This observation gives rise to our new idea that instead of optimising $f$ on the original space, we can perform it in the lower dimensional spaces $\mathcal Y$ and $\mathcal Z$:
\begin{eqnarray}
\mathbf{y}^*, \mathbf{z}^* = \text{argmax}_{\mathbf{y} \in \mathcal Y, \mathbf{z} \in \mathcal Z}f(\mathbf{A}\mathbf{y} + \mathbf{B}\mathbf{z})
\label{eq:105}
\end{eqnarray}
where we denote the range $[-1,1]^d$ by $\mathcal Y$ and the range $[-1, 1]^{D-d}$ by $\mathcal Z$. Note that this is different from \cite{Wang13,Djolonga13} where they assume an effective subspace on $f$. In that case, using a low-dimensional subspace can solve the BO problem. In our problem without any restrictive assumption on $f$, we need to use two low-dimensional subspaces to preserve an optimum. As a result, when Eq (\ref{eq:105}) is established, maximising the acquisition function $a_t$ will be performed in spaces $\mathcal Y$ and $\mathcal Z$ as
\begin{eqnarray}
\mathbf{y}_t, \mathbf{z}_t = \text{argmax}_{\mathbf{y} \in \mathcal Y, \mathbf{z} \in \mathcal Z}a_t(\mathbf{A}\mathbf{y} + \mathbf{B}\mathbf{z})
\end{eqnarray}
In high dimensions where we may set $d<<D$, the $(D-d)$-dimensional space $\mathcal{Z}$ would itself be a high-dimensional space. Therefore, maximising the acquisition function in this space is still computationally expensive. To deal with this problem, we work with a finite set of samples $\mathbf{z} \in \mathcal Z$ and maximise the acquisition function only on this finite set instead of the whole space $\mathcal Z$. Let $Z_t$ be the set of all $\mathbf{z}$ generated up to iteration $t$. With this modification, our acquisition step becomes:
\begin{eqnarray}
\mathbf{y}_t, \mathbf{z}_t = \text{argmax}_{\mathbf{y} \in \mathcal Y, \mathbf{z} \in \mathcal Z_t}a_t(\mathbf{A}\mathbf{y} + \mathbf{B}\mathbf{z})
\label{eq:10}
\end{eqnarray}
Once $\mathbf{z}$ is sampled, a subspace $\mathcal S(\mathbf{A}, \mathbf{z})$ as in Definition \ref{de:2} is generated. Let $\mathcal X_t \triangleq \{\mathcal{S}(\mathbf{A}, \mathbf{z}^i)\mid\mathbf{z}^i \in Z_t\}$.  The Eq (\ref{eq:10}) can then be re-written as
\begin{eqnarray}
\mathbf{x}_t = \text{argmax}_{\mathbf{x} \in \mathcal X_t}a_t(\mathbf{x})
\label{eq:12}
\end{eqnarray}
By Definition \ref{de:2}, we can see that for any subspace $\mathcal S(\mathbf{A}, \mathbf{z}^i)$, $\mathcal S(\mathbf{A}, \mathbf{z}^i) \subset \mathcal X$. Thus, the suggested point $x_t$ is always within $\mathcal X$. This is a benefit of our dimension splitting based subspace projection as opposed to complex corrections required for previous methods (\cite{Wang13,Qian16}). We note that for our method $a_t(\mathbf{x}) = \mu_{t-1}(\mathbf{x}) + \sqrt{\beta_{t}}\sigma_{t-1}(\mathbf{x})$ where $\beta_t = 2log(\frac{\pi^2t^2}{\delta}) +  2dlog(2bd\sqrt{log(\frac{6Da}{\delta})}t^2)$. The $\beta_t$ of GP-UCB depends linearly on $D$ whereas the dependence of our $\beta_t$ on $D$ is only $\sqrt{\text{log}D}$.

A simple way to maximise $a_t$ on $\mathcal X_t$ is to maximise $a_t$ on each subspace $\mathcal S(\mathbf{A}, \mathbf{z}^i_t)$ separately and thus the returned value $\mathbf{x}_t$ is the maximizer on the all subspaces at iteration $t$.
For a small $d$, $\mathcal S(\mathbf{A}, \mathbf{z}^i)$ is a low-dimensional subspace of $\mathcal X$. Thus, maximizing the acquisition function on such a subspace is computationally cheaper.

Importantly, we can show that maximising $a_t$ on a discrete set of subspaces can result in low regret by proposing a strategy to choose set $Z_t$. At iteration $t$, we uniformly randomly draw $N_t$ samples of $z$, $ \{ \mathbf{z}_t^i \in \mathcal Z \mid i = 1,...,N_t \} $ and then construct $Z_t$ as $Z_t = Z_{t-1} \cup \{\mathbf{z}^1_{t}, ..., \mathbf{z}^{N_t}_t\}$. We choose $N_t = N_0 t^{\alpha}$, where $N_0, \alpha \in \mathbb{N}$, $N_0 \ge 1$ and $\alpha \ge 0$.
%  The parameter $m$ in structure of subspaces as in Definition \ref{de:1} is also a key factor for bounding the regret. According to Lemma \ref{lem:1}, the smaller $m$, the tighter the range of entry of matrix $\mathbf{A}_1$ of $\mathbf{A}$. By choosing $m = \frac{1}{d\sqrt{D}}$, and thus $\mathbf{z} \in [-(1 + \frac{1}{\sqrt{D}}), 1+\frac{1}{\sqrt{D}}]^{D-d}$ and $\mathbf{A}_1(i,j) \in [0, \frac{1}{d\sqrt{D}}]$, we will show that with an enough large $\alpha$ our algorithm can reach a cumulative regret that is reduced by a factor of $\sqrt{D}$ compared to the bound of GP-UCB.
%Finally, since $z$ is sampled in a $(D-d)$-dimensional space, our method reduces $d$ orders of magnitude compared to sampling $z$ in the original space.
 \begin{algorithm}[tb]
\caption{MS-UCB Algorithm}
\label{alg:alg}
\textbf{Input}: Input space $\mathcal X= [-1, 1]^D$; a low dimension $1\le d < D$, $Z_0 = \emptyset$, the parameters $N_0$ and $\alpha$.
\begin{algorithmic}[1] %[1] enables line numbers
\State Sample initial points to construct $\mathcal D_{0}$.
\State Build a Gaussian process using $\mathcal D_{0}$.
\For {$t = 1, 2, ...,T$}
%    \State Choose $\mathbf{x}_t = $ MS-UCB$(t, \mathcal D_{t-1}, Z_t)$
    \State Sample uniformly at random $N_t$ values of $\mathbf{z}^i_t \in [-1, 1]^{D-d}$, where $1 \le i \le N_t=N_0t^{\alpha}$.
    \State Update $Z_t = Z_{t-1} \cup \{\mathbf{z}^1_{t}, ..., \mathbf{z}^{N_t}_t\}$.
    \State Maximise acquisition function to obtain $\mathbf{x}_t$ by  	following Eq (\ref{eq:12}).
    \State Sample $u_t = f(\mathbf{x}_t) + \epsilon_t$.
    \State Augment the data $\mathcal D_{t} = \{\mathcal D_{t-1}, (\mathbf{x}_{t}, u_t)\}$.
    \State Update the Gaussian process using $\mathcal D_{t}$.
\EndFor

%\Function {MS-UCB}{$t, \mathcal D_t, Z_t$}
%    \State Sample uniformly at random $N_t$ values $\mathbf{z}^i_t \in [-(1 + \frac{1}{\sqrt{D}}), 1+\frac{1}{\sqrt{D}}]^{D-d}$, where $1 \le i \le N_t$
%    \State Update $Z_t = Z_{t-1} \cup \{\mathbf{z}^1_{t}, ..., \mathbf{z}^{N_t}_t\}$
%    \State $\mathbf{x}_t = \text{argmax}_{\mathbf{x} \in \mathcal U_t}a_t(\mathbf{x})$.
%\State \Return $\mathbf{x}_t$
%\EndFunction
\end{algorithmic}
\end{algorithm}
\section{Convergence Analysis}
In this section, we analyse the convergence of our proposed MS-UCB algorithm. For this, we use the regret, which tells us how much better we could have done in iteration $t$ had we known $\mathbf{x}^*$, formally $r_t = f(\mathbf{x}^*) - f(\mathbf{x}_t)$. In many applications, such as recommender systems, robotic control, etc, we care about the quality of the points chosen at every iteration $t$, and thus a natural quantity to consider is the cumulative regret that is defined as $R_T = \sum_{1 \le t \le T}r_t$ , the sum of regrets incurred over a horizon of $T$ iterations. If we can show that the cumulative regret is sublinear for a given algorithm, then $lim_{T \rightarrow \infty}R_T/T = 0$, meaning the algorithm efficiently converges to the optimum.

To derive a cumulative regret $R_t = \sum_{t=1}^{T} r_t$, we will seek to bound $r_t = f(\mathbf{x}^*)- f(\mathbf{x}_t)$ for any $t$. At each iteration $t$, we denote by $\mathcal S_t^0 \triangleq \{\mathbf{A}\mathbf{y}^* + \mathbf{B}\mathbf{z}_i\}_{\mathbf{z}_i \in Z_t}$. Let $\mathbf{z}^*_t \triangleq \text{argmin}_{\mathbf{z} \in \mathcal Z_t}||\mathbf{z} -\mathbf{z}^*||_1$ and $f^{max}_{\mathcal S_t^0} \triangleq f(\mathbf{A}\mathbf{y}^* + \mathbf{B}\mathbf{z}^*_t)$.
%Our idea is that instead of estimating directly $f(x^*)- f(x_t)$ like GP-UCB, we seek to estimate $f(x^*) - f^{max}_{\mathcal S_t^0}$ and $f^{max}_{\mathcal S_t^0}$, $f(x_t)$.
To obtain a bound on $r_t$, we write it as
\begin{eqnarray}
r_t & = & f(\mathbf{x}^*)- f(\mathbf{x}_t) \\
& = & \underbrace{f(\mathbf{x}^*) - f^{max}_{\mathcal S_t^0}}_\text{Term 1} +  \underbrace{f^{max}_{\mathcal S_t^0}}_\text{Term 2} - \underbrace{f(\mathbf{x}_t)}_\text{Term 3}
\end{eqnarray}
Term 1 will be bounded from the result of Lemma \ref{lem:5.1}, Term 2 will be bounded from the result of Lemma \ref{lem:5.3} and Term 3 will bounded from the result of Lemma \ref{lem:5.4}. Before proceeding to the proofs, we need to make the following assumption.
\begin{assumption}[Gradients of GP Sample Paths \cite{ghosal2006}]
Let $f \sim \mathcal{GP}(\mathbf{0}, k)$ and $k$ is a stationary kernel. The partial derivatives of $f$ satisfies the following condition. There exist constants $a, b > 0$ such that
$$\mathbb{P}[sup_{\mathbf{x} \in \mathcal X}|\diffp{f}{{x_{i}}}| > L] \le ae^{-(L/b)^2}$$
for all $L > 0$ and for all $i \in \{1,2,..., D\}$
\label{as:1}
\end{assumption}
Similar to Lemma 5.5 of \cite{Srinivas12}, we have the following bound on the actual function observations.
\begin{lemma}[Bounding Term 3]
Pick a $\delta \in (0,1)$ and set $\beta_t^0 = 2log(\pi^2t^2/(6\delta))$. Then we have
\begin{eqnarray}
f(\mathbf{x}_t) \ge \mu_{t-1}(\mathbf{x}_t) - \sqrt{\beta_t^0} \sigma_{t-1}(\mathbf{x}_t)
\end{eqnarray}
holds with probability $\ge 1 - \delta$.
\label{lem:5.4}
\end{lemma}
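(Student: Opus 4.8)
The plan is to adapt the argument of Lemma~5.5 of Srinivas et al.\ (2012) to our setting, the only genuinely new ingredient being a measurability observation that accommodates the extra randomness introduced by sampling the subspaces.

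First I would fix an iteration $t$ and condition on the $\sigma$-algebra $\mathcal{F}_{t-1}$ generated jointly by the past observations $\mathcal{D}_{1:t-1}=\{(\mathbf{x}_i,u_i)\}_{i<t}$ and by the set of subspace samples $Z_t$ produced through line~4 of Algorithm~\ref{alg:alg}. Since these samples are drawn uniformly at random and independently of the GP prior and of the observation noise, conditioning on them does not alter the GP posterior obtained from $\mathcal{D}_{1:t-1}$; hence for every fixed $\mathbf{x}$ we still have $f(\mathbf{x})\mid\mathcal{F}_{t-1}\sim\mathcal{N}(\mu_{t-1}(\mathbf{x}),\sigma_{t-1}^2(\mathbf{x}))$. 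The point $\mathbf{x}_t$ returned by the maximization in Eq.~(\ref{eq:12}) over the now-fixed set $\mathcal{X}_t$ is $\mathcal{F}_{t-1}$-measurable, so I may legitimately substitute $\mathbf{x}=\mathbf{x}_t$: conditionally on $\mathcal{F}_{t-1}$, the normalized quantity $\bigl(f(\mathbf{x}_t)-\mu_{t-1}(\mathbf{x}_t)\bigr)/\sigma_{t-1}(\mathbf{x}_t)$ is standard normal.

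Next I would apply the Gaussian tail inequality $\mathbb{P}[\mathcal{N}(0,1)>c]\le e^{-c^2/2}$, valid for $c>0$, with $c=\sqrt{\beta_t^0}$, to obtain $\mathbb{P}\bigl[f(\mathbf{x}_t)<\mu_{t-1}(\mathbf{x}_t)-\sqrt{\beta_t^0}\,\sigma_{t-1}(\mathbf{x}_t)\,\big|\,\mathcal{F}_{t-1}\bigr]\le e^{-\beta_t^0/2}$, and then take expectations to remove the conditioning. With $\beta_t^0=2\log(\pi^2t^2/(6\delta))$ the right-hand side equals $6\delta/(\pi^2t^2)$. A union bound over all $t\ge1$, together with $\sum_{t\ge1}t^{-2}=\pi^2/6$, bounds the probability that the stated inequality fails for some iteration by $\sum_{t\ge1}6\delta/(\pi^2t^2)=\delta$; equivalently, the lower bound holds simultaneously for all $t$ with probability at least $1-\delta$, which is the form actually needed when the per-step regrets $r_t$ are later summed.

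I do not expect a genuine obstacle here: once the conditional setup is in place, the rest is the routine Gaussian-tail-plus-union-bound computation exactly as in Srinivas et al.\ (2012). The single point that requires care is the measurability step above---one must check that augmenting the algorithm with the random subspace draws does not destroy the conditional Gaussianity of $f(\mathbf{x}_t)$ on which the tail bound rests, and this is precisely where independence of the $\mathbf{z}$-sampling from $f$ and $\epsilon$ is invoked.
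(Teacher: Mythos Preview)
Your proposal is correct and follows essentially the same approach as the paper: a Gaussian tail bound on the posterior at $\mathbf{x}_t$ followed by a union bound over $t\ge 1$ using $\sum_{t\ge 1}t^{-2}=\pi^2/6$, exactly in the spirit of Lemma~5.5 of Srinivas et al.\ (2012). The paper's own proof is a two-line sketch that omits the measurability discussion; your explicit conditioning on $\mathcal{F}_{t-1}$ (including the subspace draws $Z_t$) to ensure $\mathbf{x}_t$ is determined before the Gaussian tail bound is applied is a welcome clarification rather than a different route.
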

\begin{lemma}[Bounding Term 2]
Pick a $\delta \in (0,1)$ and set $\beta_{t}^1 = 2log(\frac{\pi^2t^2}{3\delta}) +  2dlog(2bd\sqrt{log(\frac{2Da}{\delta})}t^2)$. Then, under Assumption \ref{as:1} there exists a $\mathbf{x}' \in \mathcal S(\mathbf{A}, \mathbf{z}^*_t)$ such that
\begin{eqnarray}
f^{max}_{\mathcal S_t^0} \le  \mu_{t-1}(\mathbf{x}') + \sqrt{\beta_t^1}\sigma_{t-1}(\mathbf{x}')  + \frac{1}{t^2}
\end{eqnarray}
holds with probability $\ge 1 - \delta$.
\label{lem:5.3}
\end{lemma}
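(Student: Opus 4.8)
The plan is to adapt the discretisation argument behind Lemma 5.7 of \cite{Srinivas12}, but carried out inside the single $d$-dimensional subspace $\mathcal{S}(\mathbf{A},\mathbf{z}^*_t)$ rather than in all of $\mathcal X$. Write $\mathbf{x}'' \triangleq \mathbf{A}\mathbf{y}^* + \mathbf{B}\mathbf{z}^*_t$, so that $f^{max}_{\mathcal S_t^0}=f(\mathbf{x}'')$; by the block structure of $\mathbf{A}$ and $\mathbf{B}$ in Definition~\ref{de:2}, the restriction $g(\mathbf{y})\triangleq f(\mathbf{A}\mathbf{y}+\mathbf{B}\mathbf{z}^*_t)$ on $\mathcal Y=[-1,1]^d$ satisfies $\partial g/\partial y_j=\partial f/\partial x_{D-d+j}$. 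I would combine two ingredients: (i) a high-probability Lipschitz bound on $f$ from Assumption~\ref{as:1}, and (ii) a confidence bound of the type of Lemma 5.1 of \cite{Srinivas12}, applied to a sufficiently fine grid covering $\mathcal S(\mathbf{A},\mathbf{z}^*_t)$, so that $f^{max}_{\mathcal S_t^0}$ is compared to the UCB at a nearby grid point up to a $1/t^2$ discretisation error.

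For (i), set $L=b\sqrt{\log(2Da/\delta)}$. Assumption~\ref{as:1} gives $\mathbb{P}[\sup_{\mathbf{x}\in\mathcal X}|\partial f/\partial x_i|>L]\le ae^{-(L/b)^2}=\delta/(2D)$ for each $i$, so a union bound over the $D$ coordinates shows that, with probability at least $1-\delta/2$, $|f(\mathbf{u})-f(\mathbf{v})|\le L\|\mathbf{u}-\mathbf{v}\|_1$ for all $\mathbf{u},\mathbf{v}\in\mathcal X$; in particular $|g(\mathbf{y})-g(\mathbf{y}')|\le L\|\mathbf{y}-\mathbf{y}'\|_1$. This Lipschitz event is uniform over $\mathcal X$, so it holds irrespective of the (random) value of $\mathbf{z}^*_t$.

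For (ii), put a uniform grid with $\tau_t = 2bd\sqrt{\log(2Da/\delta)}\,t^2 = 2dLt^2$ points per coordinate on $\mathcal Y$, and let $\mathcal D_t$ be its image under $\mathbf{y}\mapsto \mathbf{A}\mathbf{y}+\mathbf{B}\mathbf{z}^*_t$; then $\mathcal D_t\subset\mathcal S(\mathbf{A},\mathbf{z}^*_t)$ with $|\mathcal D_t|=\tau_t^d$, and every $\mathbf{y}$ has a grid neighbour $[\mathbf{y}]_t$ with $\|\mathbf{y}-[\mathbf{y}]_t\|_1\le 2d/\tau_t = 1/(Lt^2)$. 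Applying Lemma 5.1 of \cite{Srinivas12} to $\mathcal D_t$ with failure probability $\delta/2$ distributed over iterations through the weights $\pi_t=\pi^2t^2/6$ (so $\sum_{t\ge1}\pi_t^{-1}=1$) yields, with probability $\ge 1-\delta/2$, that $f(\mathbf{x})\le\mu_{t-1}(\mathbf{x})+\sqrt{\beta_t^1}\,\sigma_{t-1}(\mathbf{x})$ for all $\mathbf{x}\in\mathcal D_t$ and all $t\ge1$, with $\beta_t^1 = 2\log(\tau_t^d\pi^2t^2/(3\delta)) = 2\log(\pi^2t^2/(3\delta))+2d\log\tau_t$, which is the stated value. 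Taking $\mathbf{x}' = \mathbf{A}[\mathbf{y}^*]_t+\mathbf{B}\mathbf{z}^*_t\in\mathcal D_t\subset\mathcal S(\mathbf{A},\mathbf{z}^*_t)$ and intersecting the two events (probability $\ge1-\delta$) gives
\[
f^{max}_{\mathcal S_t^0}=f(\mathbf{x}'')\;\le\; f(\mathbf{x}')+L\|\mathbf{y}^*-[\mathbf{y}^*]_t\|_1 \;\le\; f(\mathbf{x}')+\frac{1}{t^2}\;\le\;\mu_{t-1}(\mathbf{x}')+\sqrt{\beta_t^1}\,\sigma_{t-1}(\mathbf{x}')+\frac{1}{t^2},
\]
since $L\|\mathbf{y}^*-[\mathbf{y}^*]_t\|_1\le 2dL/\tau_t = 1/t^2$.

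The step needing real care — the main obstacle — is that the grid $\mathcal D_t$, and hence the point $\mathbf{x}'$ at which the confidence bound is invoked, depend on the \emph{random} vector $\mathbf{z}^*_t$ (a function of $Z_t$ and of $\mathbf{z}^*$), whereas Lemma 5.1 of \cite{Srinivas12} is proved for a fixed point set. This is resolved by conditioning on $Z_t$: the samples in $Z_t$ are drawn uniformly and independently of $f$ and of the observation noise, and $\tau_t^d$ — hence $\beta_t^1$ — does not depend on $Z_t$, so the bound holds conditionally on $Z_t$ with the same $\beta_t^1$ and therefore unconditionally. If one is unwilling to treat $\mathbf{z}^*_t$ as fixed given $Z_t$, one instead applies Lemma 5.1 to the deterministic union $\bigcup_{\mathbf{z}^i\in Z_t}\{\mathbf{A}\mathbf{y}_{\mathrm{grid}}+\mathbf{B}\mathbf{z}^i\}$, which inflates $\beta_t^1$ only by the lower-order term $\log|Z_t|=O(\alpha\log t)$ and does not change the order of the regret. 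Everything else is routine: choosing $\tau_t$ so that the Lipschitz error is exactly $1/t^2$, and splitting $\delta$ evenly between the Lipschitz and the confidence events.
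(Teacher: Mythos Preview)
Your proposal is correct and follows essentially the same route as the paper: derive the Lipschitz bound from Assumption~\ref{as:1} via a union bound over the $D$ coordinates with failure probability $\delta/2$, discretise the $d$-dimensional subspace $\mathcal{S}(\mathbf{A},\mathbf{z}_t^*)$ with $\tau_t=2bd\sqrt{\log(2Da/\delta)}\,t^2$ points per coordinate so that the discretisation error is $1/t^2$, and apply the Srinivas et al.\ confidence bound on the resulting $\tau_t^d$ grid points with the $\pi^2t^2/6$ weighting. Your explicit remark about the grid depending on the random $\mathbf{z}_t^*$, and the resolution by conditioning on $Z_t$ (which is independent of $f$ and the noise), is a point of care that the paper's proof glosses over.
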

\begin{lemma}[Bounding Term 1]
Pick a $\delta \in (0,1)$ and set $v_0 = 2b\sqrt{log(\frac{2Da}{\delta})}(\Gamma(D-d +1))^{\frac{1}{D-d}}$, where $\Gamma(D-d +1) = (D-d)!$. With probability at least $1- \delta$, we have
\begin{eqnarray}
f(\mathbf{x}^*)- f_{\mathcal S_t^0}^{max} \le v_0(\frac{1}{|Z_t|}log(\frac{2}{\delta}))^{\frac{1}{D-d}}.
\end{eqnarray}
\label{lem:5.1}
\end{lemma}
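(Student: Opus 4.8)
The plan is to bound Term~1 in two stages. First I would turn the function-value gap $f(\mathbf{x}^*) - f^{max}_{\mathcal S_t^0}$ into a bound on the $\ell_1$ distance $\|\mathbf{z}^* - \mathbf{z}^*_t\|_1$ between $\mathbf{z}^*$ and its nearest neighbour $\mathbf{z}^*_t$ in the random set $Z_t$, using the gradient tail bound of Assumption~\ref{as:1}; second I would control this nearest-neighbour distance by a volumetric argument over the $|Z_t|$ i.i.d.\ uniform samples of $\mathbf{z}$. Splitting the confidence level as $\delta/2$ for each stage yields the overall $1-\delta$.

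\emph{Stage 1 (Lipschitz bound).} By the definitions of $\mathbf{A}$ and $\mathbf{B}$, the points $\mathbf{x}^* = \mathbf{A}\mathbf{y}^* + \mathbf{B}\mathbf{z}^*$ and $\mathbf{A}\mathbf{y}^* + \mathbf{B}\mathbf{z}^*_t$ agree in their last $d$ coordinates and differ only in the first $D-d$ coordinates, which are precisely $\mathbf{z}^*$ versus $\mathbf{z}^*_t$. A union bound over $i \in \{1,\dots,D\}$ applied to Assumption~\ref{as:1} with $L = b\sqrt{\log(2Da/\delta)}$ shows that with probability at least $1-\delta/2$ we have $\sup_{\mathbf{x} \in \mathcal X}|\diffp{f}{{x_{i}}}| \le L$ simultaneously for all $i$. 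On this event, integrating the gradient of $f$ along the segment joining the two points (which stays in the box $\mathcal X$) gives $f(\mathbf{x}^*) - f^{max}_{\mathcal S_t^0} \le L\,\|\mathbf{z}^* - \mathbf{z}^*_t\|_1$, since only the first $D-d$ coordinates vary.

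\emph{Stage 2 (nearest-neighbour distance).} It then suffices to show that with probability at least $1-\delta/2$, $\|\mathbf{z}^* - \mathbf{z}^*_t\|_1 \le r^\star$ where $r^\star := 2\big((D-d)!\,\log(2/\delta)/|Z_t|\big)^{1/(D-d)}$. For a single uniform sample $\mathbf{z} \in [-1,1]^{D-d}$ and $r \le 1$, I would lower bound $\mathbb{P}[\|\mathbf{z} - \mathbf{z}^*\|_1 \le r]$ by the normalised volume of one orthant of the $\ell_1$-ball of radius $r$ centred at $\mathbf{z}^*$: for each coordinate, at least one side of $[-1,1]$ around $z^*_i$ has width $\ge 1$, so the orthant of the ball pointing in the roomy directions lies inside the cube and has volume $(2r)^{D-d}/(2^{D-d}(D-d)!)$; dividing by the cube volume $2^{D-d}$ gives $\mathbb{P}[\|\mathbf{z} - \mathbf{z}^*\|_1 \le r] \ge r^{D-d}/(2^{D-d}(D-d)!)$. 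Hence the probability that all $|Z_t|$ independent samples miss the $r^\star$-ball is at most $\exp\!\big(-|Z_t|\,(r^\star)^{D-d}/(2^{D-d}(D-d)!)\big)$, and since $(r^\star)^{D-d} = 2^{D-d}(D-d)!\,\log(2/\delta)/|Z_t|$ this equals exactly $\delta/2$.

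Combining the two events by a union bound, with probability at least $1-\delta$ we get $f(\mathbf{x}^*) - f^{max}_{\mathcal S_t^0} \le L\,r^\star = 2b\sqrt{\log(2Da/\delta)}\,((D-d)!)^{1/(D-d)}(\log(2/\delta)/|Z_t|)^{1/(D-d)} = v_0\,(\log(2/\delta)/|Z_t|)^{1/(D-d)}$, which is exactly the claim. The main obstacle is the geometric estimate in Stage~2: one must lower bound the chance that a uniform point lands in an $\ell_1$-ball around $\mathbf{z}^*$ even when $\mathbf{z}^*$ lies near a face or corner of the cube, where the ball protrudes; the favourable-orthant trick handles this at the cost of only the constant factor $2$ that already appears in $v_0$, and it is also what produces the $(D-d)!$ term. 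The remaining pieces — the two union bounds, the fundamental-theorem-of-calculus step, and matching constants to $v_0$ — are routine; the mild restriction $r^\star \le 1$ covers the regime of interest (large $|Z_t|$), and $|Z_t| = \sum_{s \le t} N_0 s^\alpha$ is left unexpanded since the statement is phrased directly in terms of $|Z_t|$.
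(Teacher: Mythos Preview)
Your proposal is correct and follows the same two-stage strategy as the paper: first reduce $f(\mathbf{x}^*) - f^{max}_{\mathcal S_t^0}$ to $L\,\|\mathbf{z}^* - \mathbf{z}^*_t\|_1$ via Assumption~\ref{as:1} with a union bound over coordinates, then control the nearest-neighbour $\ell_1$ distance by lower-bounding the probability that a uniform sample lands in an $\ell_1$-ball around $\mathbf{z}^*$ (your favourable-orthant argument is exactly the paper's ``volume halved by each dimension at the boundary'' step) and using independence of the $|Z_t|$ samples together with $1-p\le e^{-p}$. Your bookkeeping of the constant $2$ in $v_0$ is in fact tidier than the paper's supplementary proof, which states the $\ell_1$-ball volume as $(4\theta)^{D-d}/\Gamma(D-d+1)$ and consequently drops that factor in its final $v_0$.
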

All proofs are provided in details in Supplementary Material. Different from confidence bound techniques that are used to bound Term 2 and Term 3, we use the randomisation techniques on sampled set $Z_t$ and a result from \cite{Xian05} to bound Term 1. Combining the results from Lemmas \ref{lem:5.4}, \ref{lem:5.3} and \ref{lem:5.1} we obtain a bound on $r_t$ and then sum it over  iteration 1 to $T$ to obtain the following theorem providing an upper bound on the cumulative regret $R_T$. The notation $\mathcal{O}^*$ is a variant of $\mathcal{O}$, where
log factors are suppressed.
\begin{theorem}
Pick a $\delta \in (0,1)$. Then, the cumulative regret of the proposed MS-UCB algorithm is bounded as
\begin{itemize}
  \item $R_T \le \mathcal{O}^*(\sqrt{dT\gamma_T} + D)$ if $\alpha \ge D-d-1$.
  \item $R_T \le \mathcal{O}^*(\sqrt{dT\gamma_T} + \frac{D^2-Dd}{D-d-\alpha -1}T^{1 -\frac{\alpha +1}{D-d}})$ if $0 \le \alpha < D-d -1$.
\end{itemize}
with probability greater than $1 -\delta$, where $\gamma_T$ is the
maximum information gain about the function $f$ from any set of observations of size $T$. $\alpha$ is related to the number of subspaces chosen as $N_t = N_0 t^{\alpha}$.
\label{theorem:1}
\end{theorem}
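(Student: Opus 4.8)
The plan is to control $r_t$ for each $t$ by plugging in the three lemmas and then sum over $t=1,\dots,T$. First I would fix $\delta$ and invoke Lemmas~\ref{lem:5.4}, \ref{lem:5.3} and \ref{lem:5.1} together; the $\pi^2t^2$ factors and the constants $6,3,2$ appearing inside $\beta_t^0$, $\beta_t^1$ and inside the $\log(2/\delta)$ term of $v_0$ are exactly what is needed so that, after a union bound over the three events and over $t$, all three inequalities hold simultaneously for every $t$ on an event of probability at least $1-\delta$ (any extra union bound over $t$ needed for Lemma~\ref{lem:5.1} only introduces a $\log t$ factor, which $\mathcal{O}^*$ absorbs). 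The crucial structural fact is that $\mathbf{z}^*_t\in Z_t$, so $\mathcal{S}(\mathbf{A},\mathbf{z}^*_t)$ is one of the subspaces making up $\mathcal{X}_t$; hence the point $\mathbf{x}'\in\mathcal{S}(\mathbf{A},\mathbf{z}^*_t)$ produced by Lemma~\ref{lem:5.3} lies in $\mathcal{X}_t$, and therefore $a_t(\mathbf{x}')\le a_t(\mathbf{x}_t)=\max_{\mathbf{x}\in\mathcal{X}_t}a_t(\mathbf{x})$ by the definition of $\mathbf{x}_t$ in Eq~(\ref{eq:12}).

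Using $\beta_t\ge\beta_t^1\ge\beta_t^0$, the inequality $a_t(\mathbf{x}')\le a_t(\mathbf{x}_t)$ gives $\mu_{t-1}(\mathbf{x}')+\sqrt{\beta_t^1}\,\sigma_{t-1}(\mathbf{x}')\le\mu_{t-1}(\mathbf{x}_t)+\sqrt{\beta_t}\,\sigma_{t-1}(\mathbf{x}_t)$; substituting the Term~3 bound of Lemma~\ref{lem:5.4} cancels the $\mu_{t-1}(\mathbf{x}_t)$ terms, and combining with the Term~1 bound of Lemma~\ref{lem:5.1} leaves
\[
r_t \;\le\; 2\sqrt{\beta_t}\,\sigma_{t-1}(\mathbf{x}_t) \;+\; \frac{1}{t^2} \;+\; v_0\Big(\frac{\log(2/\delta)}{|Z_t|}\Big)^{\frac{1}{D-d}}.
\]
Summing over $t$, the first term is handled exactly as in Srinivas et al.: $\beta_t$ is nondecreasing and $\beta_T=\mathcal{O}(d\,\mathrm{polylog})$, so Cauchy--Schwarz together with the standard bound $\sum_{t=1}^T\sigma_{t-1}^2(\mathbf{x}_t)\le \tfrac{2}{\log(1+\sigma^{-2})}\gamma_T$ yields $2\sum_t\sqrt{\beta_t}\,\sigma_{t-1}(\mathbf{x}_t)=\mathcal{O}^*(\sqrt{dT\gamma_T})$; the middle term contributes $\sum_t t^{-2}\le\pi^2/6=\mathcal{O}(1)$.

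For the last term I would first lower-bound the sample count by an integral comparison, $|Z_t|=\sum_{s=1}^tN_0s^{\alpha}\ge \tfrac{N_0}{\alpha+1}t^{\alpha+1}$, so the sum reduces to a constant multiple of $v_0\sum_{t=1}^T t^{-p}$ with $p=\tfrac{\alpha+1}{D-d}$, and I would use Stirling's formula $\big((D-d)!\big)^{1/(D-d)}=\Theta(D-d)$ to get $v_0=\mathcal{O}^*(D)$. The final step is a case split on $p$. If $\alpha\ge D-d-1$ then $p\ge1$, so $\sum_{t=1}^T t^{-p}=\mathcal{O}(\log T)=\mathcal{O}^*(1)$ and the last term is $\mathcal{O}^*(D)$, giving $R_T\le\mathcal{O}^*(\sqrt{dT\gamma_T}+D)$. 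If $0\le\alpha<D-d-1$ then $p<1$, so $\sum_{t=1}^T t^{-p}\le \tfrac{T^{1-p}}{1-p}+\mathcal{O}(1)$ with $\tfrac{1}{1-p}=\tfrac{D-d}{D-d-\alpha-1}$; multiplying by $v_0=\mathcal{O}^*(D)$ produces a term of order $\tfrac{D(D-d)}{D-d-\alpha-1}\,T^{1-\frac{\alpha+1}{D-d}}=\mathcal{O}^*\!\big(\tfrac{D^2-Dd}{D-d-\alpha-1}T^{1-\frac{\alpha+1}{D-d}}\big)$, which is the second case. I expect the main obstacle to be the bookkeeping in this last term: establishing the sharp $t^{\alpha+1}$ growth of $|Z_t|$, carrying the constant $\tfrac{D-d}{D-d-\alpha-1}$ that blows up as $\alpha\uparrow D-d-1$, and bounding $v_0$ through Stirling --- plus making sure the per-$t$ union bound needed for Lemma~\ref{lem:5.1} costs only logarithmic factors hidden inside $\mathcal{O}^*$.
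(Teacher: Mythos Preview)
Your proposal is correct and follows essentially the same route as the paper: combine the three lemmas (with a union bound) to obtain the per-step bound $r_t\le 2\sqrt{\beta_t}\,\sigma_{t-1}(\mathbf{x}_t)+t^{-2}+v_0(|Z_t|^{-1}\log(c/\delta))^{1/(D-d)}$ via the key inequality $a_t(\mathbf{x}')\le a_t(\mathbf{x}_t)$, then sum using the Srinivas-type information-gain bound for the first piece and a $p$-series/integral comparison on $|Z_t|^{-1/(D-d)}$ for the last. The only cosmetic differences are that the paper splits the last sum into three $\alpha$-ranges (citing \cite{Chlebus2009,tom99}) before collapsing to the two $\mathcal{O}^*$ cases, and bounds $((D-d)!)^{1/(D-d)}$ by the cruder $(D-d+1)$ rather than Stirling.
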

Next, we show that if the objective function has a low dimensional effective subspace meaning when there are directions in which the function is constant, our algorithm can benefit automatically from this structure. The following Theorem provides a regret bound for this scenario.
\begin{theorem}[For functions having an effective subspace]
Pick a $\delta \in (0,1)$. If there exists a linear subspace $\mathcal T \subset \mathbb{R}^D$ with $d_e$ dimensions such that $ \forall \mathbf{x} \in \mathbb{R}^D$, $f(\mathbf{x})= f(\mathbf{x}_{\top}) $ where $\mathbf{x}_{\top} \in \mathcal T$ is the orthogonal projection of $x$ onto $\mathcal T$, then
\begin{itemize}
  \item $R_T \le \mathcal{O}^*(\sqrt{dT\gamma_T} + d_e)$ if $\alpha \ge d_e - 1$.
  \item $R_T \le \mathcal{O}^*(\sqrt{dT\gamma_T} + \frac{d_e^2}{d_e-\alpha -1}T^{1 -(\alpha +1)/d_e})$ if $0 \le \alpha < d_e -1$.
\end{itemize}
with probability greater than $1 -\delta$, where $\gamma_T$ is the
maximum information gain about the function $f$ from any set of observations of size $T$.
\end{theorem}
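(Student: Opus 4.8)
The plan is to reuse the machinery behind Theorem \ref{theorem:1} almost verbatim: I would keep the decomposition $r_t = \bigl(f(\mathbf{x}^*) - f^{max}_{\mathcal S_t^0}\bigr) + \bigl(f^{max}_{\mathcal S_t^0} - f(\mathbf{x}_t)\bigr)$, split the second bracket into Term 2 and Term 3, and then aggregate over $t$. The first point to make is that Terms 2 and 3 are \emph{blind} to the effective subspace: Lemmas \ref{lem:5.3} and \ref{lem:5.4} go through unchanged, because the only non-logarithmic dependence in $\beta_t^0,\beta_t^1$ is on the projection dimension $d$, not on $D$. Consequently their aggregated contribution to $R_T$ is still $\mathcal{O}^*(\sqrt{dT\gamma_T})$ — using $\sum_t \sigma_{t-1}(\mathbf{x}_t) = \mathcal{O}(\sqrt{T\gamma_T})$, $\sqrt{\beta_T} = \mathcal{O}^*(\sqrt{d})$, and $\sum_t t^{-2} = \mathcal{O}(1)$. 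Thus the entire theorem reduces to re-deriving Lemma \ref{lem:5.1} with the covering exponent $D-d$ replaced by $d_e$.

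To do this, let $\mathbf{V}\in\mathbb{R}^{D\times d_e}$ be an orthonormal basis of $\mathcal T$, so that $f(\mathbf{x}) = h(\mathbf{V}^\top\mathbf{x})$ for some $h$ on $\mathbb{R}^{d_e}$; since $h$ is again a GP with a stationary kernel (the restriction of $k$ to $\mathcal T$), the gradient-tail condition of Assumption \ref{as:1} applies to $h$ with a union bound over only $d_e$ partials. Now $\mathbf{z}\mapsto f(\mathbf{A}\mathbf{y}^*+\mathbf{B}\mathbf{z})$ depends on $\mathbf{z}$ only through $\mathbf{V}^\top\mathbf{B}\mathbf{z}\in\mathbb{R}^{d_e}$, so I would replace the $\ell_1$-nearest sample by $\mathbf{z}_t^* := \text{argmin}_{\mathbf{z}\in Z_t}\|\mathbf{V}^\top\mathbf{B}(\mathbf{z}-\mathbf{z}^*)\|_1$ — permissible because the bound of Lemma \ref{lem:5.3} on Term 2 holds for \emph{any} $\mathbf{z}_t^*\in Z_t$. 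A mean-value bound then gives $f(\mathbf{x}^*) - f^{max}_{\mathcal S_t^0} \le L\,\|\mathbf{V}^\top\mathbf{B}(\mathbf{z}^*-\mathbf{z}_t^*)\|_1$ with $L = \mathcal{O}(b\sqrt{\log(d_e a/\delta)})$ with probability $\ge 1-\delta/2$, and it remains to bound $\min_{\mathbf{z}_i\in Z_t}\|\mathbf{V}^\top\mathbf{B}(\mathbf{z}^*-\mathbf{z}_i)\|_1$. Here the images $\mathbf{V}^\top\mathbf{B}\mathbf{z}_i$ of the i.i.d.\ uniform samples $\mathbf{z}_i$ are i.i.d.\ on a bounded zonotope of dimension $\le d_e$ with density bounded below near the target $\mathbf{V}^\top\mathbf{B}\mathbf{z}^*$, so the same concentration estimate used in Lemma \ref{lem:5.1} (the result of \cite{Xian05}), now run in $d_e$ rather than $D-d$ dimensions, gives this nearest-neighbour distance as $\mathcal{O}\bigl((\Gamma(d_e+1))^{1/d_e}(\tfrac{1}{|Z_t|}\log\tfrac{2}{\delta})^{1/d_e}\bigr)$ with probability $\ge 1-\delta/2$. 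A union bound then produces the replacement of Lemma \ref{lem:5.1}: $f(\mathbf{x}^*) - f^{max}_{\mathcal S_t^0} \le v_e\bigl(\tfrac{1}{|Z_t|}\log\tfrac{2}{\delta}\bigr)^{1/d_e}$ with $v_e$ of order $d_e$ up to logarithmic and absolute constants (Stirling on $\Gamma(d_e+1)^{1/d_e}$).

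The final aggregation is then identical to the proof of Theorem \ref{theorem:1} with $d_e$ in place of $D-d$: since $N_t = N_0 t^\alpha$ gives $|Z_t| = \sum_{s\le t} N_0 s^\alpha = \Theta(N_0 t^{\alpha+1}/(\alpha+1))$, summing the Term-1 bound contributes $\mathcal{O}^*\!\bigl(d_e\sum_{t=1}^T t^{-(\alpha+1)/d_e}\bigr)$; when $\alpha\ge d_e-1$ the exponent is $\ge 1$ so the sum is $\mathcal{O}(\log T)$, absorbed into $\mathcal{O}^*$, leaving the $\mathcal{O}^*(d_e)$ term, and when $0\le\alpha<d_e-1$ the sum is $\Theta\bigl(\tfrac{d_e}{d_e-\alpha-1}T^{1-(\alpha+1)/d_e}\bigr)$, giving the $\tfrac{d_e^2}{d_e-\alpha-1}T^{1-(\alpha+1)/d_e}$ term; adding the $\mathcal{O}^*(\sqrt{dT\gamma_T})$ from Terms 2 and 3 and a union bound over the (at most three) high-probability events closes the argument. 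The step I expect to require the most care is the $d_e$-dimensional covering estimate for the projected samples $\mathbf{V}^\top\mathbf{B}\mathbf{z}_i$ — specifically, a clean lower bound on the pushforward density of the uniform cube measure in a neighbourhood of $\mathbf{V}^\top\mathbf{B}\mathbf{z}^*$, which is immediate when that point is interior to the zonotope and costs only a bounded factor otherwise — and I would note that the improvement is genuine only when $d_e < D-d$, since otherwise $\mathbf{V}^\top\mathbf{B}$ cannot attain rank $d_e$ with $\mathbf{y}$ held fixed and Theorem \ref{theorem:1} already applies.
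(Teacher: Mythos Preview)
Your proposal is correct and follows the same overall architecture as the paper's proof: Terms 2 and 3 (Lemmas \ref{lem:5.3} and \ref{lem:5.4}) are carried over unchanged, Term 1 (Lemma \ref{lem:5.1}) is re-derived with the covering exponent $D-d$ replaced by $d_e$, and the aggregation over $t$ is then repeated verbatim with that substitution. The one place your route diverges is in the execution of the modified Term-1 bound. The paper keeps the full $\ell_1$ Lipschitz inequality $|f(\mathbf{x}^*)-f(\mathbf{x})|\le L\|\mathbf{x}^*-\mathbf{x}\|_1$ (union bound over all $D$ partials, so $L$ still carries $\log(Da/\delta)$) and then informally ``restricts the variation of the function to $d_e$ active dimensions'' in the volume calculation, citing an analogous step in \cite{Johannes19}. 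You instead project explicitly through $\mathbf{V}^\top\mathbf{B}$, redefine $\mathbf{z}_t^*$ as the $\ell_1$-nearest sample \emph{after projection}, and analyse the pushforward of the uniform cube measure on a $d_e$-dimensional zonotope. Your version is more careful on two points the paper does not address --- the lower bound on the pushforward density near the target, and the implicit rank condition $d_e\le D-d$ needed for $\mathbf{V}^\top\mathbf{B}$ to actually span $d_e$ dimensions with $\mathbf{y}$ held fixed --- and it also yields a marginally sharper Lipschitz constant ($\log(d_e a/\delta)$ in place of $\log(Da/\delta)$), though this is absorbed by $\mathcal{O}^*$ anyway.
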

\section{Discussion}
There are two important parameters in our method, (1) the dimension $d$ that is used for acquisition function maximisation and (2) parameter $\alpha$ that is related to the number of subspaces on which the acquisition function is maximised. In this section, we provide a discussion on these parameters. We also analyse the computational cost involved in maximising the acquisition function comparing it with that of GP-UCB.
\vspace{-1em}
\paragraph{On Dimension $d$.} The proposed MS-UCB algorithm is applied for $1 \le d < D$,  We consider two extreme cases where $d = 0$ or $d = D$. \begin{itemize}
  \item For $d =0$, our Definition \ref{de:2} is still valid however, in this case, the space $\mathcal Y$ is degenerated to 0. It follows that $\mathcal S(A,\mathbf{z}) = \{\mathbf{z}\}$ that means that the subspace becomes a unique point. The acquisition step  becomes finding the maximiser of the acquisition function of a set of sampled points. We obtain the same result as in Theorem \ref{theorem:1} for $d =0$ with a slight modification of $\beta_t = 4log(\pi^2t^2/2\delta)$ for Algorithm 1.
  \item For $d = D$, the space $\mathcal Z$ is degenerated into 0 and space $\mathcal Y$ becomes space $\mathcal X$. In this case, the idea of using sampling on space $\mathcal Z$ is not being utilised and the algorithm reduces to the standard GP-UCB algorithm working directly on the original high-dimensional space $\mathcal X$.
\end{itemize}
\paragraph{On the Number of Subspaces.} In Theorem \ref{theorem:1}, the parameter $N_0$ does not affect the convergence rate of $R_T$. We thus only discuss the different cases of $\alpha$. When $\alpha = 0$, Theorem \ref{theorem:1} says that $R_T \le \mathcal{O}^*(\sqrt{dT\gamma_T} + \frac{D^2-Dd}{D-d-1}T^{1 - 1/(D-d)})$. Although the regret growth now has an additional term $T^{1 -1/(D-d)}$, but even in this case our algorithm has a sublinear cumulative regret. As expected, larger the value of $\alpha$, tighter the cumulative regret until $\alpha \ge D-d -1$, after which the cumulative regret no longer depends on $\alpha$ as the term $T^{1 -(\alpha+1)/(D-d)}$ gets dominated by $\sqrt{dT\gamma_T}$. In this case, $R_T \le \mathcal{O}^*(\sqrt{dT\gamma_T})$ for a large enough $T$. Comparison with regret bound of GP-UCB $\mathcal{O}^{*}(\sqrt{DT\gamma_T})$, our algorithm's bound is tighter by the factor $\sqrt{D}$. \emph{To our best knowledge, ours is the first algorithm that obtains a cumulative regret bound better than GP-UCB's for any $D$ being the dimensional number of input space under the Assumption 1}.
\paragraph{On the Computation Complexity.} As mentioned in \cite{Kandasamy15}, in any grid or branch and bound methods, maximising a function to within $\zeta$ accuracy, requires $\mathcal{O}(\zeta^{-D})$ calls to $a_t$. Since we need to solve $|Z_t|$ $d$ dimensional optimization problems for acquisition functions, it requires only $\mathcal{O}(|Z_t|\zeta^{-d})$ calls.

In our algorithm, we consider $N_t = N_0t^{\alpha}$, and thus $|Z_t| = \sum_{j =1}^{t}N_0 j^{\alpha}$, which can be bounded by $\frac{(t+1)^{\alpha +1}-1}{\alpha +1}$ using the results from \cite{Chlebus2009}. The largest computation is at iteration $T$ where $|Z_T| = \sum_{j =1}^{T}N_0 j^{\alpha} < N_0(T+1)^{\alpha +1}/(\alpha +1)$. To have reduced computations in maximising the acquisition function, we should set $\alpha$ so that $(N_0(T+1)^{\alpha +1}/(\alpha +1)) \zeta^{-d} < \zeta^{-D}$. If we choose $\zeta = \frac{1}{(T+1)^2}$ and $N_0 = 1$ then by choosing $\alpha < 2(D-d) -1$, the condition is satisfied. Thus, combining with Theorem \ref{theorem:1}, we can say, if there exists $$ D-d -1 \le \alpha < 2(D-d)-1$$ then our algorithm can obtain both a cumulative regret $\mathcal{O}^*(\sqrt{dT\gamma_T})$ that is tighter than GP-UCB's and a computational cost that is cheaper than GP-UCB's when maximising the acquisition function with $\frac{1}{(T+1)^2}$- accuracy.
\section{Experiments}
\begin{figure*}[t]
\centering
\subfloat{\includegraphics[scale=0.5,width=.25\textwidth]{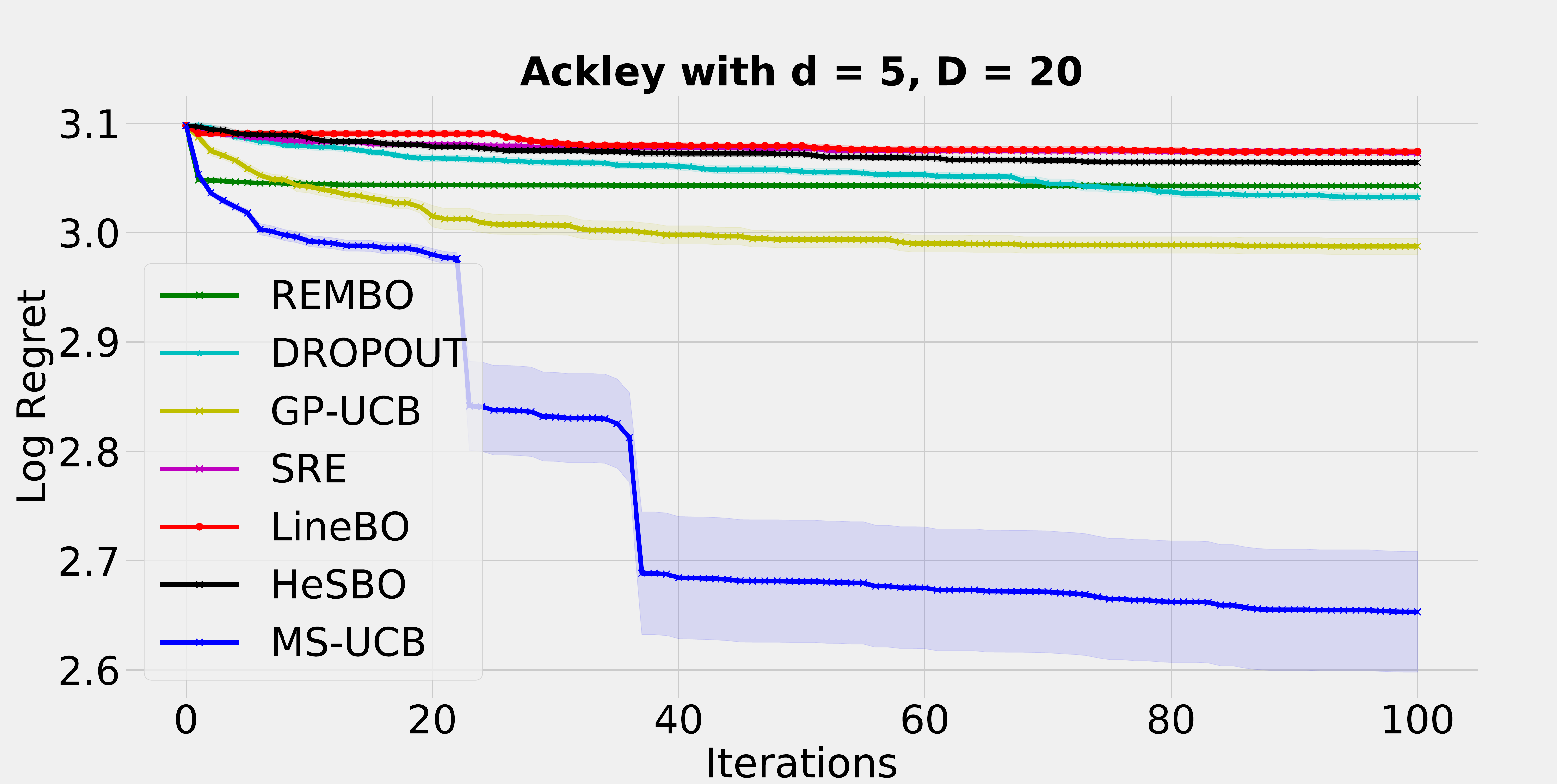}
}\hfill
\subfloat{\includegraphics[scale=0.5,width=.25\textwidth]{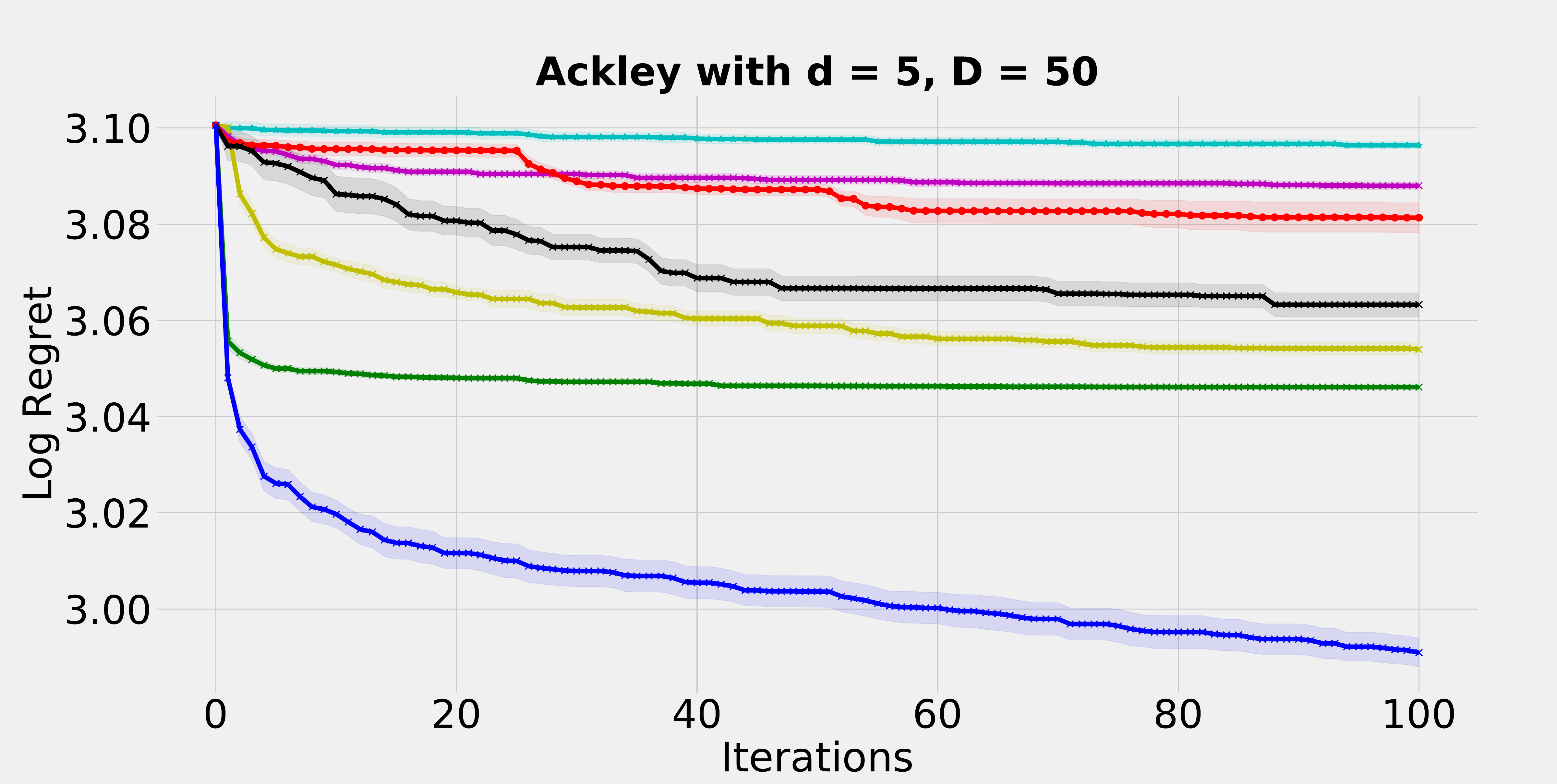}
}\hfill
\subfloat{\includegraphics[scale=0.5,width=.25\textwidth]{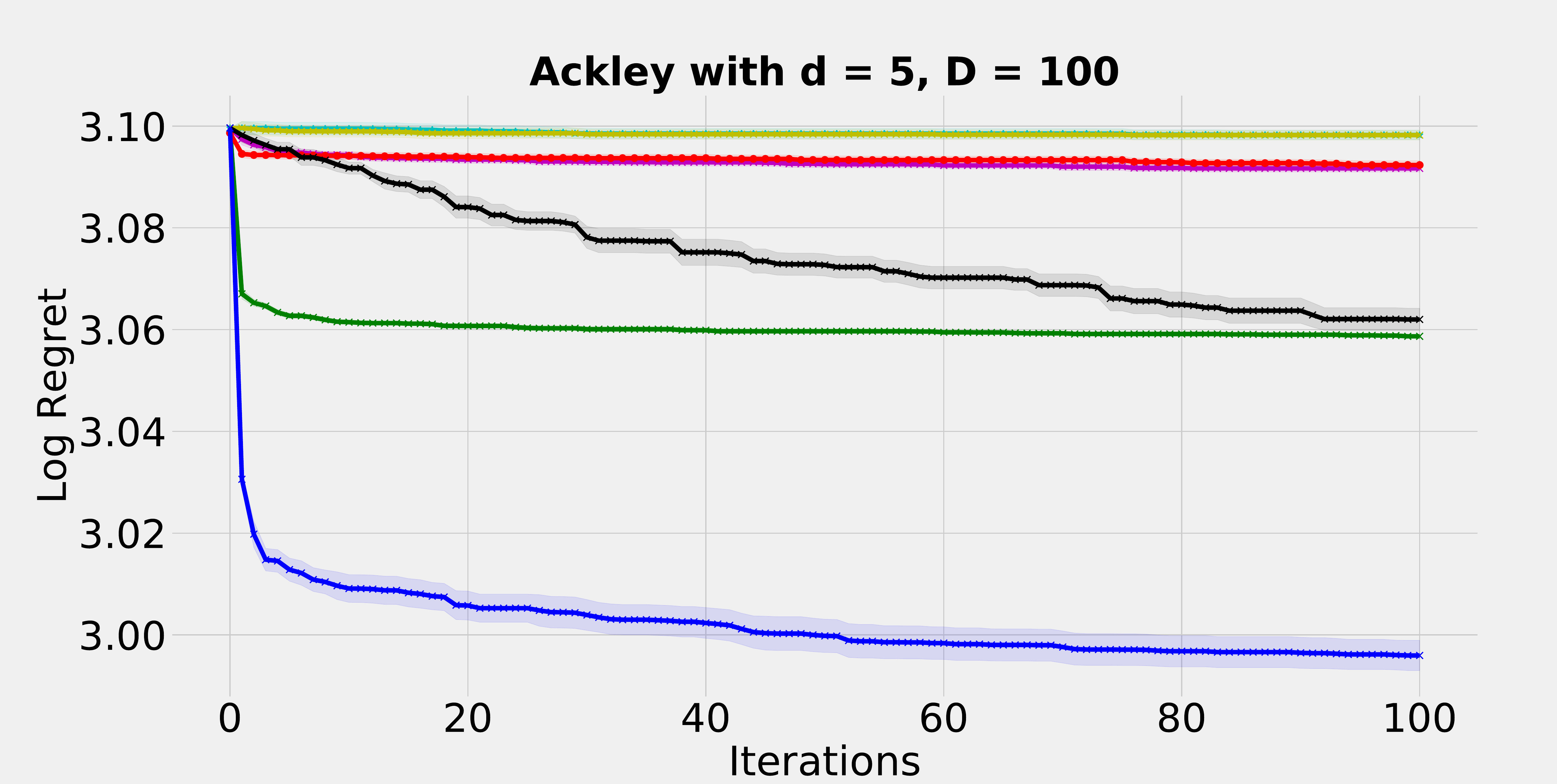}
}\\
\subfloat{\includegraphics[scale=0.5,width=.25\textwidth]{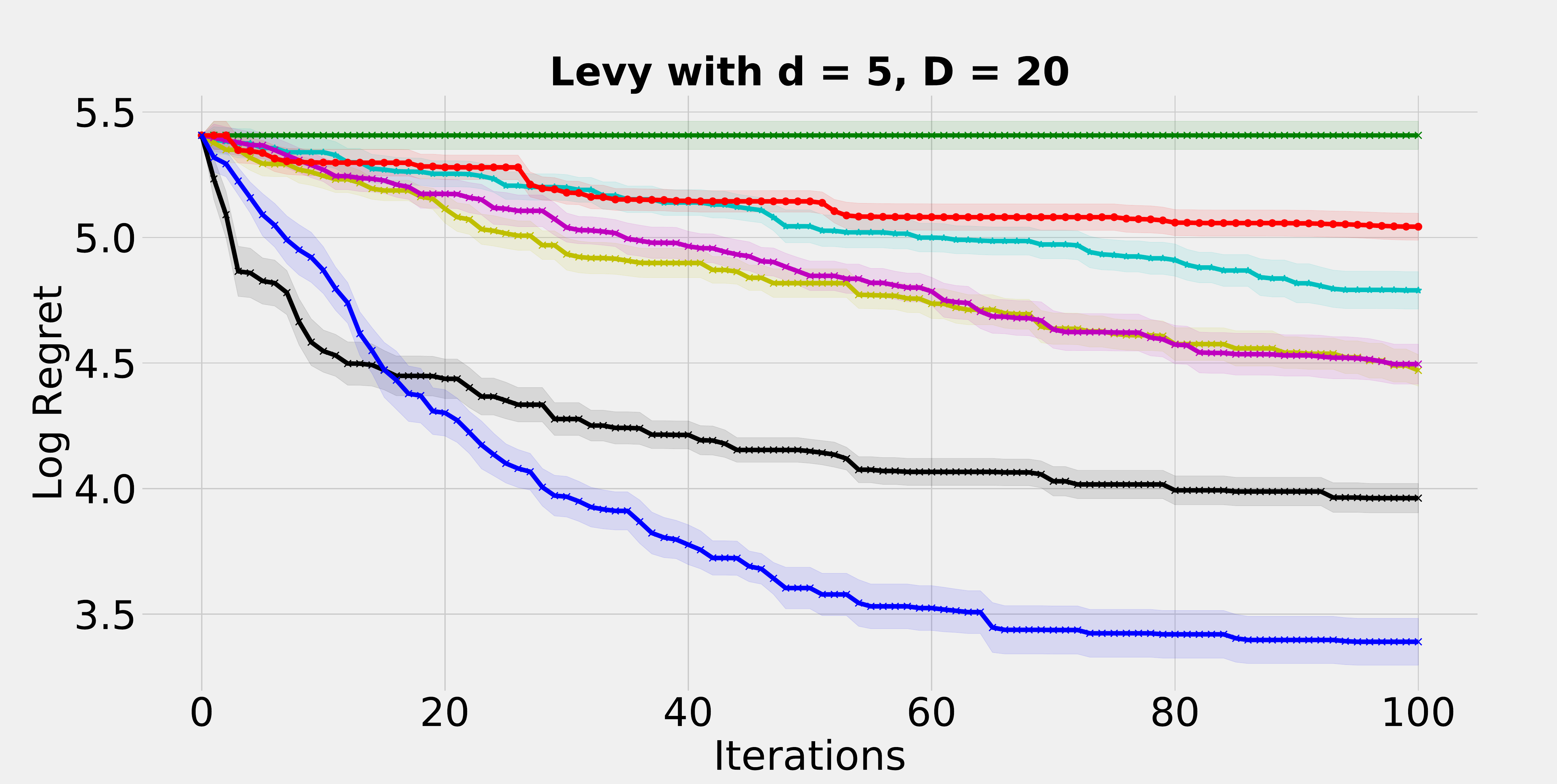}
}\hfill
\subfloat{\includegraphics[scale=0.5,width=.25\textwidth]{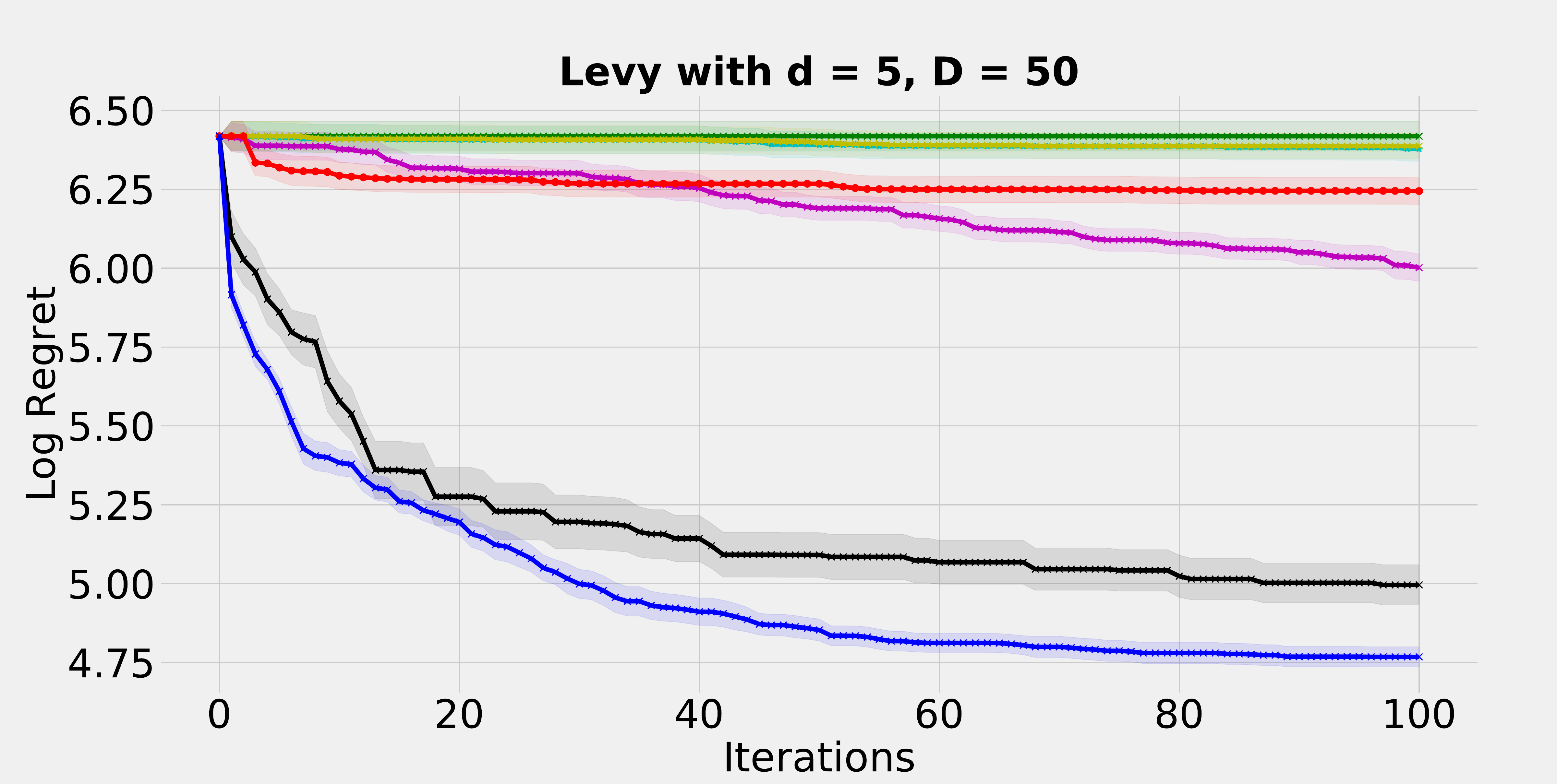}
}\hfill
\subfloat{\includegraphics[scale=0.5,width=.25\textwidth]{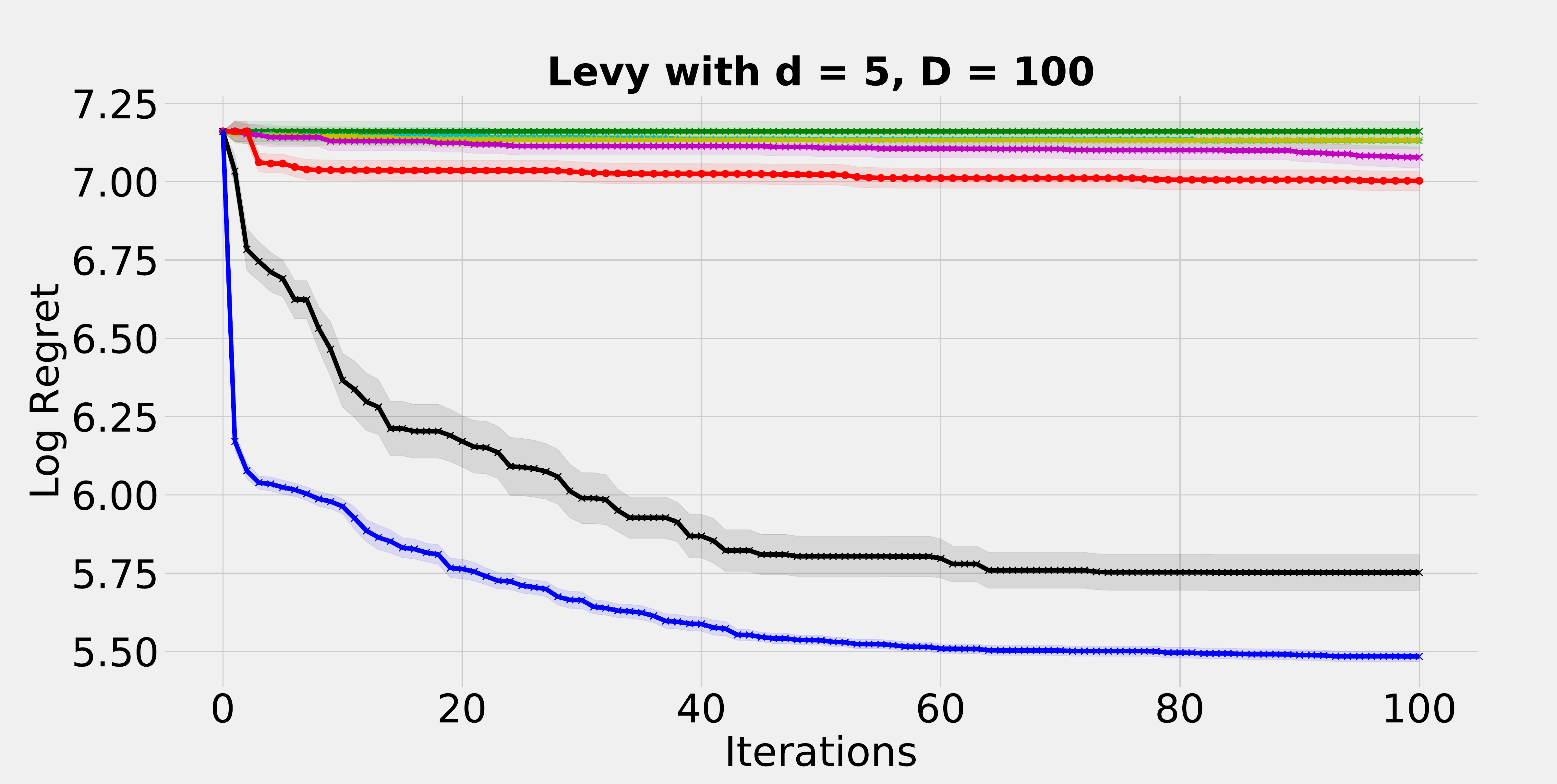}
}\\
\subfloat{\includegraphics[scale=0.5,width=.25\textwidth]{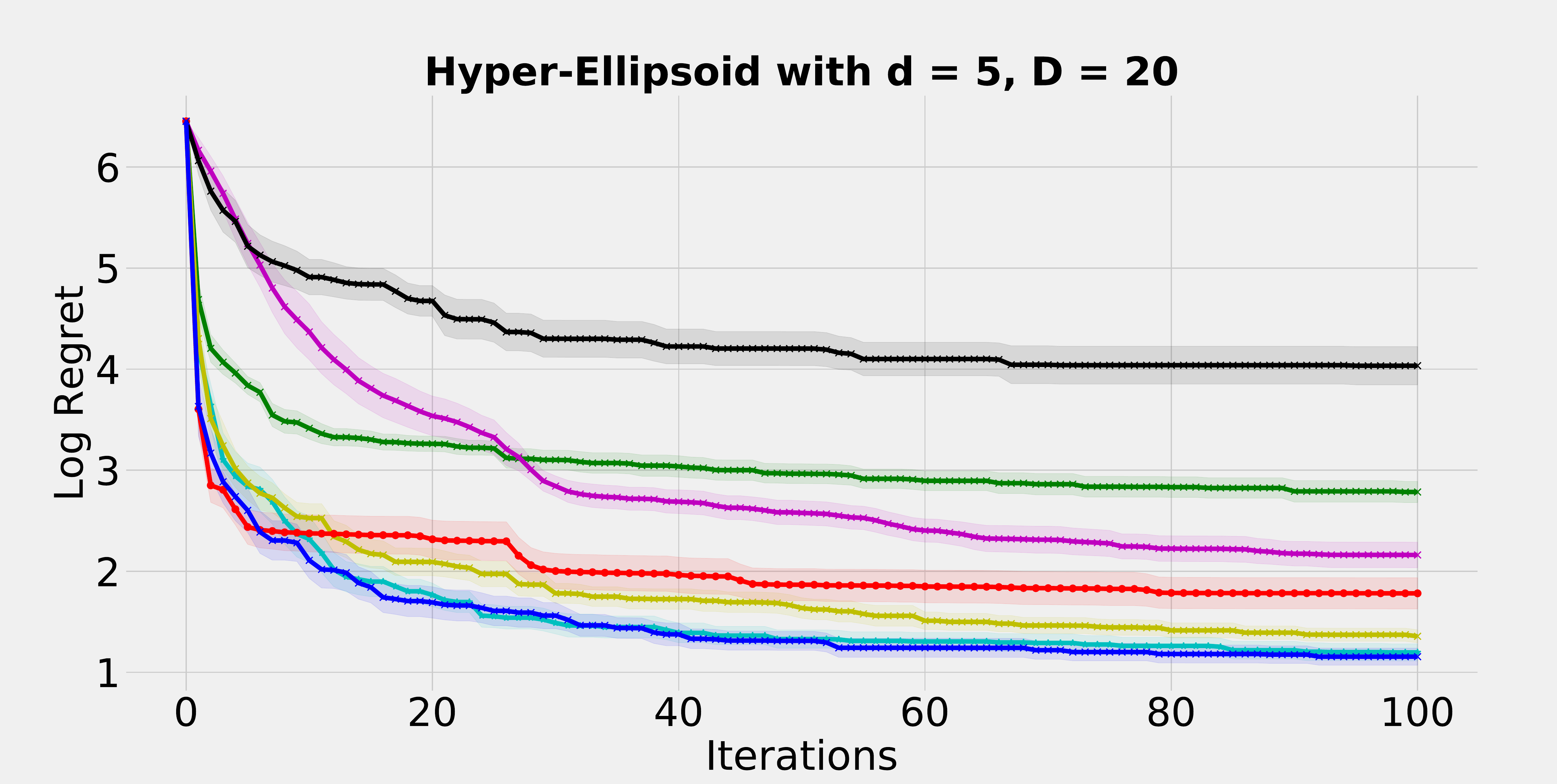}
}\hfill
\subfloat{\includegraphics[scale=0.5,width=.25\textwidth]{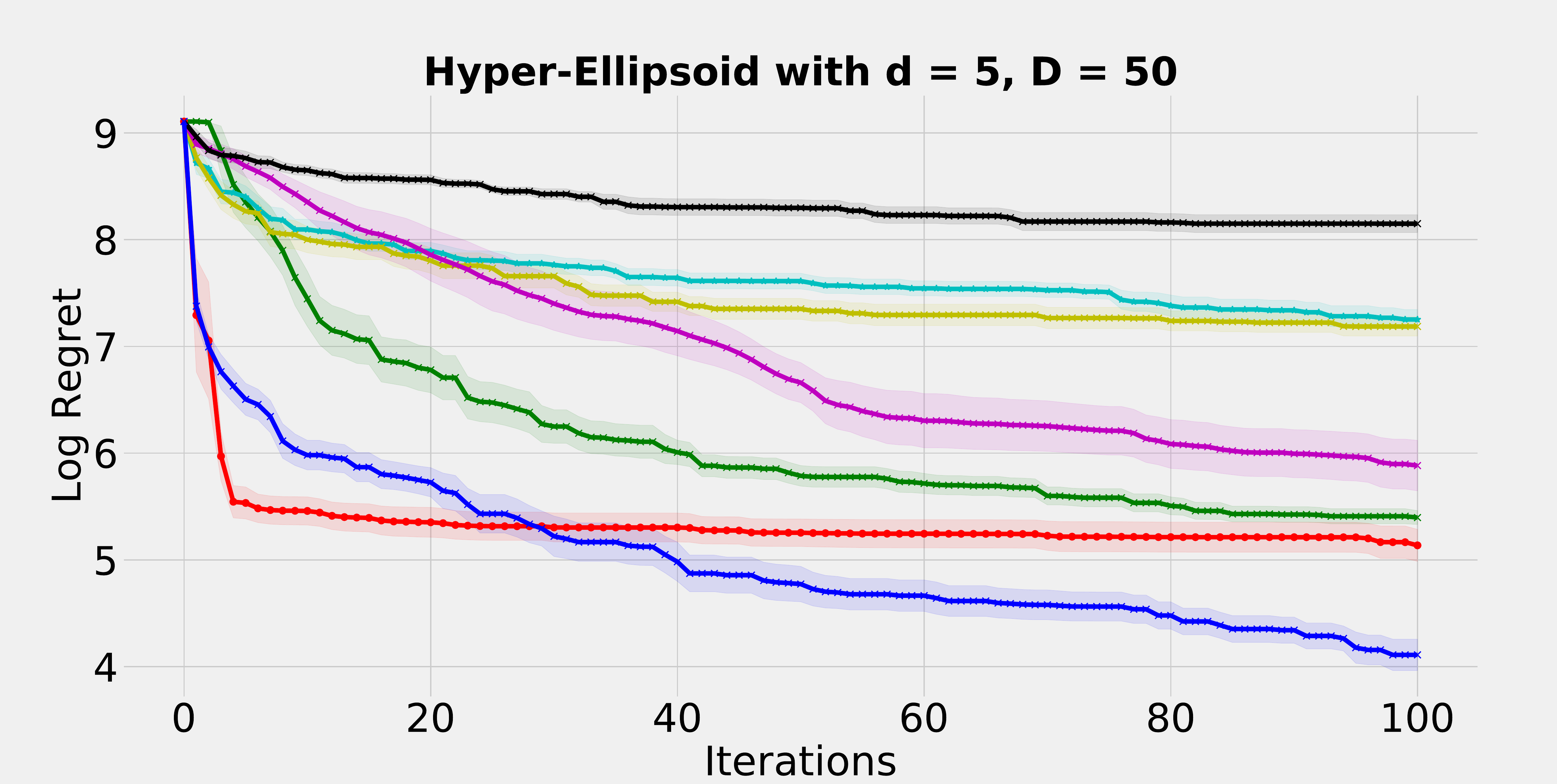}
}\hfill
\subfloat{\includegraphics[scale=0.5,width=.25\textwidth]{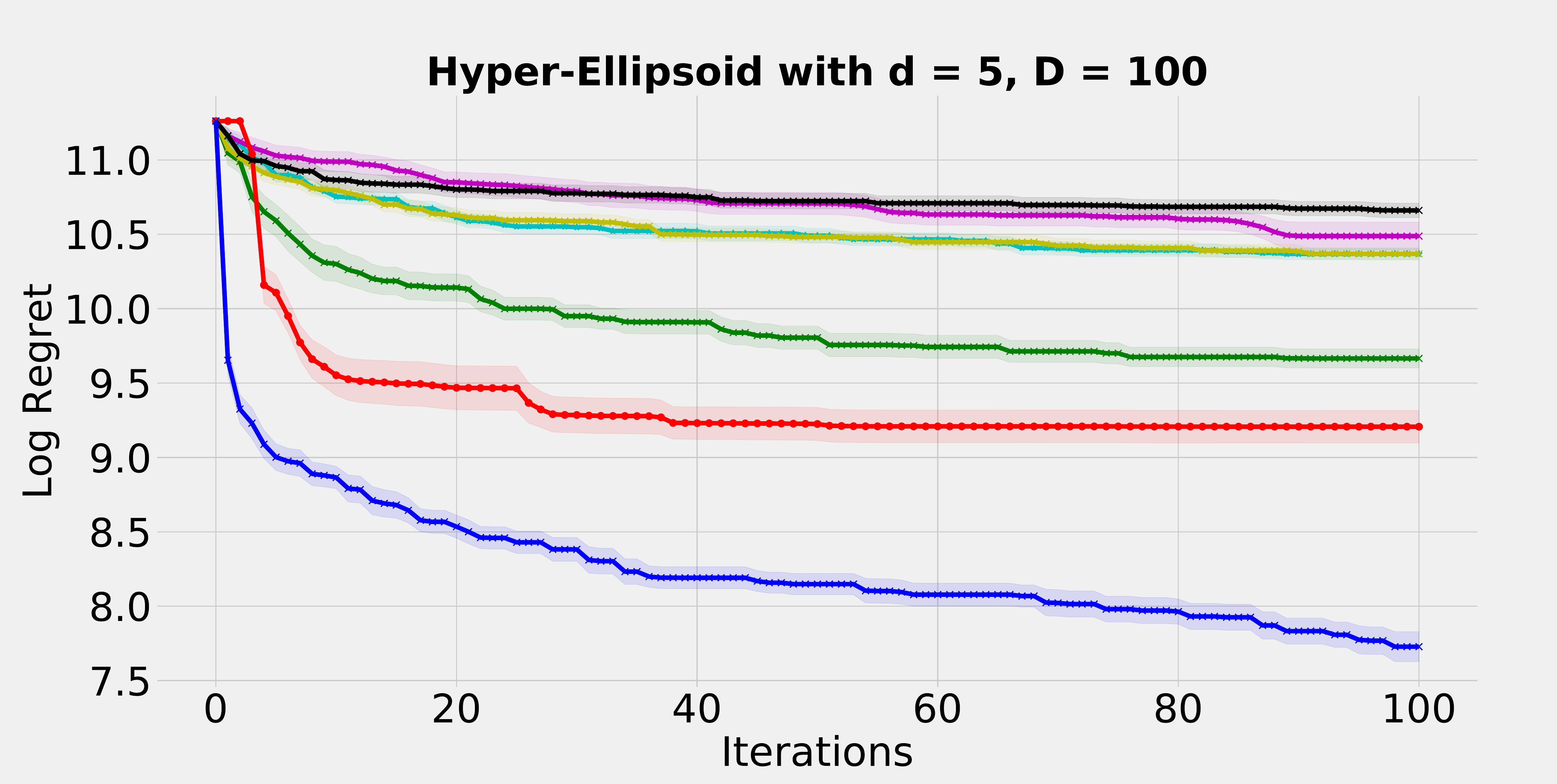}
}\\
\subfloat{\includegraphics[scale=0.5,width=.25\textwidth]{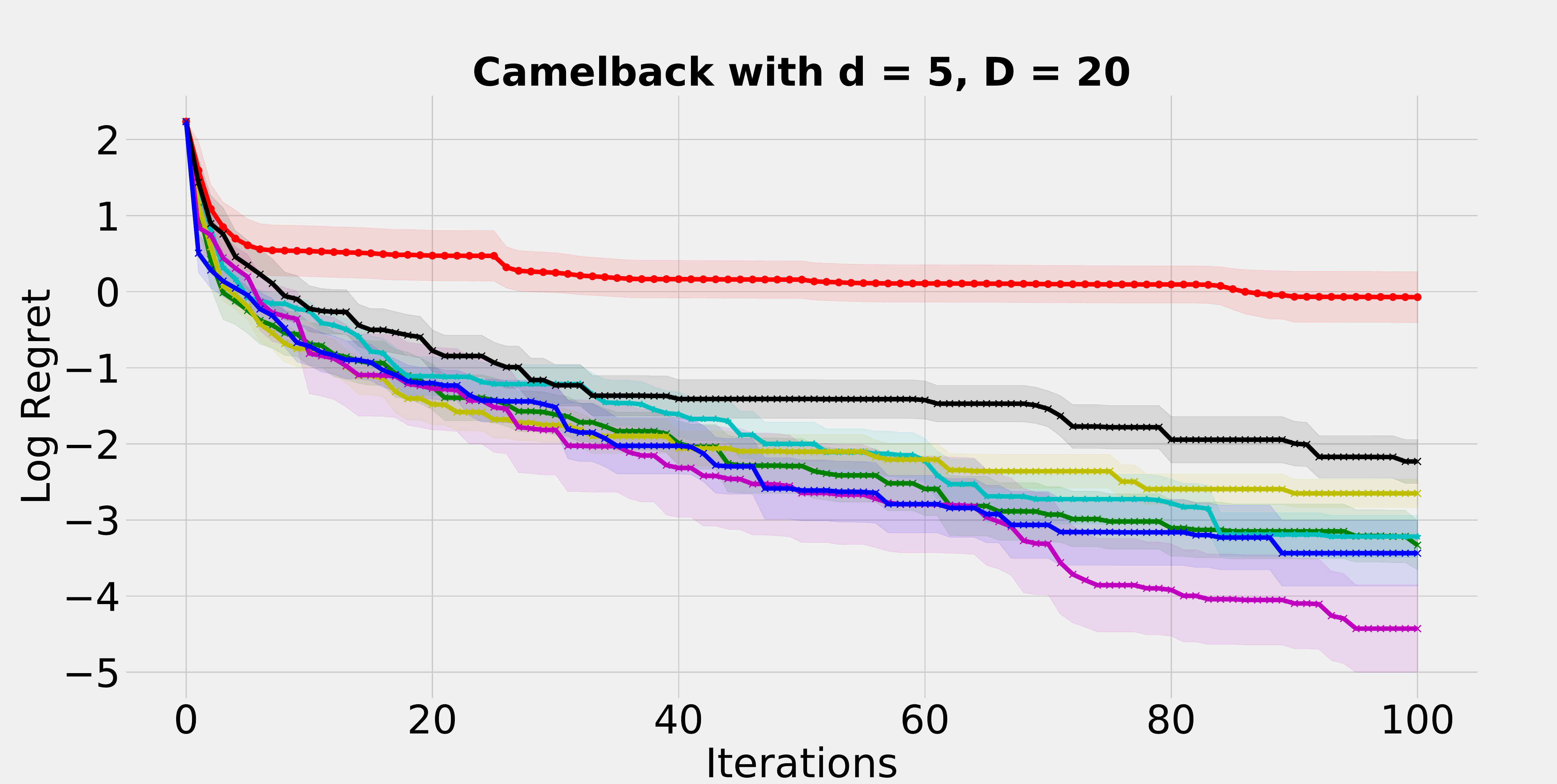}
}\hfill
\subfloat{\includegraphics[scale=0.5,width=.25\textwidth]{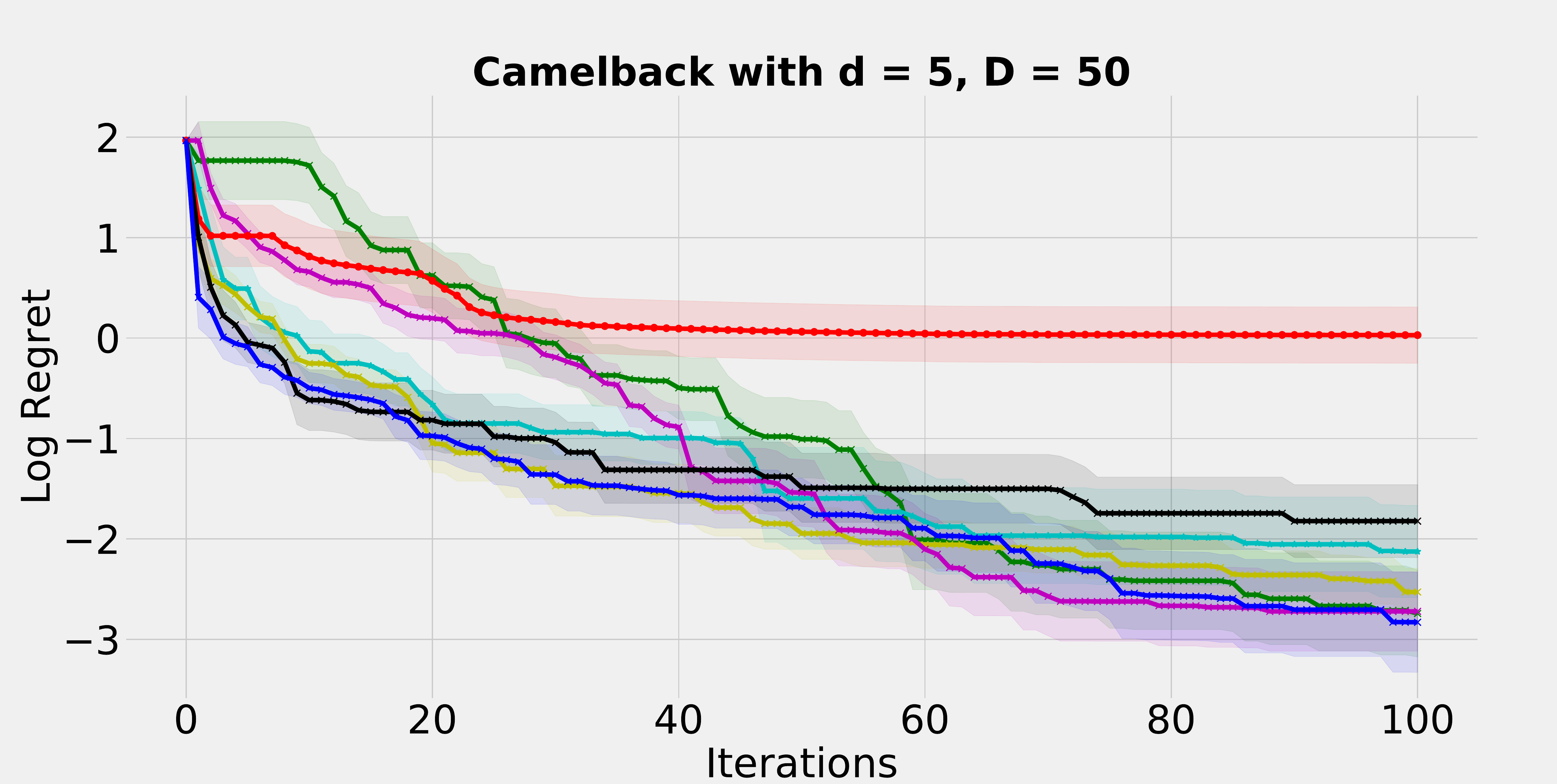}
}\hfill
\subfloat{\includegraphics[scale=0.5,width=.25\textwidth]{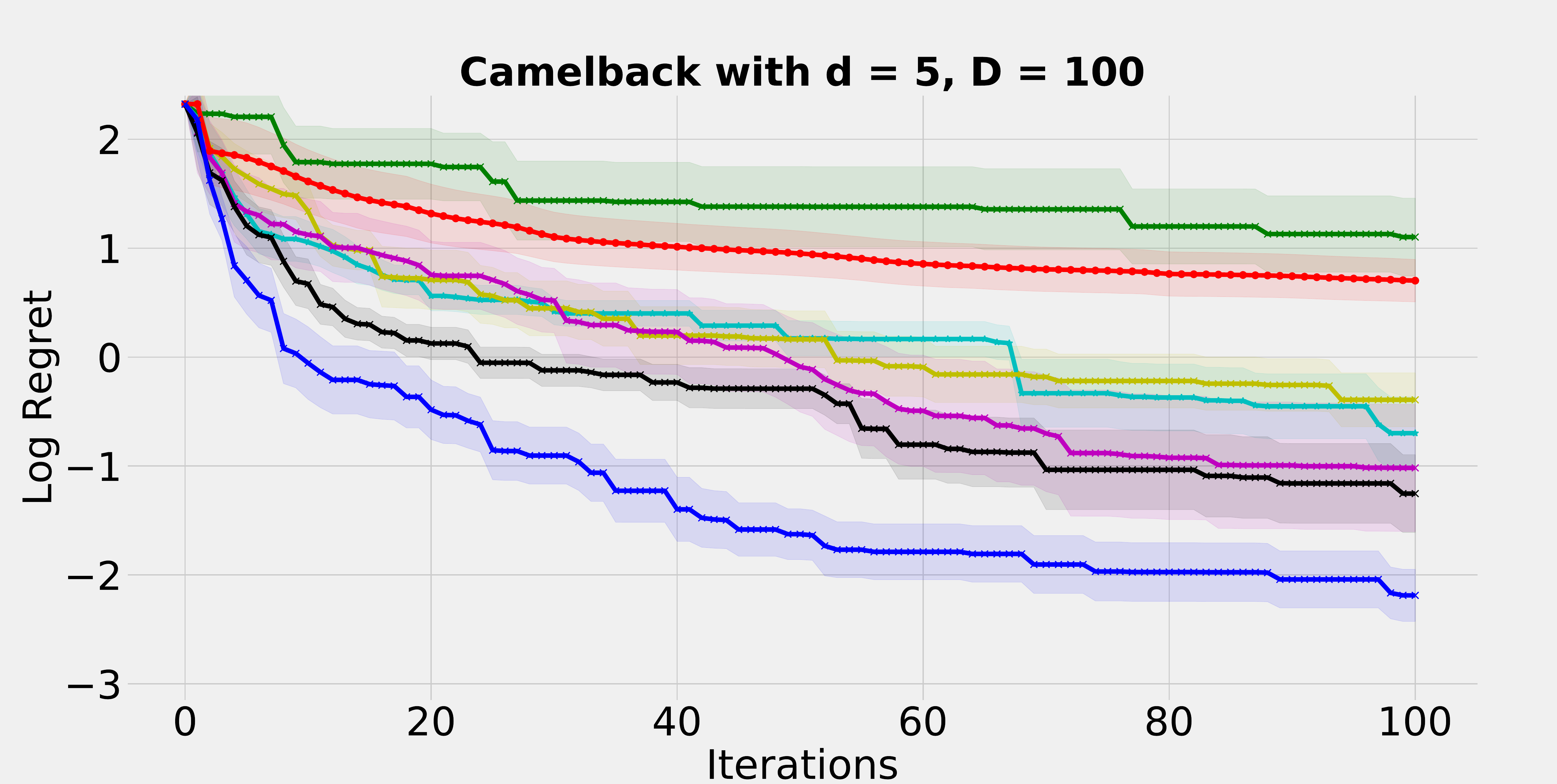}
}
\caption{Comparison of baselines and the proposed MS-UCB method on four standard functions for 20, 50 and 100 input dimensions. For all cases, we set $d = 5$ except LineBO and GP-UCB. The $y$-axis presents log distance to the true optimum (Smaller value is better).}
\label{freq}
\end{figure*}
To evaluate the performance of our MS-UCB, we have conducted a set of experiments involving optimization of four benchmark functions and two real applications. We compare our approach against six baselines: (1) Standard GP-UCB \cite{Srinivas12}, (2) DropoutUCB \cite{Li17}, (3) LineBO \cite{Johannes19} which restricts the search space to a one-dimensional subspace, (4) SRE \cite{Qian16} which uses sequential random embeddings several times sequentially, (5) REMBO \cite{Wang13}, and (6) HeSBO \cite{nayebi19a} which use hashing-enhanced embedded subspaces. Among these baselines, the first three baselines do not make assumptions on the structure of the objective function, SRE assumes a tiny effect for some of the dimensions i.e. $\epsilon$-bounded while REMBO and HeSBO assume a low effective dimensional structure of the function.

For all experiments, we scale the search space of objective functions to convert into $[-1,1]^D$. We implemented our proposed MS-UCB, LineBO, DropoutUCB and SRE in Python 3 using GPy. For all other algorithms we used the authors' reference implementations. For Gaussian process, we used Matern kernel and estimated the kernel hyper-parameters automatically from data. Each algorithm was randomly initialized with 20 points. To maximise the acquisition function, we used LBFGS-B algorithm with $10 \times D$ random starts.
\subsection{Optimization of Benchmark Functions}In this section, we test the algorithms on several optimization benchmark functions: Ackley, Levy, Hyper-Elippsoid and Camelback functions. For the first three functions we assume full dimensionality while for the two-dimensional Camelback function, we simulate a scenario so that the function has a low dimensional effective subspace. For this, we augment the Camelback function with auxiliary dimensions. The Ackley function is widely used for testing optimization algorithms. it is characterized by a nearly flat outer region while the Hyper-Elippsoid function is used to  demonstrate that our
algorithms can effectively work for functions with interacting variables.
%The precise formulas and input space for these functions are \textbf{provided in the supplementary material}.
We evaluate the progress of each algorithm using the log distance to the true optimum, that is, $\text{log}_{10}(f(x^*) - f(x_t))$ where $f(x_t)$ is the function value sampled at iteration $t$. For each test function, we repeat the experiments 30 times. We plot the mean and a confidence bound of one standard deviation across all the runs.
\paragraph{On Scalability:} We perform experiments to empirically assess the scalability of our proposed MS-USB method and the baselines and report the results in Figure 1. We choose $d =5$ for all methods except LineBO for which $d=1$ is a requirement and the GP-UCB which works directly in original $D$-dimensional space. We use $N_0 =1, \alpha = 0$ as parameters for our method. Recall that even with $\alpha = 0$, our method only has a sublinear cumulative regret growth. We study the cases with different input dimensions: $(D =20, 50, 100)$. We can see that GP-UCB performs poorly in most cases on all test functions. The poor performance of GP-UCB is partly due to inaccurate solution of acquisition function optimisation in high dimensions. On the other hand, our method does better than all the baselines scaling well with the dimensions. Our method overcomes the difficulty by optimizing the acquisition function only on a set of $d$-dimensional subspaces, and thus with a limited computation budget, the acquisition function optimisation is performed more accurately. Dropout and SRE that do not provide a vanishing regret scale poorly with high dimensions. LineBO performs poorly except for the Hyper-Ellipsoid function. For the Camelback function with just two effective dimensions, our method is still competitive to other methods that are designed to exploit such structure. Our method achieves a slightly better accuracy than SRE when $D = 50$ and the best one when $D =100$. Thus our proposed MS-UCB outperforms baselines and scales well in high dimensions.
\paragraph{On the number of subspaces:} The number of subspaces per iteration is a control parameter in our method. It is set via $N_0$ and in particular $\alpha$. In Theorem \ref{theorem:1}, we show that $N_0$ does not affect much the regret. We study the effect of these parameters creating four MS-UCB variants: $(N_0 = 1, \alpha = 0)$, $(N_0 = 10, \alpha = 0)$, $(N_0 = 1, \alpha = 1)$ and finally $(N_0 = 1, \alpha = 2)$. We fix $D =100$ and $d =5$.
\begin{figure}[t]
\centering
\subfloat[Hyper-Ellipsoid function]{\includegraphics[scale=1.0,width=.23\textwidth]{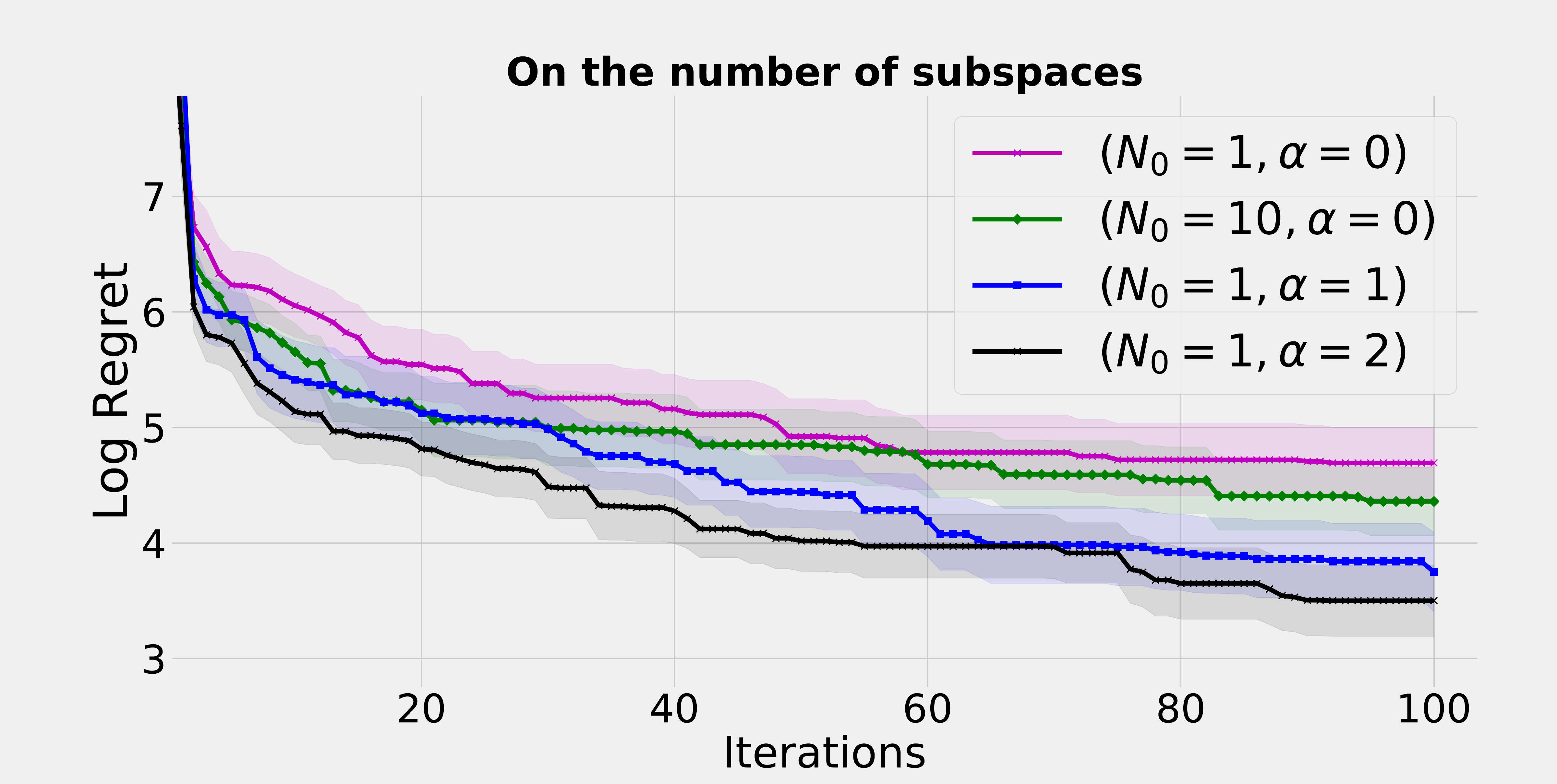}
}\hfill
\subfloat[Ackley function]{\includegraphics[scale=1.0,width=.23\textwidth]{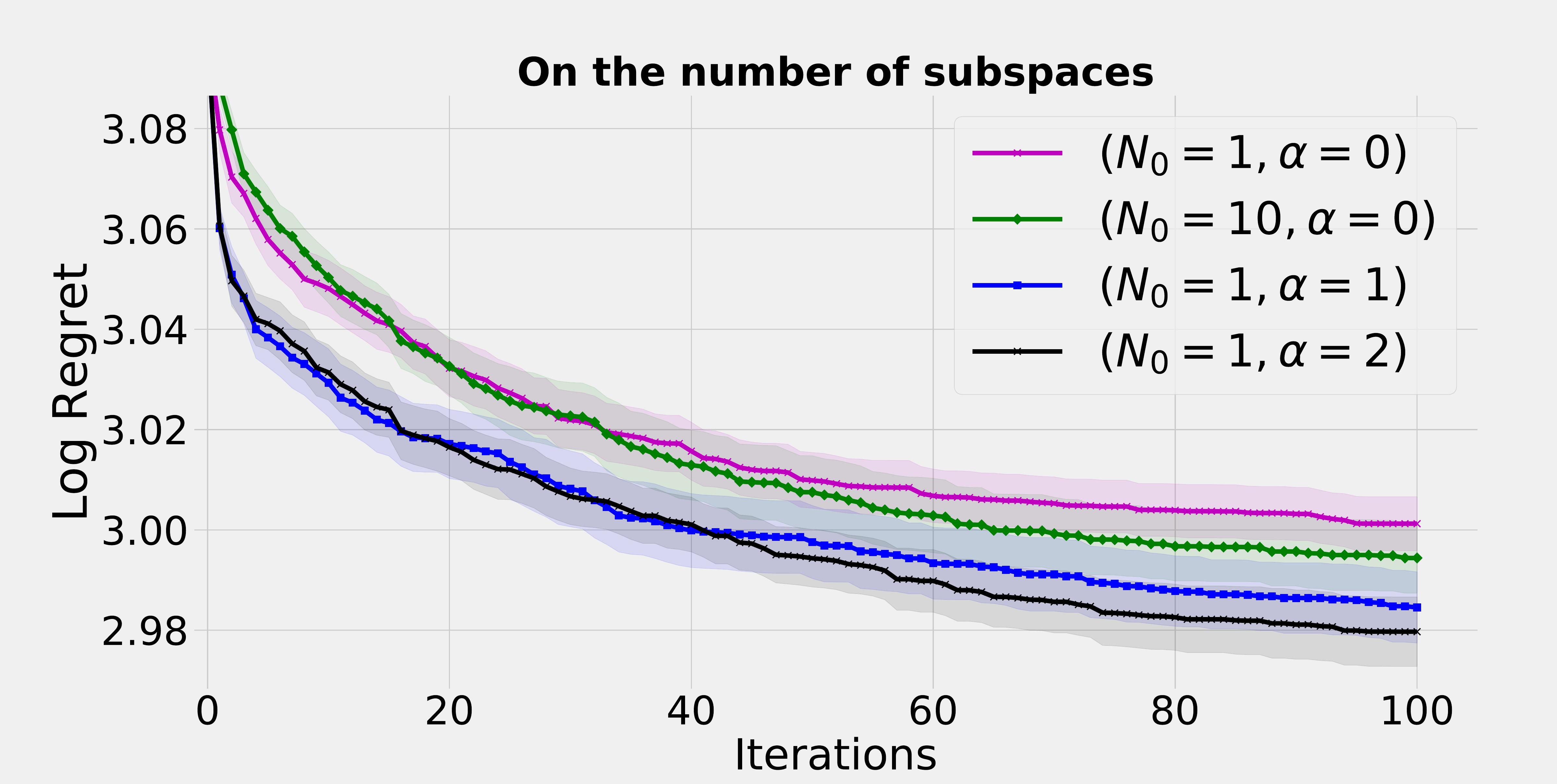}
}
\caption{Log Regret vs iterations for varying number of subspaces ($N_0$ and $\alpha$).}
\label{freq}
\end{figure}
Figure 2 shows a comparison of these variants for Ackley function and Hyper-Ellipsoid function. It indicates if we increase $N_0$ or $\alpha$ (extending the search space), the regret decreases. For the case where $\alpha = 2$, we need to generate more subspaces at each iteration (at a quadratic rate: $t^2$). This demands more computational budget to maximise the acquisition function. It may be the reason why compared to case $(N_0 =1,\alpha =0)$, the regret of case $(N_0 =1, \alpha =2)$ improved only slightly.
\vspace{-1em}
\paragraph{On subspace dimension $d$:} Dimension $d$ represents the dimension of the subspace. It directly affects the computational requirement of acquisition function maximization, and thus the BO optimization performance. As this computational requirement grows exponentially with increasing $d$, it is often computationally hard to find the global maximum in more than 10 dimensions. Thus we only study the value of $d$ up to 10, \emph{i.e.} we study cases: $d = 1, 2, 5, 10$ for problems in $D = 100$. We set $N_0 =1, \alpha = 0$ in our method. Figure 3 shows the performance of our method for different cases of $d$ for 100 dimensional Ackley function and Hyper-Ellipsoid function. It clearly indicates there is a faster convergence rate for a larger $d$, though at a higher computational cost.
\begin{figure}[t]
\centering
\subfloat[Hyper-Ellipsoid function]{\includegraphics[scale=1.0,width=.23\textwidth]{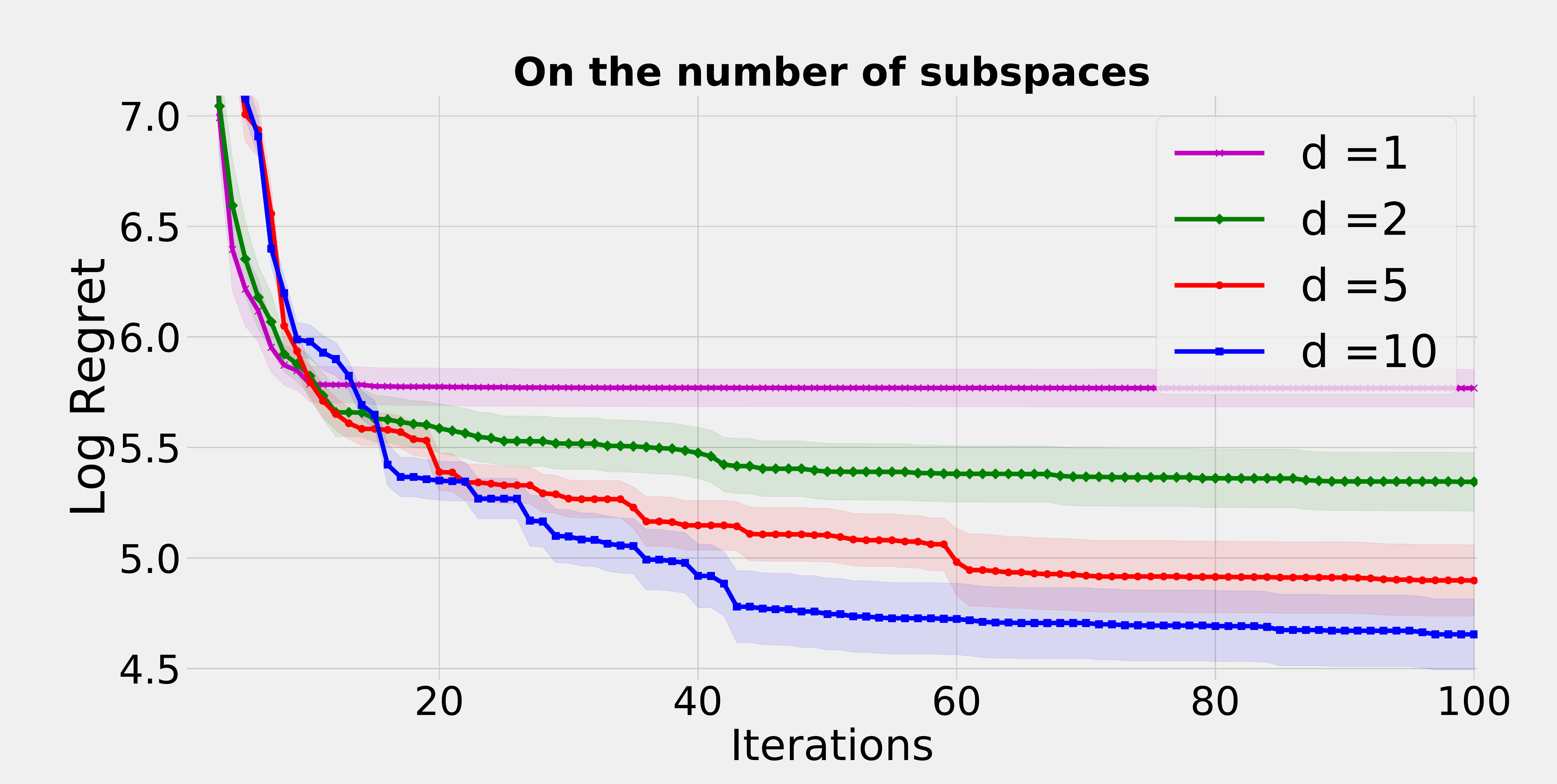}
}\hfill
\subfloat[Ackley function]{\includegraphics[scale=1.0,width=.23\textwidth]{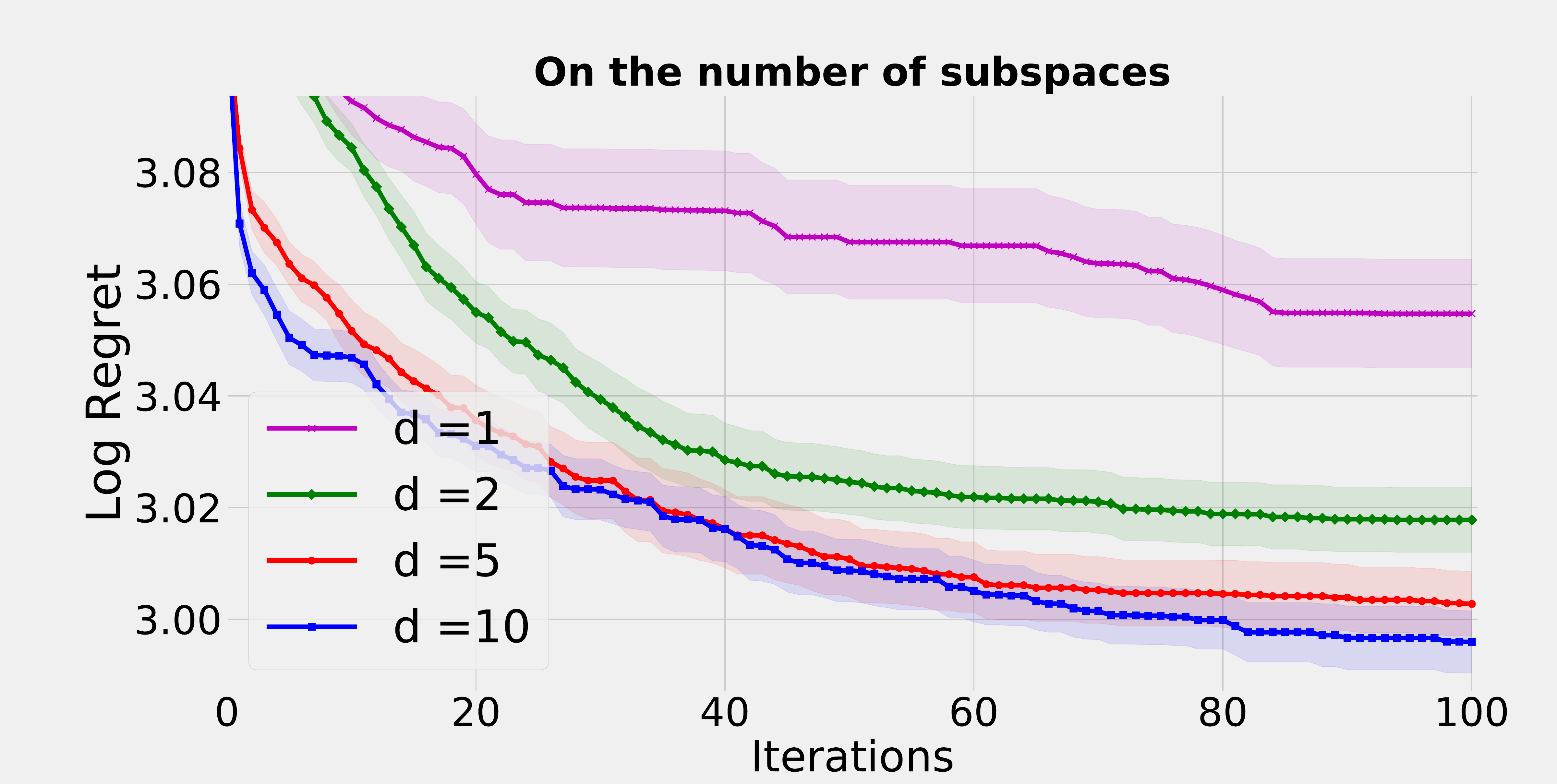}
}
\caption{Log Regret vs iterations for varying dimension $d$.}
\label{freq}
\end{figure}
\vspace{-1em}
\subsection{Learning Parameters of Machine Learning Models}
\paragraph{Neural Network Parameter Search:}
We evaluate our algorithm for learning the parameters of a neural network model as proposed by \cite{nayebi19a}. Here we are given a neural network with one hidden layer having $h$ nodes. The goal is to learn the weights between the hidden layer and the outputs in order to minimize the loss on the MNIST data set (\cite{lecun17}). We denote these weights by  $W_2$. For all experiments, $W_2$ is optimized by Bayesian optimization while the other weights and biases (we denote by $W_1$) are optimized by the Adam algorithm. We refer to \cite{OhGW18} for more details. We try two cases: $h=10$ and $h=50$. Since the network has 10 outputs (for 10 digits of MNIST), the two cases lead to optimisation in dimensions $10\times10=100$ and $50\times10=500$ respectively. Figure \ref{freq} shows the validation loss for REMBO, LineBO, SRE, HeSBO and our proposed MS-UCB. As seen from the figure, MS-UCB clearly outperforms the baselines for both cases.
%We obtain even better results than HeSBO which has been shown to be better than BOCK \cite{OhGW18}.
\begin{figure}[t]
\centering
\subfloat{\includegraphics[scale = 1.0, width=.23\textwidth]{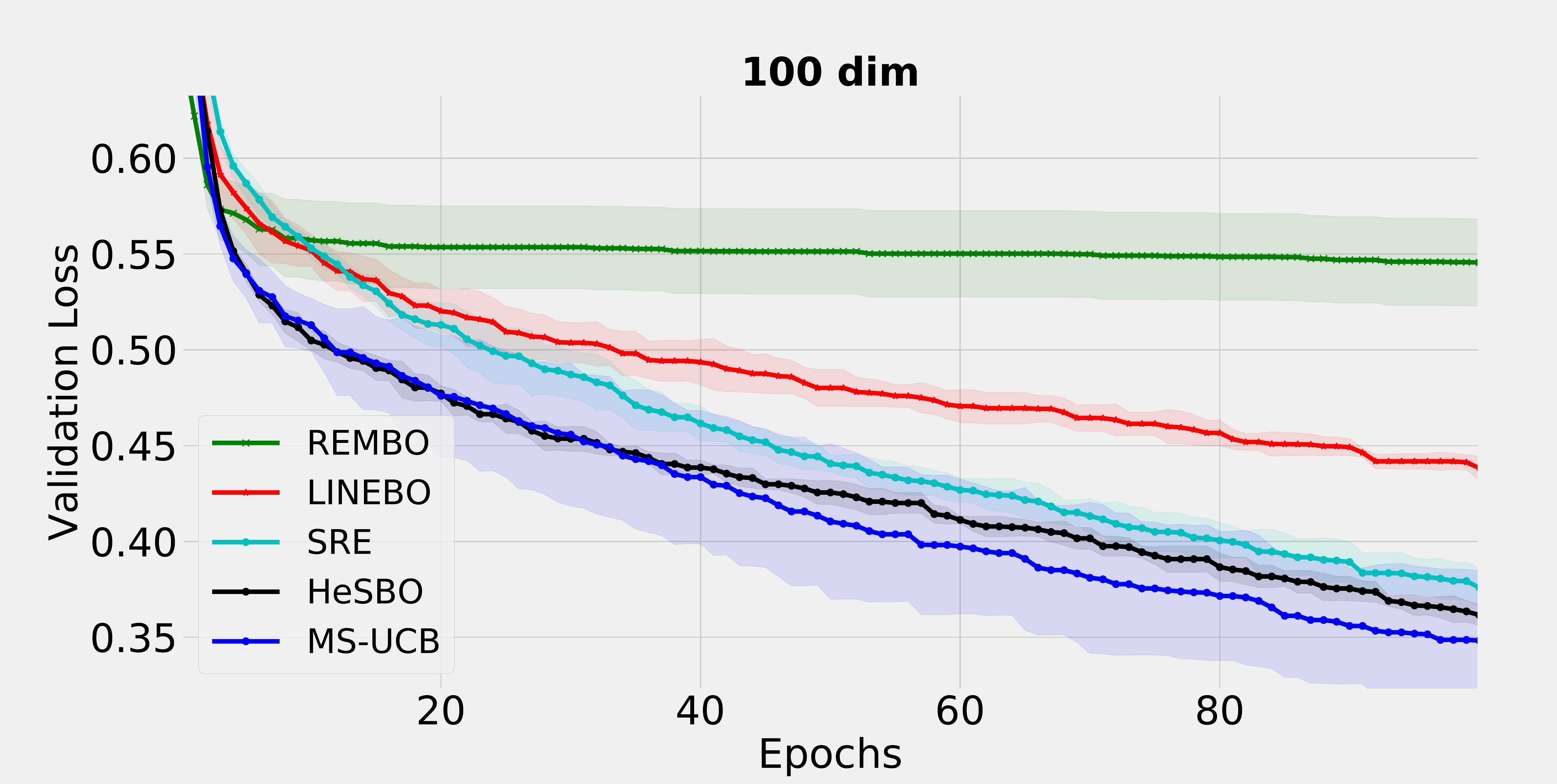}
}\hfill
\subfloat{\includegraphics[scale = 1.0, width=.23\textwidth]{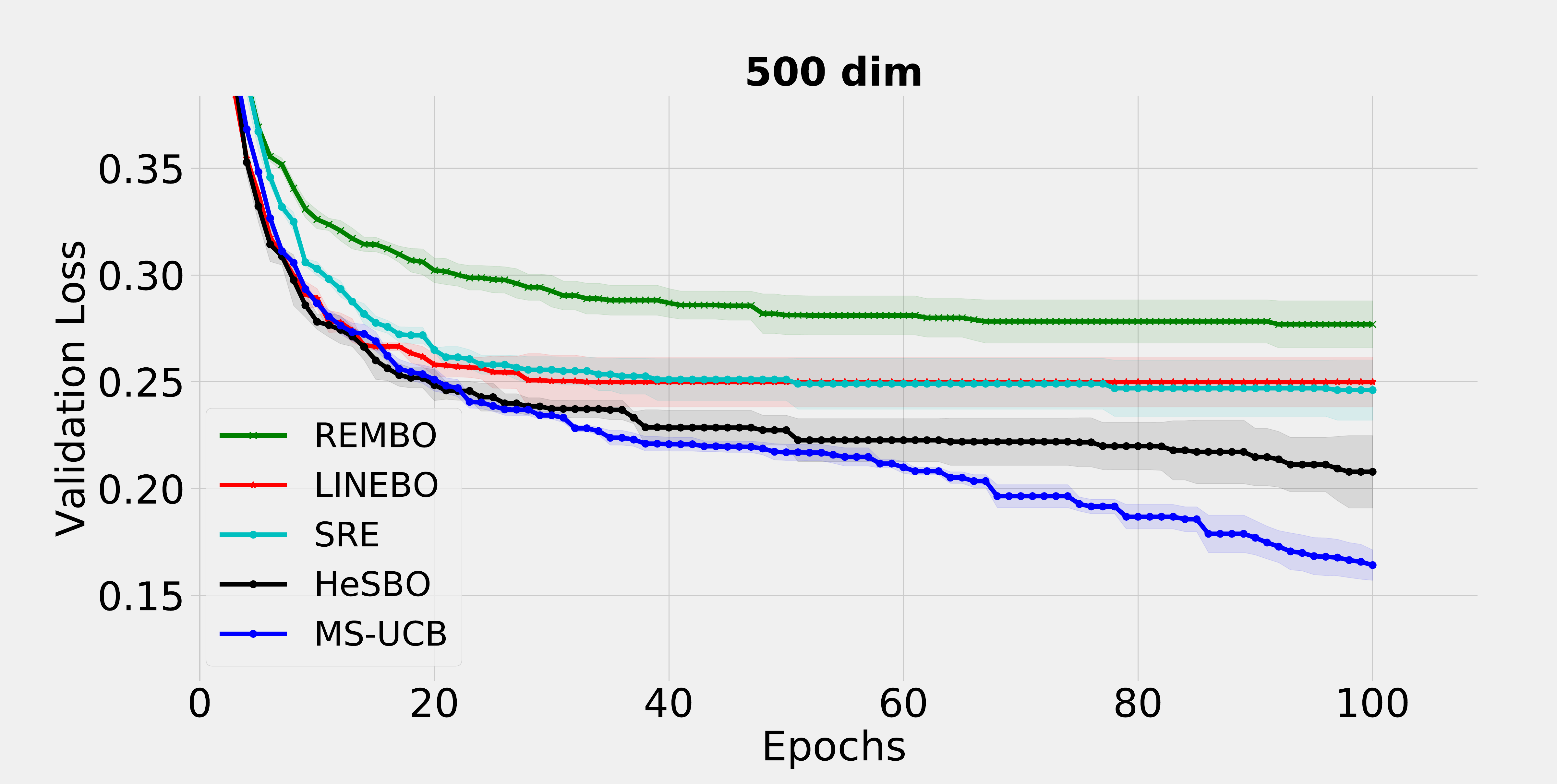}
}
\caption{The neural network benchmark with target dimension $d =10, N_0 =1, \alpha =1$ for two case where $W_2 = 10 \times 10$ and $W_2 = 50 \times 10$. For all experiments, $W_2$ is optimized by Bayesian optimization while other weights and biases are optimized by Adam algorithm.}
\label{freq}
\end{figure}
\paragraph{Learning Classification Model with Ramp Loss:}
We also test our algorithm to optimize the parameters of a classification model with a nonconvex Ramp loss, following \cite{Qian16}. The task is to find a vector $w$ and a scalar $b$ to minimize
$f(w, b) = \frac{1}{2}||w||_2^2 + C\sum_{l=1}^{L}R_s(y_l(w_Tv_l + b))$, where $v_l$ are the training instances and $y_l \in \{-1, +1\}$ are the corresponding labels. $Rs(u)=H_1(u)-H_s(u)$ with $s<1$ where $H_s(u) = \text{max}(0,s-u)$.  We use the \emph{Gisette} dataset from the UCI repository \cite{Newman98} with dimension $D= 5000$. We study the effectiveness of all algorithms fixing the hyper-parameters to $s = 0$ and $C \in \{1, 2, 5, 10\}$. We set $d = 10$ for REMBO, SRE, HeSBO and our method. For SRE, we set the number of sequential random embeddings $m=5$ as suggested in \cite{Qian16}. For our method, we consider the variant MS-UCB-1,1).
\begin{figure}[t]
\centering
\subfloat{\includegraphics[scale = 1.0, width=.23\textwidth]{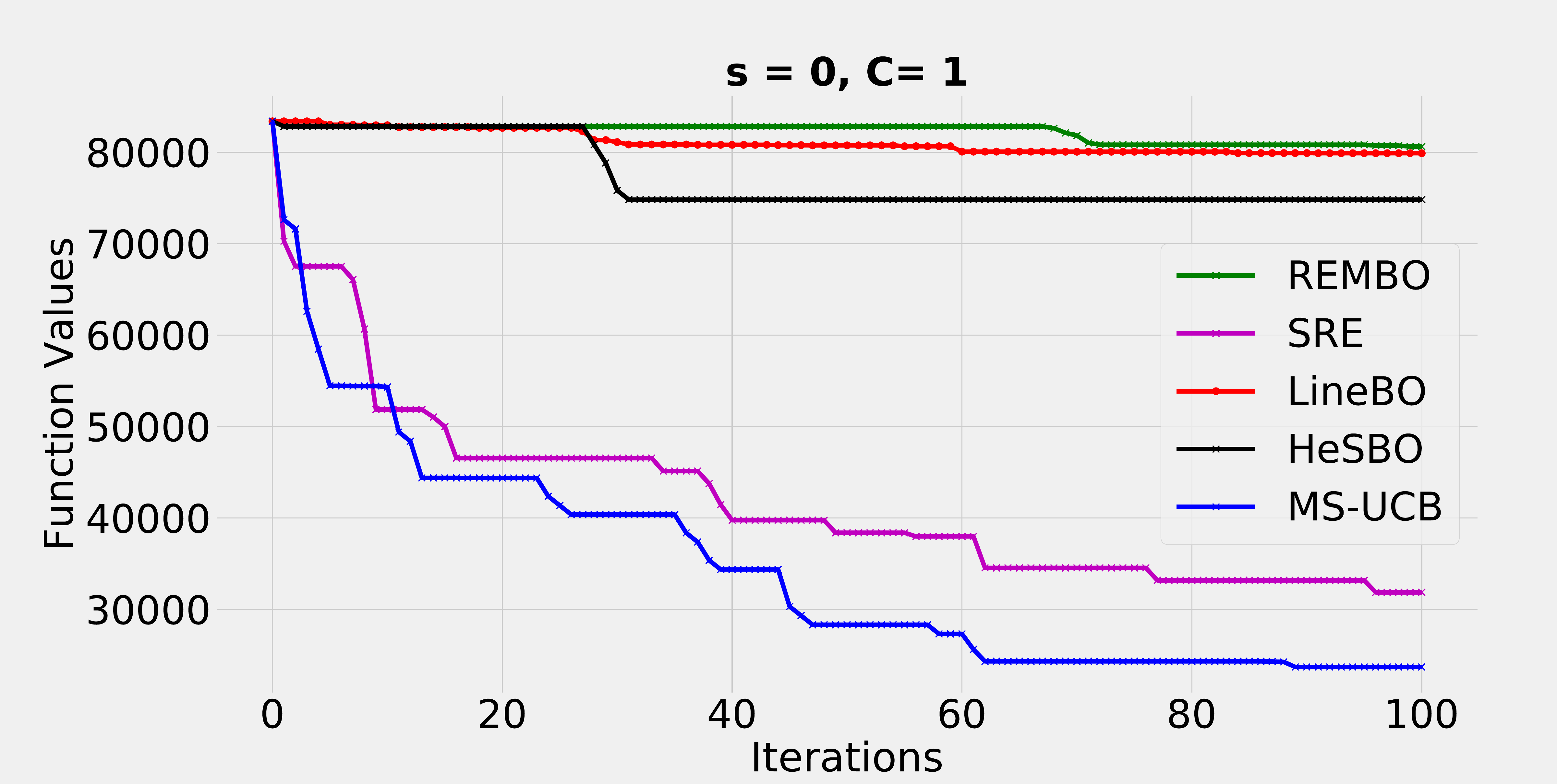}
}\hfill
\subfloat{\includegraphics[scale = 1.0, width=.23\textwidth]{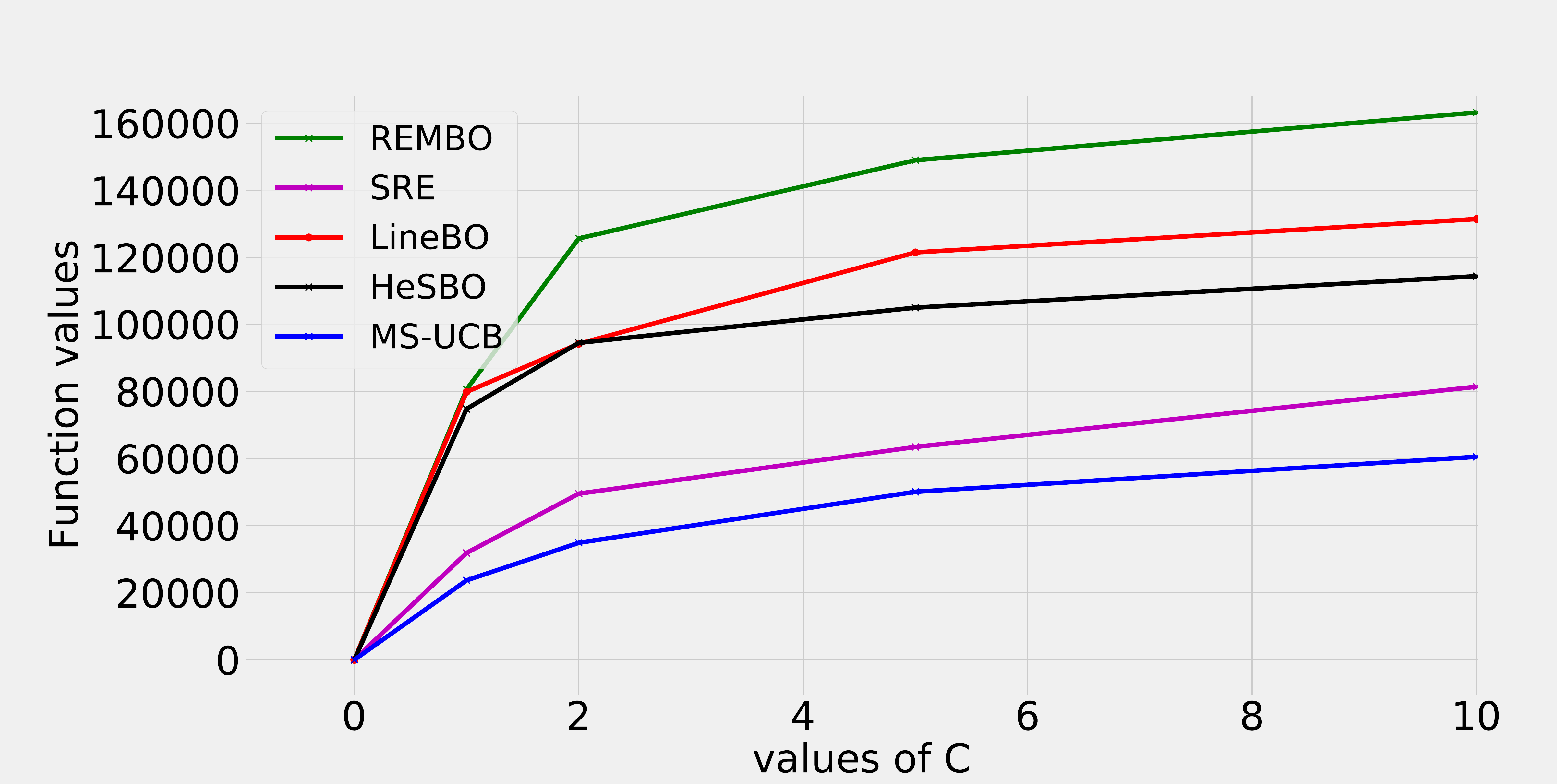}
}
\caption{Left panel shows the progress of function values vs time for $s =0$ and $C = 1$. Right panel shows the comparison of achieved loss function values against the hyperparameter $C$ of the Ramp loss.}
\label{freq}
\end{figure}
As shown in Figure 5, our method has consistently the best performance across different settings of $C$, followed by SRE. REMBO, LineBO, HESBO perform poorly for this application. This shows the effectiveness of our method.
\section{Conclusion}
We propose a  scalable Bayesian optimisation to optimise expensive blackbox functions in high dimensions. Unlike many previous existing methods, our algorithm does not make any additional assumption about the structure of the function (e.g. low effective dimension and additivity). In our method the acquisition function only requires maximisation on a discrete set of low dimensional subspaces embedded in the original high-dimensional search space and thus does not have high computational requirements for maximising acquisition functions. By varying the number of low dimensional subspaces, our algorithm has a flexibility to trade the optimisation convergence rate with the computational budget. This feature is important for many practical applications. We analyse our algorithm theoretically and show that irrespective of the number of subspaces, our algorithm always has a sublinear growth rate for cumulative regret. Further, we provide a regime for the number of subspaces where our algorithm has both tighter regret bound as well as lower computational requirement compared to the GP-UCB algorithm of \cite{Srinivas12}. We perform experiments for many optimisation problems in high dimensions and show that the sample efficiency of our algorithm is better than the existing methods given the same computational budget for optimising acquisition function.
\section{Acknowledgments}
This research was partially funded by the Australian Government through the Australian Research Council (ARC). Prof Venkatesh is the recipient of an ARC Australian Laureate Fellowship (FL170100006).
\bibliography{Bibliography-File}
\bibliographystyle{aaai}

\title{Supplementary Material}
\section{Proof for Lemma 2}
Given $\beta_t^{0}$, we have $\mathbb{P}[|f(x_t) - \mu_{t-1}(x_t)| > \sqrt{\beta_t^0}] \le e^{\beta_t^0/2}$. Since $e^{\beta_t^0/2} = \frac{6\delta}{\pi^2t^2}$. Using the union bound for $t\ \in \mathbb{N}$, we have $\mathbb{P}[|f(x_t) - \mu_{t-1}(x_t)| \le \sqrt{\beta_t^0}] \ge 1 - \frac{\pi^2\delta}{6} > 1- \delta$.
\section{Proof for Lemma 3}
We use the idea of proof of Lemma 5.7 in \cite{Srinivas12} with several modifications to adapt our method with low-dimensional subspaces. We consider the distance of any two points on subspace $\mathcal S(A, z^*_t)$:
$||(Ay + Bz^*_t) - (Ay' + Bz^*_t)||_1$, where $y, y' \in \mathcal Y$. We have matrix $A = \begin{bmatrix}
\mathbf{0}\\
\mathbf{I}
\end{bmatrix}$, thus  $||(Ay + Bz^*_t) - (Ay' + Bz^*_t)||_1 = \sum_{i =1}^{D} |[A(y-y')]_i| = \sum_{i =1}^{D-d} |[\mathbf{0}^{(D-d) \times d}(y-y')]_i)| + \sum_{i =D-d + 1}^{D} |[\mathbf{I}^{d \times d}(y-y')]_i| = \sum_{i =D-d + 1}^{D} |[(y-y')]_i|$, where we denote $[y]_i$ is the $i$th coordinate of $y$. However, since for every $1 \le i \le D-d, 1\le j \le d$ and $|[y-y']_i| \le 2$ for every $y, y' \in [-1,1]^d$, $\sum_{i =D-d + 1}^{D} |[(y-y')]_i| \le 2d$. Thus, $||(Ay + Bz^*_t) - (Ay' + Bz^*_t)||_1  \le 2d$.

On the other hand, by Assumption 1 and the union bound, for $\forall x, x' \in \mathcal S(A, z^*_t)$ and thus $ \in \mathcal X$, we have
$$|f(x) - f(x')| \le L||x -x'||_1$$
with probability greater than $1 - Dae^{-L^2/b^2}$.
Thus, by choosing $Dae^{-L^2/b^2} = \delta/2$, we have
\begin{eqnarray}
|f(x) - f(x')| \le b\sqrt{log(\frac{2Da}{\delta})}||x -x'||_1
\label{eq:100}
\end{eqnarray}
with probability greater than $1 - \delta/2$.

Now, we move to subspace $\mathcal S(A, z^*_t)$. On $\mathcal S(A, z^*_t)$,  we construct a discretization $F_t$ of size $(\tau_t)^d$ dense enough such that for any $x \in F_t$ $$||x - [x]_t]||_1 \le \frac{2d}{\tau_t}$$
where $[x]_t$ denotes the closest point in $F_t$ to $x$. We can perform this because the distance between any two point in $\mathcal S(A, z^*_t)$ is bounded by $2d$ in the above proof.

By this manner, with probability greater than $1 - \delta/2$, we have
\begin{eqnarray*}
|f(x) - f([x]_t)| & \le &  \frac{b\sqrt{log(\frac{2Da}{\delta})}||x- [x]_t||_1}{\tau_t} \\
& \le &  b\sqrt{log(\frac{2Da}{\delta})}\frac{2d}{\tau_t} \\
& < & \frac{2bd\sqrt{log(\frac{2Da}{\delta})}}{\tau_t}
\end{eqnarray*}
Let $\tau_t = 2bd\sqrt{log(\frac{2Da}{\delta})}t^2$. Thus, $|F_t| = (2bd\sqrt{log(\frac{2Da}{\delta})}t^2)^d$. We obtain
\begin{eqnarray}
|f(x) - f([x]_t)| \le \frac{1}{t^2}
\label{eq:102}
\end{eqnarray}
with probability $1- \delta/2$ for any $x \in F_t$.

Similar to Lemma 5.6 of \cite{Srinivas12}, if we set $\beta_{t}^1 = 2log(|F_t|\frac{\pi^2t^2}{3\delta}) = 2log(\frac{\pi^2t^2}{3\delta}) +  2dlog(2bd\sqrt{log(\frac{2Da}{\delta})}t^2)$, we have with probability $1 -\delta/2$, we have
\begin{eqnarray}
f(x) \le \mu_{t-1}(x) + \sqrt{\beta_{t}^1}\sigma_{t-1}(x)
\label{eq:103}
\end{eqnarray}
for any $x \in F_t$ and any $t \ge 1$. Thus, combining Eq(\ref{eq:102}) and Eq(\ref{eq:103}), if we let $x' = [Ay^* + Bz^*_t]_t$ which is the closest point in $F_t$ to $Ay^* + Bz^*_t$, we have
\begin{eqnarray*}
f^{max}_{\mathcal S_t^0} & = & f(Ay^* + Bz^*_t) \\
&\le& f([Ay^* + Bz^*_t]_t) + \frac{1}{t^2} \\
&\le& \mu_{t-1}(x') + \sqrt{\beta_{t}^{1}}\sigma_{t-1}(x') + \frac{1}{t^2}
\end{eqnarray*}
with probability $1 - \delta$. Note that since $x' \in F_t$, $x' \in \mathcal S(A, z^*_t)$.
\section{Proof for Lemma 4}
Given any $x \in \mathcal S_t^0$. Without loss of generality, we assume that $x = Ay^* + Bz$, where $z \in Z_t$. By Assumption 1 and the union bound, we have that for $\forall \mathbf{x}$,
$$|f(x^*) - f(x)| \le L||x^* -x)||_1$$
with probability greater than $1 - Dae^{-L^2/b^2}$.
Since $x^* = Ay^* + Bz^*$ and $x = Ay^* + Bz$, $||x^* -x||_1 = ||Ay^* + Bz^* - (Ay^* + Bz)||_1 = ||z - z^*||_1$. Thus,
$$|f(x^*) - f(x)| \le L||z^* -z||_1$$
with probability greater than $1 - Dae^{-L^2/b^2}$. Set $Dae^{-L^2/b^2} = \delta/2$. Thus,
\begin{eqnarray}
|f(x^*) - f(x)| \le b\sqrt{log(\frac{2Da}{\delta})}||z^* -z||_1
\label{eq:100}
\end{eqnarray}
with probability greater than $1 - \delta/2$

We denote the volume of space $\mathcal Z$ be $Vol(\mathcal Z)$. Since $z  \in [-1, 1]^{D-d}$, $Vol(\mathcal Z) = 2^{D-d}$. For $\theta > 0$, let $V_1 = \{z \in \mathcal Z = [-1, 1]^{D-d}| ||z-z^*||_1 \le \theta\}$. In the case where $V_1$ is within $\mathcal Z$, the volume $Vol(V_1)$ is the volume of a $(D-d)$-dimensional ball of radius $\theta$ with center $z^*$ and thus, $Vol(V_1) = \frac{(4\theta)^{D-d}}{\Gamma(D-d +1)}$ where $\Gamma$ is Leonhard Euler's Gamma function. The function $\Gamma(k) = (k-1)!$ for any positive integer $k$. Here, we used the formula in \cite{Xian05} for the volume of the $(D-d)$-dimensional ball in $L^1$ norms. In the worst case where $z^*$ is at the boundary of $\mathcal Z$, more precisely either $[z^*]_i = 1$ or $[z^*]_i = -1$ for any $1 \le i \le D-d$, $Vol(V_1)$ is at least $\frac{(2\theta)^{D-d}}{\Gamma(D-d +1)}$. This is because the volume halved by each dimension: $\mathcal{P}[|[z]_i - [z^*]_i| \le \theta] = \frac{\theta}{2}$ when $[z^*]_i = 1$ or $-1$. Thus, in every case, we have $Vol(V_1) \ge \frac{(2\theta)^{D-d}}{\Gamma(D-d +1)}$.

Thus, $\mathbb{P}[||z- z^*||_1 \le \theta] = \frac{Vol(V_1)}{Vol(\mathcal Z)} \ge \frac{\frac{(2\theta)^{D-d}}{\Gamma(D-d +1)}}{2^{D-d}} = \frac{\theta^{D-d}}{\Gamma(D-d +1)}$. However, $\mathbb{P}[||z- z^*||_1 > \theta] = 1 - \mathbb{P}[||z- z^*||_1 \le \theta]$. Thus, $\mathbb{P}[||z- z^*||_1 > \theta] \le 1 - \frac{\theta^{D-d}}{\Gamma(D-d +1)} \le e^{- \frac{\theta^{D-d}}{\Gamma(D-d +1)}}$, where we use the inequality $1-x \le e^{-x}$ for the last step.

Thus,
\begin{eqnarray*}
\prod_{z \in \mathcal Z_t} \mathbb{P}[||z- z^*||_1 > \theta] & \le & \prod_{z \in \mathcal Z_t} e^{- \frac{\theta^{D-d}}{\Gamma(D-d +1)}}\\
& = & e^{-|Z_t|\frac{\theta^{D-d}}{\Gamma(D-d +1)}}
\end{eqnarray*}
On the other hand, $ \mathbb{P}[||z_t^*- z^*||_1 \le \theta] = 1 - \mathbb{P}[||z_t^*- z^*||_1 > \theta]   = 1-  \mathbb{P}_{z^i_t \in \mathcal Z_t, 1 \le i \le |Z_t|}[||z_t^1- z^*||_1 > \theta \wedge...\wedge  ||z_t^{|Z_t|}- z^*||_1 > \theta]  = 1- \prod_{z^i_t \in \mathcal Z_t, 1 \le i \le |Z_t|} \mathbb{P}[||z_t^i- z^*||_1 > \theta]$. This is because $z^i_{t}$ are independent due to $z^i_t$ sampled uniformly at random in $\mathcal Z$. Thus, we have
\begin{eqnarray*}
\mathbb{P}[||z_t^*- z^*||_1  \le \theta] \ge 1 - e^{-|Z_t|\frac{\theta^{D-d}}{\Gamma(D-d +1)}}
\end{eqnarray*}
Now we set $e^{-|Z_t|\frac{\theta^{D-d}}{\Gamma(D-d +1)}}= \delta/2$ to obtain $\theta = (\Gamma(D-d +1))^{\frac{1}{D-d}}(\frac{1}{|Z_t|}log(\frac{2}{\delta}))^{\frac{1}{D-d}}$. Thus,
\begin{eqnarray}
||z_t^*- z^*||_1  \le (\Gamma(D-d +1))^{\frac{1}{D-d}}(\frac{1}{|Z_t|}log(\frac{2}{\delta}))^{\frac{1}{D-d}}
\label{eq:101}
\end{eqnarray}
with probability $1 - \delta/2$. Combining Eq(\ref{eq:100}) and Eq(\ref{eq:101}), we have
\begin{eqnarray}
|f(x^*) - f(x)| \le v_0(\frac{1}{|Z_t|}log(\frac{2}{\delta}))^{\frac{1}{D-d}}
\end{eqnarray}
with probability $1 - \delta$, where $v_0 = b\sqrt{log(\frac{2Da}{\delta})}(\Gamma(D-d +1))^{\frac{1}{D-d}}$. Thus, the lemma holds.

\section{Proof for Theorem 1}
Now, we combine the results from Lemmas 2, 3 and 4 to obtain a bound on $r_t$ as stated in the following Lemma.
\begin{proposition}[Bounding $r_t$]
Pick a $\delta \in (0,1)$ and set $\beta_t = 2log(\frac{\pi^2t^2}{\delta}) +  2dlog(2bd\sqrt{log(\frac{6Da}{\delta})}t^2)$ and set $v_0 = 2b\sqrt{log(\frac{2Da}{\delta})}(\Gamma(D-d +1))^{\frac{1}{D-d}}$, where $\Gamma(D-d +1) = (D-d+1)!$ . Then we have
\begin{eqnarray}
r_t \le 2\beta_t^{1/2}\sigma_{t-1}(x_t) + \frac{1}{t^2} + v_0(\frac{1}{|Z_t|}log(\frac{6}{\delta}))^{\frac{1}{D-d}}
\end{eqnarray}
holds with probability $\ge 1 - \delta$.
\label{lem:5.8}
\end{proposition}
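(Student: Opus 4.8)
The plan is to obtain the bound by feeding the three lemmas into the additive decomposition $r_t = \text{Term 1} + \text{Term 2} - \text{Term 3}$ set up just before the statement, after first rescaling the failure probabilities so that the three events can be intersected. Concretely, I would invoke Lemma~\ref{lem:5.4}, Lemma~\ref{lem:5.3} and Lemma~\ref{lem:5.1} each with $\delta$ replaced by $\delta/3$; each then holds with probability at least $1-\delta/3$, so by a union bound all three hold simultaneously on an event of probability at least $1-\delta$, and I would work on that event for the rest of the argument. A direct check shows the bookkeeping is consistent: with $\delta\mapsto\delta/3$ the constant $\beta_t^1$ of Lemma~\ref{lem:5.3} becomes exactly the $\beta_t$ stated in the proposition (and used in $a_t$), the constant $\beta_t^0$ of Lemma~\ref{lem:5.4} satisfies $\beta_t^0\le\beta_t$, the $\log(2/\delta)$ of Lemma~\ref{lem:5.1} becomes $\log(6/\delta)$, and the logarithmic factor inside $v_0$ picks up the stated form — all routine.

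Next I would assemble the pieces. Lemma~\ref{lem:5.1} bounds Term~1 by $v_0\big(\tfrac{1}{|Z_t|}\log(\tfrac6\delta)\big)^{1/(D-d)}$ outright. For $\text{Term 2}-\text{Term 3}$, Lemma~\ref{lem:5.3} gives a point $\mathbf{x}'\in\mathcal S(\mathbf{A},\mathbf{z}^*_t)$ with $f^{max}_{\mathcal S_t^0}\le \mu_{t-1}(\mathbf{x}')+\sqrt{\beta_t}\,\sigma_{t-1}(\mathbf{x}')+t^{-2}$ (replacing $\sqrt{\beta_t^1}$ by the equal $\sqrt{\beta_t}$), while Lemma~\ref{lem:5.4} gives $f(\mathbf{x}_t)\ge \mu_{t-1}(\mathbf{x}_t)-\sqrt{\beta_t}\,\sigma_{t-1}(\mathbf{x}_t)$ (using $\sqrt{\beta_t^0}\le\sqrt{\beta_t}$). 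Subtracting,
\[
\text{Term 2}-\text{Term 3}\;\le\;\mu_{t-1}(\mathbf{x}')-\mu_{t-1}(\mathbf{x}_t)+\sqrt{\beta_t}\big(\sigma_{t-1}(\mathbf{x}')+\sigma_{t-1}(\mathbf{x}_t)\big)+\tfrac{1}{t^2}.
\]

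The one non-routine step — and the step I expect to be the main obstacle — is cancelling the posterior-mean difference $\mu_{t-1}(\mathbf{x}')-\mu_{t-1}(\mathbf{x}_t)$ via the optimality of $\mathbf{x}_t$, because this requires knowing that the auxiliary point $\mathbf{x}'$ produced by the discretization in Lemma~\ref{lem:5.3} actually belongs to the restricted feasible set $\mathcal X_t$ over which $\mathbf{x}_t$ is selected. I would argue this as follows: Lemma~\ref{lem:5.3} places $\mathbf{x}'$ inside the specific subspace $\mathcal S(\mathbf{A},\mathbf{z}^*_t)$, and since $\mathbf{z}^*_t=\mathrm{argmin}_{\mathbf{z}\in Z_t}\|\mathbf{z}-\mathbf{z}^*\|_1\in Z_t$, this subspace is one of the subspaces making up $\mathcal X_t=\{\mathcal S(\mathbf{A},\mathbf{z}^i)\mid \mathbf{z}^i\in Z_t\}$; hence $\mathbf{x}'\in\mathcal X_t$. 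Because $\mathbf{x}_t=\mathrm{argmax}_{\mathbf{x}\in\mathcal X_t}a_t(\mathbf{x})$ with $a_t(\mathbf{x})=\mu_{t-1}(\mathbf{x})+\sqrt{\beta_t}\,\sigma_{t-1}(\mathbf{x})$, we get $\mu_{t-1}(\mathbf{x}')+\sqrt{\beta_t}\,\sigma_{t-1}(\mathbf{x}')\le\mu_{t-1}(\mathbf{x}_t)+\sqrt{\beta_t}\,\sigma_{t-1}(\mathbf{x}_t)$, i.e. $\mu_{t-1}(\mathbf{x}')-\mu_{t-1}(\mathbf{x}_t)\le\sqrt{\beta_t}\big(\sigma_{t-1}(\mathbf{x}_t)-\sigma_{t-1}(\mathbf{x}')\big)$. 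Substituting this into the displayed inequality makes the $\sigma_{t-1}(\mathbf{x}')$ terms cancel and leaves $\text{Term 2}-\text{Term 3}\le 2\sqrt{\beta_t}\,\sigma_{t-1}(\mathbf{x}_t)+t^{-2}$. Adding the Term~1 bound then yields
\[
r_t\;\le\;2\sqrt{\beta_t}\,\sigma_{t-1}(\mathbf{x}_t)+\tfrac{1}{t^2}+v_0\Big(\tfrac{1}{|Z_t|}\log\tfrac{6}{\delta}\Big)^{1/(D-d)},
\]
valid on the probability-$(1-\delta)$ event, which is exactly the claimed bound.
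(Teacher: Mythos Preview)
Your proposal is correct and follows essentially the same approach as the paper: both apply Lemmas~\ref{lem:5.4}, \ref{lem:5.3}, \ref{lem:5.1} with $\delta/3$ each, use $\beta_t^0\le\beta_t$ and $\beta_t^1=\beta_t$, and exploit the key fact that $\mathbf{x}'\in\mathcal S(\mathbf{A},\mathbf{z}^*_t)\subseteq\mathcal X_t$ (since $\mathbf{z}^*_t\in Z_t$) so that $a_t(\mathbf{x}')\le a_t(\mathbf{x}_t)$. The only cosmetic difference is that the paper first collapses Term~2 to $a_t(\mathbf{x}_t)+t^{-2}$ and then computes $a_t(\mathbf{x}_t)-f(\mathbf{x}_t)\le 2\sqrt{\beta_t}\,\sigma_{t-1}(\mathbf{x}_t)$, whereas you subtract first and then invoke optimality---the algebra is identical.
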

\begin{proof}
We use $\frac{\delta}{3}$ for Lemmas 2,3 and 4 so that these events hold simultaneously with probability greater than $1- \delta$. Formally, by Lemma 2 using $\frac{\delta}{3}$:
\begin{eqnarray*}
f(x_t) \ge \mu_{t-1}(x_t) - \sqrt{2log(\frac{\pi^2t^2}{\delta})}\sigma_{t-1}(x_t)
\end{eqnarray*}
holds with probability $\ge 1 - \frac{\delta}{3}$. As a result,
\begin{eqnarray}
f(x_t) & \ge &  \mu_{t-1}(x_t) - \sqrt{\beta_t^0}\sigma_{t-1}(x_t) \\
& > & \mu_{t-1}(x_t) - \sqrt{\beta_t}\sigma_{t-1}(x_t) \label{eq:50}
\end{eqnarray}
holds with probability $\ge 1 - \frac{\delta}{3}$.

By Lemma 3 using $\frac{\delta}{3}$, there exists a $x' \in \mathcal S(A, z^*_t)$ such that
\begin{eqnarray*}
f^{max}_{\mathcal S_t^0} \le \mu_{t-1}(x') + \sqrt{\beta_t^{1}}\sigma_{t-1}(x') + \frac{1}{t^2}
\end{eqnarray*}
holds with probability $\ge 1 - \frac{\delta}{3}$. As a result,
\begin{eqnarray*}
f^{max}_{\mathcal S_t^0} & \le & \mu_{t-1}(x') + \sqrt{\beta_t^{1}}\sigma_{t-1}(x') + \frac{1}{t^2} \\
f^{max}_{\mathcal S_t^0} & \le & \mu_{t-1}(x') + \sqrt{\beta_t}\sigma_{t-1}(x') + \frac{1}{t^2} \\
& = & a_t(x') + \frac{1}{t^2}
\end{eqnarray*}
Since $x_t = \text{argmax}_{x \in \mathcal X_t}a_t(x)$ (defined in Algorithm 1) and $x' \in \mathcal S(A, z^*_t) \subseteq \mathcal X_t$, we have $a_t(x') \le a_t(x_t)$. Thus,
\begin{eqnarray}
 f^{max}_{\mathcal S_t^0} \le a_t(x_t) + \frac{1}{t^2} \label{eq:51}
\end{eqnarray}
holds with probability $\ge 1 - \frac{\delta}{3}$.

By Lemma 4 using $\frac{\delta}{3}$:
\begin{eqnarray}
f(x^*)- f_{\mathcal S_t^0}^{max} \le n(\frac{1}{|Z_t|}log(\frac{6}{\delta}))^{\frac{1}{D-d}} \label{eq:52}
\end{eqnarray}
holds with probability $\ge 1 - \frac{\delta}{3}$.

Combing Eq(\ref{eq:50}), Eq(\ref{eq:51}) and Eq(\ref{eq:52}), we have
\begin{eqnarray*}
r_t & = & f(x^*) - f(x_t)\\
& = & \underbrace{f(x^*) - f^{max}_{\mathcal S_t^0}}_\text{Term 1} +  \underbrace{f^{max}_{\mathcal S_t^0}}_\text{Term 2} - \underbrace{f(x_t)}_\text{Term 3}\\
& \le &  n(\frac{1}{|Z_t|}log(\frac{6}{\delta}))^{\frac{1}{D-d}}  +  \underbrace{f^{max}_{\mathcal S_t^0}}_\text{Term 2} - \underbrace{f(x_t)}_\text{Term 3}\\
& \le & n(\frac{1}{|Z_t|}log(\frac{6}{\delta}))^{\frac{1}{D-d}} + \frac{1}{t^2} + a_{t}(x_t)  - f(x_t) \label{eq:2}\\
& \le & n(\frac{1}{|Z_t|}log(\frac{6}{\delta}))^{\frac{1}{D-d}} + \frac{1}{t^2} + 2(\beta_t)^{1/2}\sigma_{t-1}(x_t)
\label{eq:3}
\end{eqnarray*}
holds with probability $\ge 1 - \delta$.
\end{proof}

The next two lemmas are used to bound $R_T$.
\begin{proposition}(Bounding $p$-series when $p >1$, \cite{tom99})
Given a $p$-series $s_n = \sum_{k =1}^{n}\frac{1}{k^p}$, where $n \in \mathbb{N}$. Then,
$$s_n < \zeta(p) < \frac{1}{p-1} + 1$$
for any $n$, where $\zeta(p) = \sum_{k =1}^{\infty}\frac{1}{k^p} $ is Euler–Riemann zeta function that always converges. For example, $\zeta(3/2) \approx 2.61$, $\zeta(2) = \frac{\pi^2}{6}$.
\label{lem:6}
\end{proposition}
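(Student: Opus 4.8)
The plan is to establish the two inequalities $s_n < \zeta(p)$ and $\zeta(p) < \frac{1}{p-1} + 1$ separately, each by an elementary comparison argument; convergence of $\zeta(p)$ for $p > 1$ will fall out of the second step.

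For the first inequality, I would write $\zeta(p) - s_n = \sum_{k \ge n+1} k^{-p}$, which is a sum of strictly positive terms containing at least the term $(n+1)^{-p} > 0$; hence $\zeta(p) - s_n > 0$, i.e. $s_n < \zeta(p)$. (The degenerate case $n = 0$, if admitted, gives $s_0 = 0 < \zeta(p)$ trivially.)

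For the second inequality the key tool is the integral test. Since $p > 1$, the map $x \mapsto x^{-p}$ is positive, continuous and strictly decreasing on $[1,\infty)$, so for every integer $k \ge 2$ and every $x \in [k-1,k]$ we have $x^{-p} \ge k^{-p}$, strictly on the open interval; integrating gives
\[
\frac{1}{k^p} < \int_{k-1}^{k} \frac{dx}{x^p}.
\]
Summing over $k = 2, 3, \dots$ and using $\int_1^{\infty} x^{-p}\,dx = \frac{1}{p-1} < \infty$ --- which simultaneously proves that $\zeta(p)$ converges, by comparison --- yields $\sum_{k \ge 2} k^{-p} < \frac{1}{p-1}$, and adding the $k = 1$ term gives $\zeta(p) = 1 + \sum_{k \ge 2} k^{-p} < 1 + \frac{1}{p-1}$, as claimed.

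There is essentially no real obstacle here; the only points to handle with a little care are (i) the passage from finite partial sums to the infinite series, justified because the partial sums of the nonnegative series $\sum k^{-p}$ are monotone increasing and bounded above, and (ii) keeping the inequality strict rather than merely $\le$, which holds because a strictly decreasing integrand makes $\int_{k-1}^{k} x^{-p}\,dx$ strictly larger than its right-endpoint value $k^{-p}$ on each nondegenerate interval. The numerical values $\zeta(2) = \pi^2/6$ and $\zeta(3/2) \approx 2.61$ are standard and can simply be quoted.
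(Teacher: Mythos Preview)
Your argument is correct and is the standard elementary proof of this bound. The paper does not actually supply a proof of this proposition; it is stated as a cited result from \cite{tom99} and used as a black box in the subsequent regret analysis, so there is nothing to compare against beyond noting that your integral-test derivation is exactly the classical justification one would expect behind such a citation.
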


\begin{proposition}(Bounding $p$-series when $p \le 1$, \cite{Chlebus2009})
Given a $p$-series $s_n = \sum_{k =1}^{n}\frac{1}{k^p}$, where $n \in \mathbb{N}$. Then,
\begin{itemize}
  \item if $p < 0$ then $1 + \frac{n^{1-p}-1}{1-p} < s_n < \frac{(n+1)^{1-p}-1}{1-p}$
  \item if $0 \le p < 1$ then $s_n < 1 + \frac{n^{1-p}-1}{1-p}$
  \item if $p = 1$ then $s_n < 1 + ln(n)$
\end{itemize}
\label{lem:5}
\end{proposition}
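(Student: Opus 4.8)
The plan is to establish all three cases by the integral comparison test, using only the elementary sandwich that for a monotone function $g$ on $[1,\infty)$ and an integer $k$, the value $g(k)$ lies between the integrals of $g$ over the two adjacent unit intervals $[k-1,k]$ and $[k,k+1]$. The point is that summing these per-term inequalities telescopes the integrals into a single integral over a contiguous range, which for $g(x)=x^{-p}=1/x^{p}$ evaluates in closed form to exactly the expressions claimed. The sign of $g'$ switches at $p=0$, so I would organize the proof around whether $g$ is non-increasing ($0\le p\le 1$) or strictly increasing ($p<0$), since the direction of the sandwich flips accordingly.

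First I would handle $0\le p<1$ and $p=1$ together, where $g$ is non-increasing. Here $g(k)\le \int_{k-1}^{k} g(x)\,dx$ for $k\ge 2$; summing from $k=2$ to $n$ and adding the isolated term $g(1)=1$ gives $s_n \le 1 + \int_{1}^{n} g(x)\,dx$. Evaluating the integral splits the two subcases: for $0\le p<1$ it equals $\frac{n^{1-p}-1}{1-p}$, yielding $s_n < 1+\frac{n^{1-p}-1}{1-p}$, and for $p=1$ it equals $\ln n$, yielding $s_n < 1+\ln n$. The strictness comes from the per-term inequality being strict for a strictly decreasing $g$ once $n\ge 2$, while for $n=1$ both sides reduce to $1$ and the stated bound still holds since $\ln 1=0$ and $\frac{n^{1-p}-1}{1-p}=0$.

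The case $p<0$ is the one needing the most care, because $g(x)=x^{-p}$ is strictly increasing and the sandwich reverses to $\int_{k-1}^{k} g(x)\,dx < g(k) < \int_{k}^{k+1} g(x)\,dx$. For the upper bound I would sum the right inequality over $k=1,\dots,n$, telescoping to $\int_{1}^{n+1} g(x)\,dx = \frac{(n+1)^{1-p}-1}{1-p}$, hence $s_n < \frac{(n+1)^{1-p}-1}{1-p}$. For the lower bound I would peel off $g(1)=1$ and sum the left inequality only over $k=2,\dots,n$, obtaining $\sum_{k=2}^{n} g(k) > \int_{1}^{n} g(x)\,dx = \frac{n^{1-p}-1}{1-p}$, and therefore $s_n > 1 + \frac{n^{1-p}-1}{1-p}$.

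The main obstacle is purely bookkeeping: ensuring that the index range of each summed inequality matches the limits of the telescoped integral, and isolating the $k=1$ term correctly so that the additive constant $1+$ in the lower bound (and in the non-increasing-case upper bound) emerges exactly rather than off by one term. Secondary care is needed that $\int x^{-p}\,dx = \frac{x^{1-p}}{1-p}$ is the valid antiderivative, which requires $p\ne 1$ (and is precisely why $p=1$ is treated via the logarithm), and that $1-p>0$ throughout $p<1$ so every closed-form expression is well-defined and has the correct sign.
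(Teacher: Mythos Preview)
The paper does not supply its own proof of this proposition: it is stated with a citation to \cite{Chlebus2009} and used as a black box in the subsequent regret computation. So there is no ``paper's proof'' to compare against; your integral-test argument is precisely the standard derivation of these bounds and is the natural way to establish them.

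Your organization is correct and the telescoping bookkeeping is handled properly. One small caveat: the strict inequalities in the statement fail at certain boundary points that your write-up glosses over. For $n=1$ all three bounds degenerate to equality (since $s_1=1$ and the right-hand sides all equal $1$), and for $p=0$ the upper bound in the middle bullet becomes $s_n=n=1+\tfrac{n-1}{1}$, again an equality. You note the $n=1$ case but then assert ``the stated bound still holds,'' which is not quite right for a strict $<$. This is really a defect of the proposition's phrasing (it should be $\le$, or should exclude these endpoints) rather than of your method; for the purposes the paper actually uses---bounding $\sum_{k=1}^t k^\alpha$ from below and $\sum_{t=1}^T t^{-q}$ from above in asymptotic regret calculations---the non-strict versions suffice and your argument delivers them cleanly.
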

\begin{proposition}[Bounding $R_T$]
Pick a $\delta \in (0,1)$. Set $\beta_t = 2log(\frac{\pi^2t^2}{\delta}) +  2dlog(2bd\sqrt{log(\frac{6Da}{\delta})}t^2)$ and $v_0 = b\sqrt{log(\frac{2Da}{\delta})}(\Gamma(D-d +1))^{\frac{1}{D-d}}$, where $\Gamma(D-d +1) = (D-d+1)!$. Then, the cumulative regret of the proposed MS-UCB algorithm is bounded by
\begin{itemize}
  \item if $\alpha \ge 2(D - d)$, $R_T \le \sqrt{\beta_TC_1T\gamma_T} + 2v_0(\frac{1}{N_0}log(\frac{6}{\delta}))^{\frac{1}{D-d}} + \frac{\pi^2}{6}$
  \item if $D-d-1 \le \alpha < 2(D-d)$, $R_T \le \sqrt{\beta_TC_1T\gamma_T} + 2v_0(\frac{1}{N_0} log(\frac{6}{\delta}))^{\frac{1}{D-d}}(1 + ln(T)) + \frac{\pi^2}{6}$
  \item if $0 \le \alpha < D-d -1$, $R_T \le \sqrt{\beta_TC_1T\gamma_T} + 2v_0\frac{D-d}{D-d-\alpha -1}(\frac{1}{N_0}log(\frac{6}{\delta}))^{\frac{1}{D-d}}T^{1 -\frac{\alpha +1}{D-d}} + \frac{\pi^2}{6}$
\end{itemize}
with probability greater than $1 -\delta$, where $C_1 = 8/log(1 + \sigma^2)$, $\gamma_T$ is the
maximum information gain about the function $f$ from any set of observations of size $T$, and $L$ is the Lipschitz constant from Assumption 1 in the main paper.
\end{proposition}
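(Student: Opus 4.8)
The plan is to start from the per-round bound on $r_t$ proved above, which (via the union bound over $t$ already built into the $\beta_t^0,\beta_t^1$ used there) holds simultaneously for all $t\ge 1$ with probability at least $1-\delta$,
\[
r_t \;\le\; 2\beta_t^{1/2}\sigma_{t-1}(\mathbf{x}_t) \;+\; \frac{1}{t^2} \;+\; v_0\Bigl(\tfrac{1}{|Z_t|}\log\tfrac{6}{\delta}\Bigr)^{\frac{1}{D-d}},
\]
and to bound $R_T=\sum_{t=1}^T r_t$ by summing the three pieces separately. The first piece is handled exactly as in the GP-UCB analysis of Srinivas et al.: $\beta_t$ is non-decreasing in $t$ (both summands increase in $t$), so $\sum_{t=1}^T 4\beta_t\sigma_{t-1}^2(\mathbf{x}_t)\le 4\beta_T\sum_{t=1}^T\sigma_{t-1}^2(\mathbf{x}_t)$; monotonicity of $s\mapsto s/\log(1+s)$ together with $\sigma_{t-1}^2\le 1$ and the information-gain inequality $\tfrac12\sum_t\log(1+\sigma^{-2}\sigma_{t-1}^2(\mathbf{x}_t))\le\gamma_T$ give $\sum_{t=1}^T\sigma_{t-1}^2(\mathbf{x}_t)\le \tfrac{2}{\log(1+\sigma^{-2})}\gamma_T$; and Cauchy--Schwarz yields $\sum_{t=1}^T 2\beta_t^{1/2}\sigma_{t-1}(\mathbf{x}_t)\le\sqrt{T\sum_{t=1}^T 4\beta_t\sigma_{t-1}^2(\mathbf{x}_t)}\le\sqrt{C_1\beta_T T\gamma_T}$ with $C_1$ the constant of the statement. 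The second piece is immediate from the $p$-series estimate for $p>1$ above: $\sum_{t=1}^T t^{-2}<\zeta(2)=\pi^2/6$, the constant appearing in all three cases.

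The real work is in the third piece, $v_0(\log\tfrac{6}{\delta})^{1/(D-d)}\sum_{t=1}^T|Z_t|^{-1/(D-d)}$, where I would split according to the three ranges of $\alpha$ and use one of two lower bounds on $|Z_t|=\sum_{j=1}^t N_0 j^{\alpha}$: the crude $|Z_t|\ge N_t=N_0 t^{\alpha}$, and the sharper $|Z_t|\ge\tfrac{N_0}{\alpha+1}t^{\alpha+1}$ (from the $p$-series estimates above applied with $p=-\alpha\le 0$, or a direct integral comparison). If $\alpha\ge 2(D-d)$, the crude bound plus $t^{\alpha}\ge t^{2(D-d)}$ gives $|Z_t|^{-1/(D-d)}\le N_0^{-1/(D-d)}t^{-2}$, so $\sum_t|Z_t|^{-1/(D-d)}\le N_0^{-1/(D-d)}\zeta(2)<2N_0^{-1/(D-d)}$, a $T$-independent bound. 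For the other two ranges I would use the sharper bound, which yields $\sum_t|Z_t|^{-1/(D-d)}\le(\tfrac{\alpha+1}{N_0})^{1/(D-d)}\sum_t t^{-(\alpha+1)/(D-d)}$; here $\alpha+1\le 2(D-d)\le 2^{D-d}$, so the prefactor is at most $2N_0^{-1/(D-d)}$. When $D-d-1\le\alpha<2(D-d)$ the exponent $p=(\alpha+1)/(D-d)$ satisfies $p\ge 1$, hence $t^{-p}\le t^{-1}$ and $\sum_t t^{-p}<1+\ln T$ by the $p=1$ estimate, which produces the $(1+\ln T)$ factor. When $0\le\alpha<D-d-1$ the exponent is $p<1$, and the $0\le p<1$ estimate gives $\sum_t t^{-p}<1+\tfrac{T^{1-p}-1}{1-p}\le\tfrac{1}{1-p}T^{1-p}$ with $\tfrac{1}{1-p}=\tfrac{D-d}{D-d-\alpha-1}$ and $1-p=1-\tfrac{\alpha+1}{D-d}$. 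Adding the first-piece bound, the $\pi^2/6$ from the second piece, and the relevant third-piece estimate yields precisely the three displayed inequalities.

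The \emph{main obstacle} is the bookkeeping in the third piece: one must choose the right lower bound on $|Z_t|$ in each regime (the crude $t^{\alpha}$ bound is needed when $\alpha$ is large so the constant stays bounded, while the sharper $t^{\alpha+1}$ bound is needed for moderate $\alpha$ so that the residual $p$-series has exponent $\ge 1$ and matches the claimed $T^{1-(\alpha+1)/(D-d)}$ rate), match each range of $\alpha$ to the corresponding $p$-series regime, and absorb the auxiliary constants ($\zeta(p)$ and $(\alpha+1)^{1/(D-d)}$) into the single factor $2$ multiplying $v_0$. A secondary point is that the first-piece argument genuinely relies on monotonicity of $\beta_t$ and that the per-round bound must be invoked in its ``holds for all $t$ simultaneously'' form so that term-by-term summation of $R_T$ is legitimate; everything else is routine.
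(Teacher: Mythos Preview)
Your proposal is correct and follows essentially the same route as the paper: sum the per-round bound, handle $\sum 2\beta_t^{1/2}\sigma_{t-1}(\mathbf{x}_t)$ via the Srinivas et al.\ Cauchy--Schwarz/information-gain argument, bound $\sum t^{-2}$ by $\pi^2/6$, and split the $|Z_t|^{-1/(D-d)}$ sum into the same three $\alpha$-regimes using the crude bound $|Z_t|\ge N_0t^{\alpha}$ for large $\alpha$ and the integral-comparison bound $|Z_t|\ge N_0 t^{\alpha+1}/(\alpha+1)$ otherwise. The only cosmetic differences are that the paper cites Srinivas et al.\ rather than spelling out the first-piece argument, and in the top regime it bounds $\sum t^{-\alpha/(D-d)}$ by $1+\tfrac{1}{p-1}\le 2$ rather than your $\zeta(2)<2$; the constants and case splits match.
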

\begin{proof}
By Proposition \ref{lem:5.8}, we have $R_T = \sum_{t=1}^{T} r_t = \sum_{t=1}^{T}2\beta_t^{1/2}\sigma_{t-1}(x_t) + \sum_{t=1}^{T} \frac{1}{t^2} + \sum_{1}^{T} n(\frac{1}{|Z_t|}log(\frac{1}{\delta}))^{\frac{1}{D-d}}$.

Similar to the proof of Theorem 2 in \cite{Srinivas12}, we have $\sum_{1}^{T}2\beta_t^{1/2}\sigma_{t-1}(x_t) \le \sqrt{C_1T\beta_T\gamma_T}$. $\sum_{1}^{T} \frac{1}{t^2} < \frac{\pi^2}{6}$. The remaining problem is that we need to bound $\sum_{1}^{T} v_0(\frac{1}{|Z_t|}log(\frac{1}{\delta}))^{\frac{1}{D-d}}$.

Since $N_t = N_0t^{\alpha}$, $|Z(t)| = \sum_{k=1}^{t} N_0k^{\alpha}$. Let $C = \sum_{1}^{T} n(\frac{1}{|Z_t|}log(\frac{1}{\delta}))^{\frac{1}{D-d}}$.

Since $|Z(t)| = \sum_{k=1}^{t} N_0k^{\alpha}$ and $\sum_{k=1}^{t} N_0k^{\alpha} > N_0 t^{\alpha}$, we have
\begin{eqnarray}
|Z(t)| \ge N_0 t^{\alpha} \label{eq:20}
\end{eqnarray}
On the other hand,  By Proposition \ref{lem:5} for the case where $p < 0$, we have $\sum_{k=1}^{t} k^{\alpha} > \frac{t^{\alpha +1} + \alpha}{1 + \alpha}$ as $\alpha \ge 0$. Then,
\begin{eqnarray}
|Z(t)| \ge N_0\frac{t^{\alpha +1} + \alpha}{1 + \alpha} \label{eq:21}
\end{eqnarray}
Now, we consider three cases:
\begin{itemize}
  \item $\alpha \ge 2(D-d)$. Using Eq(\ref{eq:20}), $|Z(t)| \ge N_0 t^{\alpha}$. Thus, $C < v_0(\frac{1}{N_0}
log(\frac{1}{\delta}))^{\frac{1}{D-d}} \sum_{t=1}^{T} \frac{1}{t^{\frac{\alpha}{D-d}}}$. By Proposition \ref{lem:6} for the case where $p > 1$, we have  $\sum_{1}^{T} \frac{1}{t^{\frac{\alpha}{D-d}}} < \frac{D-d}{\alpha -D +d} +1 \le 2$. Thus, $$C < 2v_0(\frac{1}{N_0}log(\frac{1}{\delta}))^{\frac{1}{D-d}}$$.

  \item $(D-d -1) \le \alpha < 2(D-d)$. Using Eq(\ref{eq:21}), $|Z(t)| \ge N_0\frac{t^{\alpha +1} + \alpha}{1 + \alpha}$.
  \begin{eqnarray*}
      C & = & \sum_{1}^{T} v_0(\frac{1}{|Z_t|}log(\frac{1}{\delta}))^{\frac{1}{D-d}} \\
        & = &  v_0(log(\frac{1}{\delta}))^{\frac{1}{D-d}} \sum_{1}^{T}(\frac{1}{|Z_t|})^{\frac{1}{D-d}} \\
        & < &  v_0(log(\frac{1}{\delta}))^{\frac{1}{D-d}} \sum_{1}^{T}(\frac{\alpha +1}{N_0 (t^{\alpha +1} + \alpha)})^{\frac{1}{D-d}}\\
        & < &  v_0(\frac{1}{N_0} log(\frac{1}{\delta}))^{\frac{1}{D-d}} (\alpha +1)^{\frac{1}{D-d}}\sum_{1}^{T}\frac{1}{t^{\frac{\alpha+1}{D-d}}}
      \end{eqnarray*}
      However, since $\alpha < 2(D-d)$, $(\alpha +1)^{\frac{1}{D-d}} < (2(D-d))^{\frac{1}{D-d}}$. $(2(D-d))^{\frac{1}{D-d}} \le 2$ by using the inequality $x \le 2^{x-1}$ with $x = D-d-1 \ge 0$. Thus, $(\alpha +1)^{\frac{1}{D-d}} < 2$. On the other hand, since $\alpha > D-d-1$
      $\sum_{1}^{T}\frac{1}{t^{\frac{\alpha+1}{D-d}}} < \sum_{1}^{T}\frac{1}{t}$. By Proposition \ref{lem:5} for the cases where $p = 1$, we have $\sum_{1}^{T} \frac{1}{t} < 1 + ln(T)$. Then,
      $$C < 2v_0(\frac{1}{N_0} log(\frac{1}{\delta}))^{\frac{1}{D-d}}(1 + ln(T))$$

  \item $0 \le \alpha < D-d-1$. Using Eq(\ref{eq:21}), $|Z(t)| \ge N_0\frac{t^{\alpha +1} + \alpha}{1 + \alpha}$. Similar to the above case, we have $C< v_0(\frac{1}{N_0} log(\frac{1}{\delta}))^{\frac{1}{D-d}} (\alpha +1)^{\frac{1}{D-d}}\sum_{1}^{T}\frac{1}{t^{\frac{\alpha+1}{D-d}}}$.   By Proposition \ref{lem:5} for the cases where $0 \le  p < 1$, we have   $\sum_{1}^{T} \frac{1}{t^{\frac{1 + \alpha}{D-d}}} < \frac{T^{1 -\frac{\alpha +1}{D-d}} -\frac{\alpha +1}{D-d}}{1-\frac{\alpha +1}{D-d}} < \frac{T^{1 -\frac{\alpha +1}{D-d}}}{1-\frac{\alpha +1}{D-d}}$. On the other hand, Since
  $\alpha < D-d-1 < 2(D-d)$, $(\alpha +1)^{\frac{1}{D-d}} < (2(D-d))^{\frac{1}{D-d}} \le 2$ as above case. Thus, $$C < 2v_0(\frac{1}{N_0}log(\frac{1}{\delta}))^{\frac{1}{D-d}}\frac{T^{1 -\frac{\alpha +1}{D-d}}}{1-\frac{\alpha +1}{D-d}}$$.
\end{itemize}
Thus, the proposition holds.
\end{proof}
Finally, we can obtain the main theorem in
\begin{theorem}[Bounding $R_T$ in $\mathcal{O}^*$]
Pick a $\delta \in (0,1)$. Then, the cumulative regret of the proposed MS-UCB algorithm is bounded by
\begin{itemize}
  \item if $\alpha \ge D-d-1$, $R_T \le \mathcal{O}^*(\sqrt{dT\gamma_T} + D)$.
  \item if $0 \le \alpha < D-d -1$, $R_T \le \mathcal{O}^*(\sqrt{dT\gamma_T} + \frac{D^2-Dd}{D-d-\alpha -1}T^{1 -\frac{\alpha +1}{D-d}})$.
\end{itemize}
with probability greater than $1 -\delta$.
\label{theorem:1}
\end{theorem}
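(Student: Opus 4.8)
The plan is to derive this theorem directly from the non-asymptotic bound on $R_T$ in the preceding Proposition (Bounding $R_T$), by collapsing all polylogarithmic and constant factors into $\mathcal{O}^*$ and by merging its first two cases into the single regime $\alpha \ge D-d-1$.

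First I would isolate the three quantities that control the growth of the leading terms. The exploration term obeys $\sqrt{\beta_T C_1 T\gamma_T} = \mathcal{O}^*(\sqrt{dT\gamma_T})$: since $\beta_T = 2\log(\pi^2T^2/\delta) + 2d\log(2bd\sqrt{\log(6Da/\delta)}\,T^2)$ grows only as $d$ times factors polylogarithmic in $d,D,T$, and $C_1 = 8/\log(1+\sigma^2)$ is a constant, the $\log D$, $\log T$ and $\log d$ are all suppressed while the $\sqrt{d}$ survives. For the subspace-approximation constant $v_0 = b\sqrt{\log(2Da/\delta)}\,(\Gamma(D-d+1))^{1/(D-d)}$, I would use $\Gamma(D-d+1) = (D-d)! \le (D-d)^{D-d}$, so $(\Gamma(D-d+1))^{1/(D-d)} \le D-d \le D$ (Stirling gives the sharper $\Theta((D-d)/e)$, but the crude bound suffices), whence $v_0 = \mathcal{O}^*(D)$. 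Finally, because $N_0 \ge 1$ and $D-d \ge 1$, the factor $(\tfrac{1}{N_0}\log(6/\delta))^{1/(D-d)}$ is bounded by an absolute constant (it tends to $1$ as $D-d \to \infty$), and the residual $\pi^2/6$ is constant, so both contribute only $\mathcal{O}(1)$.

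Then I would substitute these estimates into the three cases of the Proposition. For $\alpha \ge 2(D-d)$ the non-leading part is $2v_0(\tfrac{1}{N_0}\log(6/\delta))^{1/(D-d)} + \pi^2/6 = \mathcal{O}^*(D)$, giving $R_T \le \mathcal{O}^*(\sqrt{dT\gamma_T}+D)$. For $D-d-1 \le \alpha < 2(D-d)$ the non-leading part picks up an extra factor $1+\ln T$, which is purely logarithmic in $T$ and hence absorbed, so again $R_T \le \mathcal{O}^*(\sqrt{dT\gamma_T}+D)$; together these subcases are exactly $\alpha \ge D-d-1$, yielding the first bullet. For $0 \le \alpha < D-d-1$ the non-leading part is $2v_0\,\tfrac{D-d}{D-d-\alpha-1}(\tfrac{1}{N_0}\log(6/\delta))^{1/(D-d)}\,T^{1-(\alpha+1)/(D-d)} + \pi^2/6$, and since $v_0 = \mathcal{O}^*(D)$ gives $2v_0\,\tfrac{D-d}{D-d-\alpha-1} = \mathcal{O}^*\big(\tfrac{D(D-d)}{D-d-\alpha-1}\big) = \mathcal{O}^*\big(\tfrac{D^2-Dd}{D-d-\alpha-1}\big)$, this term equals $\mathcal{O}^*\big(\tfrac{D^2-Dd}{D-d-\alpha-1}\,T^{1-(\alpha+1)/(D-d)}\big)$, which is the second bullet. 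The only genuinely technical point here is the estimate $(\Gamma(D-d+1))^{1/(D-d)} = \mathcal{O}(D)$ together with careful bookkeeping of which factors — the $\log D$ inside $\beta_T$ and $v_0$, and the $\ln T$ in the middle regime — are legitimately hidden by $\mathcal{O}^*$; everything else is direct substitution into the Proposition.
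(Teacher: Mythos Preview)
Your proposal is correct and follows essentially the same route as the paper: both derive the $\mathcal{O}^*$ statement directly from the preceding Proposition by showing $\beta_T = \mathcal{O}^*(d)$, bounding $(\Gamma(D-d+1))^{1/(D-d)} \le D$ via the elementary factorial estimate, absorbing the $(\tfrac{1}{N_0}\log(6/\delta))^{1/(D-d)}$ and the $1+\ln T$ factors into $\mathcal{O}^*$, and then merging the two sub-regimes $\alpha \ge 2(D-d)$ and $D-d-1 \le \alpha < 2(D-d)$ into the single case $\alpha \ge D-d-1$. The only cosmetic difference is that the paper bounds $(\tfrac{1}{N_0}\log(6/\delta))^{1/(D-d)} \le \log(6/\delta)$ rather than calling it an absolute constant, but since $\delta$ is fixed this is immaterial under $\mathcal{O}^*$.
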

\begin{proof}
Since $\beta _T = 2log(\frac{\pi^2T^2}{\delta}) + 2dlog(4bd\sqrt{log(\frac{12Da}{\delta})}T^2)$, $\mathcal{O}^*(\beta_T) = \mathcal{O}^*(d)$. Since $N_0 \ge 1$ and $\delta$ is small, $(\frac{1}{N_0}log(\frac{6}{\delta}))^{\frac{1}{D-d}} \le log(\frac{6}{\delta})$. For $n$, $\Gamma(D-d +1) < (D-d+1)^{D-d}$. Thus, $(\Gamma(D-d +1))^{\frac{1}{D-d}} < D-d +1 \le D$. Thus by Theorem \ref{theorem:1}, for both where $\alpha \ge 2(D-d)$ and $D-d-1 \le \alpha < 2(D-d)$, $R_T \le \mathcal{O}^*(\sqrt{dT\gamma_T} + D)$. For the case where $0 \le \alpha < D-d -1$, $R_T \le \mathcal{O}^*(\sqrt{dT\gamma_T} + \frac{D^2-Dd}{D-d-\alpha -1}T^{1 -\frac{\alpha +1}{D-d}})$.
\end{proof}
\section{Proof for Theorem 2}
\begin{theorem}[For functions having an effective subspace]
Pick a $\delta \in (0,1)$. If there exists a linear subspace $\mathcal T \subset \mathbb{R}^D$ with $d_e$ dimensions such that for all $x\in \mathbb{R}^D$, $f(x)= f(x_{\top}) $ where $x_{\top} \in \mathcal T$ is the orthogonal projection of $x$ onto $\mathcal T$, then
\begin{itemize}
  \item if $\alpha \ge d_e - 1$, $R_T \le \mathcal{O}^*(\sqrt{dT\gamma_T} + d_e)$.
  \item if $0 \le \alpha < d_e -1$, $R_T \le \mathcal{O}^*(\sqrt{dT\gamma_T} + \frac{d_e^2}{d_e-\alpha -1}T^{1 -\frac{\alpha +1}{d_e}})$.
\end{itemize}
with probability greater than $1 -\delta$.
\end{theorem}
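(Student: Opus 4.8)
The plan is to run the same regret decomposition $r_t = \underbrace{(f(\mathbf{x}^*) - f^{max}_{\mathcal S_t^0})}_{\text{Term 1}} + \underbrace{f^{max}_{\mathcal S_t^0}}_{\text{Term 2}} - \underbrace{f(\mathbf{x}_t)}_{\text{Term 3}}$ used for Theorem~\ref{theorem:1}, and to observe that only Term~1 is sensitive to the size of the $\mathbf{z}$-space. Lemma~\ref{lem:5.4} (Term~3) does not involve $D$ or $d$ at all, and Lemma~\ref{lem:5.3} (Term~2) only discretizes the $d$-dimensional subspace $\mathcal S(\mathbf{A},\mathbf{z}^*_t)$ and uses the GP confidence bound, with $D$ appearing only as a harmless $\log(2Da/\delta)$ inside $\beta_t^1$; hence both continue to hold verbatim and together still contribute the $\mathcal{O}^*(\sqrt{dT\gamma_T})$ piece. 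So the only thing to redo is Lemma~\ref{lem:5.1}, with the exponent $\tfrac{1}{D-d}$ improved to $\tfrac{1}{d_e}$. Note first that if $d_e \ge D-d$ there is nothing to prove, since Theorem~\ref{theorem:1} already gives the stated bound with $D-d$ in place of $d_e$ and that is at least as strong; so I assume $d_e < D-d$.

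The key step is the effective-subspace version of Lemma~\ref{lem:5.1}. Write $P$ for the orthogonal projection onto $\mathcal T$. For $\mathbf{x} = \mathbf{A}\mathbf{y}^* + \mathbf{B}\mathbf{z}$ with $\mathbf{z}\in\mathcal Z_t$ we have $\mathbf{x}^* - \mathbf{x} = \mathbf{B}(\mathbf{z}^*-\mathbf{z})$, so using $f(\cdot)=f((\cdot)_\top)$ and Assumption~1 the Lipschitz estimate from the proof of Lemma~\ref{lem:5.1} sharpens to
\[ |f(\mathbf{x}^*)-f(\mathbf{x})| = |f(\mathbf{x}^*_\top)-f(\mathbf{x}_\top)| \le b\sqrt{\log(2Da/\delta)}\,\|P\mathbf{B}(\mathbf{z}^*-\mathbf{z})\|_1 \]
with probability $\ge 1-\delta/2$. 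Since $P\mathbf{B}$ maps $\mathcal Z = [-1,1]^{D-d}$ into the $d_e$-dimensional space $\mathcal T$, the relevant event is that a uniformly random $\mathbf{z}$ satisfies $\|P\mathbf{B}(\mathbf{z}^*-\mathbf{z})\|_1 \le \theta$. Mimicking the volume computation in the proof of Lemma~\ref{lem:5.1}, I would lower-bound the probability of this event by a quantity of the form $c\,\theta^{r}/\Gamma(r+1)$ with $r=\mathrm{rank}(P\mathbf{B}) \le d_e$, combining the Euler $L^1$-ball volume formula of \cite{Xian05} in the $r$-dimensional image with a change-of-variables bound for the push-forward of the uniform measure on the cube; since $r \le d_e$ and $1/|Z_t| < 1$, replacing $r$ by $d_e$ only weakens the estimate, giving $f(\mathbf{x}^*) - f^{max}_{\mathcal S_t^0} \le v_e\big(\tfrac{1}{|Z_t|}\log\tfrac{2}{\delta}\big)^{1/d_e}$ with probability $\ge 1-\delta$, where $v_e = \mathcal{O}^*(d_e)$ using $(\Gamma(d_e+1))^{1/d_e} < d_e+1$.

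Finally I would combine this new Term-1 bound with the unchanged Term-2 and Term-3 bounds exactly as in Proposition~\ref{lem:5.8}, and then sum over $t=1,\dots,T$ as in the proof of the $R_T$ proposition: $\sum_t 2\beta_t^{1/2}\sigma_{t-1}(\mathbf{x}_t) \le \sqrt{C_1 T\beta_T\gamma_T} = \mathcal{O}^*(\sqrt{dT\gamma_T})$ (recall $\beta_T=\mathcal{O}^*(d)$), $\sum_t t^{-2} = O(1)$, and $\sum_t v_e(|Z_t|^{-1}\log\tfrac{2}{\delta})^{1/d_e}$ is controlled by Propositions~\ref{lem:6}--\ref{lem:5} with $D-d$ replaced by $d_e$, splitting on $\alpha \ge 2d_e$, $d_e-1\le\alpha<2d_e$, and $0\le\alpha<d_e-1$. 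Converting to $\mathcal{O}^*$ and merging the first two cases into $\alpha \ge d_e-1$ yields precisely the two bullets claimed.

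The main obstacle is the geometric estimate inside the modified Lemma~\ref{lem:5.1}: controlling the push-forward, under the in-general non-axis-aligned and possibly rank-deficient linear map $P\mathbf{B}$, of the uniform distribution on $[-1,1]^{D-d}$, i.e.\ showing that small $L^1$-balls around $P\mathbf{B}\mathbf{z}^*$ in $\mathcal T$ still carry probability $\Omega(\theta^{d_e})$ (one also has to be mildly careful that the Lipschitz bound from Assumption~1 still applies to $f$ restricted to $\mathcal T$). Everything else is a routine transcription of the proof of Theorem~\ref{theorem:1} under the substitution $D-d \mapsto d_e$.
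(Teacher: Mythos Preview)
Your proposal is correct and follows the same route as the paper: leave Lemmas~\ref{lem:5.4} and~\ref{lem:5.3} unchanged, redo Lemma~\ref{lem:5.1} so the exponent becomes $1/d_e$ in place of $1/(D-d)$, and then repeat the summation and case split of Theorem~\ref{theorem:1} under the substitution $D-d\mapsto d_e$. The paper's own handling of the modified Term-1 bound is in fact less careful than yours---it simply asserts (invoking Proposition~1 of \cite{Johannes19}) that one may ``restrict the variation to the $d_e$ active dimensions'' and compute volumes there, obtaining $\mathbb{P}[\|x-x^*\|_1\le\theta]\ge\theta^{d_e}/\Gamma(d_e+1)$ directly, without explicitly working through the push-forward under $P\mathbf{B}$ that you correctly flag as the main obstacle.
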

\begin{proof}
To bound $r_t$, we still use same results from Lemmas 2 and 3 and a modification of Lemma 4 as follows.

Given any $x \in \mathcal S_t^0$, by Assumption 1 and the union bound, we have that for $\forall \mathbf{x}$,
$$|f(x^*) - f(x)| \le L||x^* -x||_1$$
with probability greater than $1 - Dae^{-L^2/b^2}$. Set $Dae^{-L^2/b^2} = \delta/2$. Thus,
\begin{eqnarray}
|f(x^*) - f(x)| \le b\sqrt{log(\frac{2Da}{\delta})}||x^* -x||_1
\label{eq:105}
\end{eqnarray}
with probability greater than $1 - \delta/2$.

We denote the volume of space $\mathcal X$ be $Vol(\mathcal X)$. By the argument similar to proof of Proportion 1 in \cite{Johannes19} that we restrict the variation of the function only in $d_e$ active dimensions, thus we only care about $Vol(\mathcal X) = 2^{d_e}$ as $[x]_i \in [-1,1]$ for every $1 \le i \le D$. Note that we consider the volume of space $\mathcal X$ instead of space $\mathcal Z$ as in the proof of Lemma 4. For any $\theta > 0$, let $V_1 = \{x \in \mathcal X = [-1, 1]^{D}| ||x-x^*||_1 \le \theta\}$. We approximate the volume of $V_1$ to the volume of the active subspace. When $V_1$ is within $\mathcal X$, the volume $Vol(V_1)$ is the volume of a $d_e$-dimensional ball of radius $\theta$ with center $x^*$ and thus, $Vol(V_1) = \frac{(4\theta)^{d_e}}{\Gamma(d_e + 1)}$ where $\Gamma$ is Leonhard Euler's Gamma function. The function $\Gamma(k) = (k-1)!$. In the worst case where $x^*$ is at the boundary of $\mathcal X$, $Vol(V_1)$ is at least $\frac{(2\theta)^{d_e}}{\Gamma(d_e +1)}$. This is because the volume halved by each active dimension. Thus, in every case, we have $Vol(V_1) \ge \frac{(2\theta)^{d_e}}{\Gamma(d_e +1)}$.

Thus, $\mathbb{P}[||x- x^*||_1 \le \theta] = \frac{Vol(V_1)}{Vol(\mathcal X)} \ge \frac{\frac{(2\theta)^{d_e}}{\Gamma(d_e +1)}}{2^{d_e}} = \frac{\theta^{d_e}}{\Gamma(d_e +1)}$. However, $\mathbb{P}[||x- x^*||_1 > \theta] = 1 - \mathbb{P}[||x- x^*||_1 \le \theta]$. Thus, $\mathbb{P}[||x- x^*||_1 > \theta] \le 1 - \frac{\theta^{d_e}}{\Gamma(d_e +1)} \le e^{- \frac{\theta^{d_e}}{\Gamma(d_e +1)}}$, where we use the inequality $1-x \le e^{-x}$ for the last step.

Thus,
\begin{eqnarray*}
\prod_{x \in \mathcal S_t^0} \mathbb{P}[||x- x^*||_1 > \theta] & \le & \prod_{x \in \mathcal S_t^0} e^{- \frac{\theta^{d_e}}{\Gamma(d_e +1)}}\\
& = & e^{-|Z_t|\frac{\theta^{d_e}}{\Gamma(d_e +1)}}
\end{eqnarray*}
On the other hand, let $x_t^* = Ay^* + Bz_t^*$. We have $ \mathbb{P}[||x_t^*- x^*||_1 \le \theta] = 1 - \mathbb{P}[||x_t^*- x^*||_1 > \theta]   = 1-  \mathbb{P}_{x^i_t \in \mathcal S_t^0, 1 \le i \le |Z_t|}[||x_t^1- x^*||_1 > \theta \wedge...\wedge  ||x_t^{|Z_t|}- x^*||_1 > \theta]  = 1- \prod_{x^i_t \in \mathcal S_t^0, 1 \le i \le |Z_t|} \mathbb{P}[||x_t^i- x^*||_1 > \theta]$. This is because $x^i_{t}$ are independent due to $z^i_t$ sampled uniformly at random in $\mathcal Z$. Thus, we have
\begin{eqnarray*}
\mathbb{P}[||x_t^*- x^*||_1  \le \theta] \ge 1 - e^{-|Z_t|\frac{\theta^{d_e}}{\Gamma(d_e+1)}}
\end{eqnarray*}
Now we set $e^{-|Z_t|\frac{\theta^{d_e}}{\Gamma(d_e +1)}}= \delta/2$ to obtain $\theta = (\Gamma(D-d +1))^{\frac{1}{d_e}}(\frac{1}{|Z_t|}log(\frac{2}{\delta}))^{\frac{1}{d_e}}$. Thus,
\begin{eqnarray}
||x_t^*- x^*||_1  \le (\Gamma(d_e +1))^{\frac{1}{d_e}}(\frac{1}{|Z_t|}log(\frac{2}{\delta}))^{\frac{1}{d_e}}
\label{eq:106}
\end{eqnarray}
with probability $1 - \delta/2$. Combining Eq(\ref{eq:105}) and Eq(\ref{eq:106}), we have
\begin{eqnarray}
|f(x^*) - f(x)| \le v_d(\frac{1}{|Z_t|}log(\frac{2}{\delta}))^{\frac{1}{d_e}}
\end{eqnarray}
with probability $1 - \delta$, where $v_d = b\sqrt{log(\frac{2Da}{\delta})}(\Gamma(d_e +1))^{\frac{1}{d_e}}$, where $\Gamma(d_e +1) = (d_e+1)!$.

The remaining proof is similar to Theorem \ref{theorem:1}. Thus, the result of Theorem \ref{theorem:1} is applied for this case of the effective dimension by replacing $D-d$ dimensions down $d_e$ dimensions.
\end{proof}
\section{Proof for Theorem 3}
\begin{theorem}[For $d =0$]
Pick a $\delta \in (0,1)$. Then, the cumulative regret of the proposed MS-UCB algorithm is bounded by
\begin{itemize}
  \item if $\alpha \ge D-1$, $R_T \le \mathcal{O}^*(\sqrt{T\gamma_T} + D)$.
  \item if $0 \le \alpha < D-1$, $R_T \le \mathcal{O}^*(\sqrt{dT\gamma_T} + \frac{D^2}{D-\alpha -1}T^{1 -\frac{\alpha +1}{D}})$.
\end{itemize}
with probability greater than $1 -\delta$.
\label{theorem:3}
\end{theorem}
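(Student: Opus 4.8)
The plan is to re-run the proof of Theorem~\ref{theorem:1} with $d=0$, tracking where the construction degenerates. When $d=0$ the space $\mathcal{Y}$ collapses to a point, so each embedding subspace $\mathcal{S}(\mathbf{A},\mathbf{z})=\{\mathbf{z}\}$ is a single point, $\mathcal{X}_t=Z_t$, $\mathbf{x}^*=\mathbf{z}^*$, and $f^{max}_{\mathcal{S}_t^0}=f(\mathbf{z}_t^*)$ with $\mathbf{z}_t^*=\text{argmin}_{\mathbf{z}\in Z_t}\|\mathbf{z}-\mathbf{z}^*\|_1$. The decomposition $r_t=(f(\mathbf{x}^*)-f^{max}_{\mathcal{S}_t^0})+f^{max}_{\mathcal{S}_t^0}-f(\mathbf{x}_t)$ is unchanged, and I would bound the three terms separately as in Lemmas~\ref{lem:5.1}, \ref{lem:5.3}, \ref{lem:5.4}, then sum over $t$ exactly as in the Proposition bounding $R_T$.

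First I would dispatch Term~3: Lemma~\ref{lem:5.4} applies unchanged, since $\mathbf{x}_t$ is $\mathcal{D}_{1:t-1}$-measurable, giving $f(\mathbf{x}_t)\ge\mu_{t-1}(\mathbf{x}_t)-\sqrt{\beta_t^0}\,\sigma_{t-1}(\mathbf{x}_t)$ with $\beta_t^0=2\log(\pi^2t^2/(6\delta))$. For Term~2 the within-subspace discretization $F_t$ of Lemma~\ref{lem:5.3} becomes trivial: the ``subspace'' $\mathcal{S}(\mathbf{A},\mathbf{z}_t^*)$ is already a single point, so there is no discretization error (hence no $1/t^2$ slack) and the $2d\log(\cdot)$ term in $\beta_t^1$ vanishes. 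The one delicate point is that $\mathbf{z}_t^*$ depends on $\mathbf{z}^*$, hence on $f$, so one cannot apply the pointwise Gaussian confidence bound directly at $\mathbf{z}_t^*$; instead I would union-bound the upper confidence bound over the \emph{finite} candidate set $Z_t$ (which now plays the role of $F_t$), using $|Z_t|\le N_0(t+1)^{\alpha+1}/(\alpha+1)$. Because this bound on $|Z_t|$ is polynomial in $t$ and free of $D$, the resulting $\beta_t$ carries only an extra $\log|Z_t|=\mathcal{O}(\log t)$ term and no dependence on $D$ or $d$ — i.e. $\beta_t=\mathcal{O}^*(1)$, consistent with the modified value $\beta_t=4\log(\pi^2t^2/(2\delta))$ quoted in the Discussion. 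We then get $f(\mathbf{z}_t^*)\le a_t(\mathbf{z}_t^*)\le a_t(\mathbf{x}_t)$ since $\mathbf{z}_t^*\in Z_t=\mathcal{X}_t$ and $\mathbf{x}_t$ maximises $a_t$ over $\mathcal{X}_t$.

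For Term~1 I would copy the proof of Lemma~\ref{lem:5.1} with $D-d$ replaced everywhere by $D$: by Assumption~\ref{as:1} and a union bound, $f(\mathbf{x}^*)-f(\mathbf{z}_t^*)\le b\sqrt{\log(2Da/\delta)}\,\|\mathbf{z}^*-\mathbf{z}_t^*\|_1$, and the volume argument — comparing the volume of an $L^1$-ball of radius $\theta$ (via the formula of \cite{Xian05}) with $\mathrm{Vol}([-1,1]^D)=2^D$, using independence of the uniformly sampled $\mathbf{z}$'s and $1-x\le e^{-x}$ — gives $\|\mathbf{z}_t^*-\mathbf{z}^*\|_1\le(\Gamma(D+1))^{1/D}(|Z_t|^{-1}\log(2/\delta))^{1/D}$ with high probability; hence Term~1 $\le v_0'(|Z_t|^{-1}\log(6/\delta))^{1/D}$ with $v_0'=b\sqrt{\log(2Da/\delta)}(\Gamma(D+1))^{1/D}=\mathcal{O}^*(D)$. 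Combining the three bounds at level $1-\delta/3$ each yields $r_t\le 2\sqrt{\beta_t}\,\sigma_{t-1}(\mathbf{x}_t)+v_0'(|Z_t|^{-1}\log(6/\delta))^{1/D}$ with probability $\ge1-\delta$. Summing: the first sum is $\le\sqrt{C_1T\beta_T\gamma_T}=\mathcal{O}^*(\sqrt{T\gamma_T})$ by Cauchy--Schwarz together with the maximum-information-gain bound $\sum_t\sigma_{t-1}^2(\mathbf{x}_t)=\mathcal{O}(\gamma_T)$, exactly as in Theorem~2 of \cite{Srinivas12}; the second sum is, up to the $\mathcal{O}(1)$ constant $(\alpha+1)^{1/D}$ and using $|Z_t|\ge N_0(t^{\alpha+1}+\alpha)/(\alpha+1)$, a $p$-series with $p=(\alpha+1)/D$. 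For $\alpha\ge D-1$ we have $p\ge1$, so Propositions~\ref{lem:6} and~\ref{lem:5} bound it by $\mathcal{O}(\log T)$ (absorbed into $\mathcal{O}^*$), leaving $R_T\le\mathcal{O}^*(\sqrt{T\gamma_T}+D)$; for $0\le\alpha<D-1$ we have $p<1$, so the sum is $\mathcal{O}(\frac{D}{D-\alpha-1}T^{1-(\alpha+1)/D})$, giving $R_T\le\mathcal{O}^*(\sqrt{T\gamma_T}+\frac{D^2}{D-\alpha-1}T^{1-(\alpha+1)/D})$.

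The step that needs genuine care, rather than a mechanical substitution $d\mapsto0$ and $D-d\mapsto D$, is Term~2: correctly handling that the comparison point $\mathbf{z}_t^*$ is $f$-dependent (through $\mathbf{z}^*$), which forces the union bound over all of $Z_t$, and checking that the attendant $\log|Z_t|$ contribution to $\beta_t$ is genuinely free of $D$ and only polylogarithmic in $t$, so that it cannot re-inflate the leading $\mathcal{O}^*(\sqrt{T\gamma_T})$ term into $\mathcal{O}^*(\sqrt{D T\gamma_T})$.
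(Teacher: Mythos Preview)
Your overall plan --- re-run the Theorem~\ref{theorem:1} argument with $d=0$, note that each subspace degenerates to a single point so $\mathcal{X}_t=Z_t$ and $f^{\max}_{\mathcal{S}_t^0}=f(\mathbf{z}_t^*)$, and replace $D-d$ by $D$ in Lemma~\ref{lem:5.1} and in the $p$-series case split --- is exactly the paper's. Terms~1 and~3 and the final summation are handled identically.

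The one substantive difference is Term~2. The paper does \emph{not} union-bound over $Z_t$: it simply observes that since $\mathcal{S}(\mathbf{A},\mathbf{z}_t^*)$ is a single point the discretization step of Lemma~\ref{lem:5.3} disappears, and it applies a Lemma~5.5-of-\cite{Srinivas12} pointwise confidence bound directly at $\mathbf{z}_t^*$, giving $\beta_t^1=2\log(\pi^2 t^2/(6\delta))$ with no $\log|Z_t|$ contribution --- that is where the quoted $\beta_t=4\log(\pi^2 t^2/(2\delta))$ comes from. Your worry that $\mathbf{z}_t^*$ is $f$-dependent through $\mathbf{z}^*$ is legitimate (and in fact the same issue is already latent in the paper's Lemma~\ref{lem:5.3}, where the discretization $F_t$ lives on the $f$-dependent subspace $\mathcal{S}(\mathbf{A},\mathbf{z}_t^*)$). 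However, your proposed fix does not actually recover the stated bound: since $|Z_t|=\Theta(t^{\alpha+1})$, one has $\log|Z_t|=\Theta((\alpha{+}1)\log t)$, which is \emph{not} $\mathcal{O}(\log t)$ uniformly in $\alpha$. In the regime $\alpha\ge D-1$ of the first bullet this is at least $D\log t$, so $\beta_T=\Omega(D\log T)$ and the confidence-term contribution becomes $\mathcal{O}^*(\sqrt{DT\gamma_T})$ rather than $\mathcal{O}^*(\sqrt{T\gamma_T})$. Thus your union-bound route, as written, cannot deliver the first bullet; the paper obtains it only by taking the single-point shortcut at $\mathbf{z}_t^*$.
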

\begin{proof}
  The proof is very similar to our case where $1 \le d < D$ except that to bound $f_{\mathcal S_t^0}^{max}$, we need to modify Lemma 3 in the main paper. It restates that
  Given a $\delta \in (0,1)$ and set $\beta_t^1 = 2log(\frac{\pi^2t^2}{6\delta})$. Then we have
\begin{eqnarray}
f_{\mathcal S_t^0}^{max} \le \mu_{t-1}(z_t^*) + \sqrt{\beta_t^1} \sigma_{t-1}(z_t^*)
\end{eqnarray}
holds with probability $\ge 1 - \delta$. Recall that $\mathcal S_t^0 = \{z_i\}_{z_i \in Z_t}$ in this case. This is because the subspace $\mathcal S(A, z_t^*)$ is degenerated to 0. Thus, to bound $f_{\mathcal S_t^0}^{max}$, we just need a result similar to Lemma 5.5 of \cite{Srinivas12}.
Finally, the result of Theorem \ref{theorem:1} is applied to this case $d =0$ by replacing $D-d$ by $D$ and parameter $\beta_t$ by $4log(\frac{\pi^2t^2}{2\delta})$.
\end{proof}
\end{document}